\newtheorem{theorem}{Theorem}[section]
\newtheorem{lemma}[theorem]{Lemma}
\newtheorem{fact}{Fact}[section]
\newcommand{\argmin}{\mathop{\rm arg~min}\limits}
\newcommand{\bs}[1]{\boldsymbol{#1}}  
\newcommand{\mc}[1]{\mathcal{#1}} 
\newcommand{\lipconst}{\kappa_{r}}
\title{Multi-Play Combinatorial Semi-Bandit Problem}
\author{%
  Shintaro Nakamura  \\
  University of Tokyo \\
  Tokyo, Japan \\
  \texttt{nakamurashintaro@g.ecc.u-tokyo.ac.jp} \\
  \And
  Yuko Kuroki \\
  CENTAI Institute, Turin, Italy \\
  Turin, Italy \\
  \texttt{yuko.kuroki@centai.eu} \\
  \AND
  Wei Chen \\
  Microsoft Research \\
  Beijing, China \\
  \texttt{weic@microsoft.com} \\
}
\begin{document}

\maketitle

\begin{abstract}
    In the combinatorial semi-bandit (CSB) problem, a player selects an action from a combinatorial action set and observes feedback from the base arms included in the action. While CSB is widely applicable to combinatorial optimization problems, its restriction to binary decision spaces excludes important cases involving non-negative integer flows or allocations, such as the optimal transport and knapsack problems.
    To overcome this limitation, we propose the multi-play combinatorial semi-bandit (MP-CSB), where a player can select a non-negative integer action and observe multiple feedbacks from a single arm in each round. We propose two algorithms for the MP-CSB. 
    One is a Thompson-sampling-based algorithm that is computationally feasible even when the action space is exponentially large with respect to the number of arms, and attains $O(\log T)$ distribution-dependent regret in the stochastic regime, where $T$ is the time horizon. 
    The other is a best-of-both-worlds algorithm, which achieves $O(\log T)$ variance-dependent regret in the stochastic regime and the worst-case $\tilde{\mathcal{O}}\left( \sqrt{T} \right)$ regret in the adversarial regime. 
    Moreover, its regret in adversarial one is data-dependent, adapting to the cumulative loss of the optimal action, the total quadratic variation, and the path-length of the loss sequence.
    Finally, we numerically show that the proposed algorithms outperform existing methods in the CSB literature.
\end{abstract}

\section{Introduction}
The multi-armed bandit (MAB) problem is a fundamental framework to investigate online decision-making problems, where we study the tradeoff between \emph{exploitation} and \emph{exploration} problem \citep{Auer2002,AudibertCoLT2009}. 
One of the most important subfields of MAB is the combinatorial bandit problem \citep{AudibertMathOR2014,CesaBianchiJCSS2012,CombesNeurIPS2015,SiweiWangICML2018,KventonAISTATS2015,WeiChenJMLR2016,QinshiWangNeurIPS2017,WeiChenNeurIPS2016,ZhengWenICML2014}, which has many practical applications such as the shortest path problem \citep{SniedovichControlCybernetics2006}, crowdsourcing \citep{ULHASSANESA2016}, matching \citep{GibbonsCambridgeUnivPress1985}, the spanning tree problem \citep{PettieJACM2002}, recommender systems \citep{QinSDM2014}, and learning spectrum allocations \citep{GaiIEEETN2012}. 
In the combinatorial semi-bandit (CSB) problem, a player sequentially interacts with an unknown environment over $T$ rounds. At each round, the player selects a combinatorial action to play and observes the losses for each selected component.
The goal is to minimize (expected) cumulative regret, defined as the difference between the loss of the player’s actions and the loss of the optimal action.
\par
The CSB problem has been widely applied to modeling combinatorial optimization problems under uncertainty.
However, its restriction to \emph{binary} decision spaces limits its applicability to problems involving integer flows or allocations, such as the knapsack problem \citep{DantzigPPG2007}, the optimal transport (OT) problem \citep{VillaniSpringer2008}, and numerous others.
These problems require a more general action space where decisions can take \emph{non-negative integer values}. In these problems, each element of an action often represents some quantity. For example, in the OT problem, each element of an action can be seen as the number of trucks used to transport goods. In real-world applications, it is reasonable to assume that each truck is equipped with sensors and that feedback can be obtained from each truck. \par

In this paper, we propose a novel framework, multi-play CSB (MP-CSB), where the set of actions is a subset of the non-integer vector space, and multiple losses can be observed from a single arm. MP-CSB can be seen as a natural generalization of the ordinary CSB problem. \par

We introduce two algorithms for MP-CSB. 
First, we introduce the Generalized Combinatorial Thompson Sampling (GenCTS) algorithm that is computationally feasible even when the action set is exponentially large with respect to the number of arms. Even though this algorithm is a naive expansion of the CTS algorithm proposed by \citet{SiweiWangICML2018}, our regret analysis shows that the GenCTS algorithm achieves $\mathcal{O}\left(\log T\right)$ distribution-dependent regret in the stochastic regime. 
Then, we introduce a \emph{best-of-both-worlds} (BOBW) algorithm named GenLBINFV (Generalized Logarithmic Barrier Implicit Normalized Forecaster considering Variances for semi-bandits), which achieves $\mathcal{O}\left(\log T\right)$ variance-dependent regret in the stochastic regime and $\tilde{\mathcal{O}}\left( \sqrt{T} \right)$ regret in the adversarial regime. 
Moreover, its regret in adversarial one is data-dependent, adapting to the cumulative loss of the optimal action, the total quadratic variation, and the path-length of the loss sequence.
\par
Finally, we show that our algorithms outperform existing methods in the ordinary CSB literature by conducting numerical experiments with synthetic data.
\begin{table}[t]
    \centering
    \caption{Regret upper bounds of algorithms for linear objectives. $\Delta_{\mathrm{LB}, \mathrm{min}}$ and $\sigma_{\mathrm{LB}}$ are the minimum sub-optimality gap and maximum variance of the feedback loss, respectively.  $c\left( \mathcal{A}, \bs{\ell}\right)$ is a quantity that appears in the analysis of an asymptotic lower bound satisfying $\liminf\limits_{T \rightarrow \infty} \frac{\mathbb{E}\left[ R_{T} \right]}{\log T} \geq \Omega \left( c\left( \mathcal{A}, \bs{\ell}\right) \right)$. Other quantities are introduced in Section \ref{Preliminaries_Section}. }
    \begin{tabular}{cccc}
         Reference & Stochastic & Adversarial & Complexity  \\ \hline
         \citet{ItoTakemuraCOLT2023} & 
         \begin{tabular}{c}
               $\mathcal{O} \left( \frac{d^{3} \sigma_{\mathrm{LB}}^{2} \log T }{ \Delta_{\mathrm{LB}, \mathrm{min}} } \right)$
         \end{tabular} & $\mathcal{O}\left( d^{2} \sqrt{ Z \log T} \right)$ & $\mathrm{Exp}(d)$ \\ 
         \citet{ItoTakemuraNeurIPS2023} & $\mathcal{O}\left( \frac{d^{2}}{ \Delta_{\mathrm{LB}, \mathrm{min}} } \log T \right)$ & $\mathcal{O} \left( d\sqrt{T\log T} \right) $ & $\mathrm{Exp}(d)$  \\
         \citet{LeeICML2021} & $\mathcal{O}\left( c(\mathcal{A}, \bs{\ell}) \left(\log T\right)^{2} \right)$ & $\mathcal{O}\left( \sqrt{dT} \log T \right)$ & $\mathrm{Exp}(d)$ \\ \hline
         \begin{tabular}{c}
            CTS \\ 
            \citep{SiweiWangICML2018} 
         \end{tabular}& $\mathcal{O}\left(\sum\limits_{i = 1}^{d} b_{i} \frac{\log M}{\Delta_{i}} \log T \right)$ & - & $\mathrm{Poly}(\sum\limits_{i=1}^{d} b_{i})$ \\
         \begin{tabular}{c} 
         LBINFV \\
         \citep{TsuchiyaAISTATS2023} 
         \end{tabular}& \begin{tabular}{c}
            $\mathcal{O} \left( \sum\limits_{i \in J^{*} } b_{i} \frac{ M  \sigma_{i}^{2} }{\Delta_{i}} \log T \right) $
         \end{tabular} & $ \mathcal{O}\left(\sqrt{ \sum\limits_{i = 1}^{d} b_{i} Z \log T }\right) $ & $\mathrm{Poly}(\sum\limits_{i=1}^{d} b_{i})$ \\ \hline
         \textbf{GenCTS} &
         \begin{tabular}{c}
              $\mathcal{O} \left( \sum\limits_{i = 1}^{d} \frac{\log m}{\Delta_{i}} \log T \right)$  
         \end{tabular}
          & - &  $\mathrm{Poly}(d)$ \\ 
         \textbf{GenLBINFV} & 
         \begin{tabular}{c}
            $\mathcal{O} \left( \sum\limits_{i \in J^{*} } \frac{ M \sigma_{i}^{2} }{\Delta_{i}} \log T \right) $
         \end{tabular} & $ \mathcal{O} \left( \sqrt{ \sum\limits_{i = 1}^{d} n^{2}_{i} Z \log T } \right)$ & $\mathrm{Poly}(d)$
    \end{tabular}
    \label{tab:related_works}
\end{table}

\section{Preliminaries} \label{Preliminaries_Section}
In this section, we formally define the MP-CSB problem and introduce the three regimes depending on how the loss is generated. Then, we show some typical applications of MP-CSB. Finally, we introduce existing works that are related to MP-CSB and discuss the optimality in MP-CSB.

\subsection{Multi-Play Combinatorial Semi-Bandit (MP-CSB) Problem}
Here, we formalize the MP-CSB problem. Suppose we have $d$ base arms, numbered $1, \ldots, d$. In each round $t \in [T]$, the environment sets a set of losses $ \left\{ L_{i, j}(t) \right\}_{j = 1, \ldots, n_{i}} \subset [0, 1]$ for each base arm $i$, where $n_{i}$ is the maximum number of samples the agent can obtain from base arm $i$ in one round. 
Then, the player chooses an action $\bs{a}(t)$ from the action set $\mc{A} \subset \mathbb{Z}_{\geq 0}^{d}$, observes a set of losses, $\left\{L_{i, j}(t) \right\}_{j = 1, \ldots, a_{i}(t)}$, for each $i$ where $a_{i}(t)\geq 1$, and incurs a loss of $f\left(\bs{a}, \bs{L}^{1}(t), \ldots, \bs{L}^{d}(t)\right)$, where $ f:\mathcal{A} \times \mathbb{R}^{n_{1}} \times \cdots \mathbb{R}^{n_{d}} \rightarrow \mathbb{R} $ and $\bs{L}^{i} = \left(L_{i, 1}(t), \ldots, L_{i, n_{i}}(t)\right) \in \mathbb{R}^{n_{i}}$. 
The performance of the player is evaluated by regret $R_{T}$ defined as the difference between the cumulative losses of the player and the single optimal action $\bs{a}^{*}$ fixed in terms of the expected cumulative loss, i.e., $\bs{a}^{*} = \argmin_{\bs{a} \in \mc{A}} \mathbb{E}\left[ \sum_{t = 1}^{T} f\left(\bs{a}, \bs{L}^{1}(t), \ldots, \bs{L}^{d}(t) \right) \right]$
and 
\begin{align}
    R_{T} =  \mathbb{E}\left[ \sum_{t = 1}^{T} \left( f ( \bs{a}(t), \bs{L}^{1}(t), \ldots, \bs{L}^{d}(t) ) - f( \bs{a}^{*}, \bs{L}^{1}(t), \ldots, \bs{L}^{d}(t) ) \right) \right], \nonumber
\end{align}
where the expectation is taken w.r.t.~to the randomness of losses and the internal randomness of the algorithm. \par
We define $I_{\bs{a}} = \{ i \in [d] \mid a_i \geq 1 \}$, representing the set of indices of arms from which one or more samples are obtained. We define $J^{*} = [d] \setminus I_{\bs{a}^{*}}$, $m = \max\limits_{\bs{a}\in \mathcal{A}} \left|I_{\bs{a}}\right|$, and $M = \max\limits_{\bs{a} \in \mathcal{A}} \left\|\bs{a}\right\|_{1}$. Note that $M \geq m$. Also, we define an action set dependent constant $\lambda_{\mathcal{A}} = \min\left\{ M, W_{J^{*}} \right\}$, where $W_{J^{*}} = \sum_{i \in J^{*}} n_{i}$. 
We assume that for all $i \in [d]$, there exists $\bs{a} \in \mc{A}$ such that $a_{i} \geq 1$.  \par
\subsection{Considered Regimes}
We consider three regimes as the assumptions for the losses. \par
In the \emph{stochastic regime}, the losses are generated by unknown but fixed distributions. 
Before the game starts, the environment chooses an arbitrary distribution $\mc{D}_{i}$ for each base arm $i \in [d]$. 
In each round $t$,  for each $i \in [d]$, the environment samples a set of $n_{i}$ random variables, $\{L_{i, j}(t)\}_{j = 1, \ldots, n_{i}}$, from $\mc{D}_{i}$. 
We denote $\ell_{i} = \mathbb{E}_{\xi \sim \mc{D}_{i}} \left[\xi\right]$ and $\sigma_{i}^{2} \in [0, 1/4]$ as the expected outcome and variance of base arm~$i$, respectively. 
Also, we assume that the expected loss of an action $\bs{a} \in \mathcal{A}$ only depends on the mean outcomes of base arms in $I_{\bs{a}}$. That is, there exists a function $r$ such that $\mathbb{E}_{}[ f( \bs{a}, \bs{L}^{1}(t), \ldots, \bs{L}^{d}(t) ) ] = r\left(\bs{a}, \left\{ \ell_{i} \right\}_{i \in [I_{\bs{a}}]}\right)$.
\par
By contrast, in the \emph{adversarial regime}, we do not assume any stochastic structure for the losses, and the losses can be chosen arbitrarily. 
In this regime, for each $i \in [d]$ and $j \in [n_{i}]$, the environment can choose $L_{i, j}(t)$ depending on the past history until $(t - 1)$-th round,  i.e., $ \{ (\bs{L}^{1}(s), \ldots, \bs{L}^{d}(s), \bs{a}(s) ) \}_{s = 1}^{t - 1}$. \par
We also consider the \emph{stochastic regime with adversarial corruptions} \citep{ItoNeurIPS2021,ZimmertJMLR2021}, which is an intermediate regime between the stochastic and adversarial regimes. 
In this regime, for each $i\in[d]$, after a set of temporary loss $\left\{L'_{i,j}(t)\right\}_{j = 1, \ldots, n_{i}}$ is sampled from $\mc{D}_{i}$, the adversary corrupts $\{ L'_{i, j}(t) \}_{j = 1, \ldots, n_{i}}$ to $\{ L_{i, j}(t) \}_{j = 1, \ldots, n_{i}}$.
We define the corruption level by $C = \mathbb{E}\left[ \sum_{t = 1}^{T} \max\limits_{i \in [d]} \max\limits_{j \in [n_{i}]} \left| L_{i,j}(t) - L'_{i, j}(t) \right| \right] \geq 0$. If $C = 0$, this regime coincides with the stochastic regime.

\subsection{Typical Applications of MP-CSB}
Here, we show typical applications where MP-CSB can be applied. 
\paragraph{The Optimal Transport Problem.} The optimal transport (OT) problem \citep{VillaniSpringer2008} models resource allocation from $N_{\mathrm{sup}}$ suppliers to $N_{\mathrm{dem}}$ demanders. It is defined on a complete bipartite graph, where each supplier $x \in S := \{1, \ldots, N_{\mathrm{sup}} \}$ has $u_x \in \mathbb{Z}_{\geq 0}$ trucks to deliver items, and each demander $y \in D := \{1, \ldots, N_{\mathrm{dem}}\}$ requires $v_y \in \mathbb{Z}_{\geq 0}$ units of items. The goal is to find the most efficient transportation plan, minimizing the total transportation cost:
\begin{align} \label{OptimalTransportProblem}
    \begin{array}{ll@{}ll}
    \text{min.}   & \sum\limits_{x=1}^{N_{\mathrm{sup}}}\sum\limits_{y=1}^{N_{\mathrm{dem}}} a_{xy}c_{xy}&\\
    \text{s.t.} & \bs{a} \in \{\bs{\pi} \in \mathbb{Z}_{\geq 0}^{N_{\mathrm{sup}} \times N_{\mathrm{dem}}} \mid \bs{\pi} \mathbf{1} = \bs{u}, \bs{\pi}^\top \mathbf{1} = \bs{v}\}, &
    \end{array}
\end{align}
where $a_{xy}$ represents the number of trucks transported from supplier $x$ to demander $y$, and $c_{xy}$ is the transportation cost of edge $(x, y)$.  \par
In some scenarios, the transportation cost $c_{xy}$ is unknown and must be estimated. As a real-world application, each truck may have a sensor to measure the cost of edges it passes through, and feedback can be obtained from each truck. In such a case, we can apply the MP-CSB problem. Here, the number of arms $d$ is the number of edges in the bipartite graph, i.e., $d = N_{\mathrm{sup}} \times N_{\mathrm{dem}}$. Also, $n_{xy}$ is the maximum number of trucks that can pass through edge $(x, y)$, i.e., $n_{xy} = \min\left\{u_{x}, v_{y}\right\}$. \par

\paragraph{The Knapsack Problem.} Next, we introduce an example of the knapsack problem \citep{DantzigPPG2007}. In the knapsack problem, we have $d$ items. Each item $i \in [d]$ has a weight $w_{i}$ and value~$\mu_{i}$. Also, there is a knapsack whose capacity is $W$ in which we put items. Our goal is to maximize the total value of the items in the knapsack, not letting the total weight of the items exceed the capacity of the knapsack. Formally, the optimization problem is given as follows:
\begin{align*}
    \begin{array}{ll@{}ll}
    \text{maximize}_{\bs{a}}   &  \bs{a}^{\top} \bs{\mu} &\\
    \text{subject to} & \bs{a}^{\top} \bs{w} \leq W & \\
    \text{and}        & \bs{a} \in \mathbb{Z}_{\geq 0}^{d}, & 
    \end{array}
\end{align*}
\vspace{-0.25em}
where $\bs{w} = \left(w_{1}, \ldots, w_{d}\right)$. \par
Then, let us consider online advertising. Suppose an advertiser considers placing different types of ads in a frame of size $W$ on a website. The advertiser is allowed to place multiple ads of the same type. The size of each ad $i$ is $w_{i}$, and it is assumed that the total size of all ads must not exceed $W$.
As feedback to advertisers, they observe the profits generated from each ad.
In this example, we can apply the MP-CSB problem. The number of arms is the number of types of ads, and $n_{i}$ is the maximum number of ad $i$ that can be put in a website, i.e., $n_{i} = \left\lfloor \frac{W}{w_{i}} \right\rfloor$. 

\subsection{Related Works} \label{RelatedWorks_Section}
In Table \ref{tab:related_works}, we show existing works that are related to MP-CSB. \par
The top three are state-of-the-art methods (SOTA) for the linear bandit (LB) \citep{ItoTakemuraCOLT2023,ItoTakemuraNeurIPS2023,LeeICML2021}, which studies best-of-both-worlds algorithms. In MP-CSB, if the objective is linear, i.e., $f\left( \bs{a}, \bs{L}^{1}(t), \ldots, \bs{L}^{d}(t) \right) = \sum_{i = 1}^{d} \sum_{j = 1}^{a_{i}} L_{i, j}(t) $, we can apply these algorithms. However, there are several reasons why LB algorithms are not recommended for MP-CSB. 
First, to apply LB algorithms, we need to enumerate all the actions in $\mathcal{A}$, which is unrealistic since the time complexity to enumerate them is exponential in $d$ in general. Secondly, since LB algorithms assume full-bandit feedback, in which the agent observes only the sum of rewards $\sum_{i = 1}^{d} \sum_{j = 1}^{a_{i}(t)} L_{i, j}(t)$, they are not able to take advantage of the benefit of obtaining multiple samples from a single arm.\par
The middle two are existing works that show SOTA methods proposed for the ordinary CSB. One may apply existing CSB algorithms to MP-CSB by \emph{duplicating} base arms so that the action space becomes binary. However, this duplicating technique has two major shortcomings. First, duplicating base arms greatly increases the number of base arms, making it computationally infeasible to maintain statistics for each base arm (e.g., sample mean) in some cases.
For example, in the OT problem, the total number of duplicated arms is  $\left(\sum\limits_{x = 1}^{N_{\mathrm{sup}}} u_{x}\right) \times N_{\mathrm{dem}}$. If $\sum\limits_{x = 1}^{N_{\mathrm{sup}}} u_{x} \sim \mc{O}\left(10^{10}\right)$, the computational burden of handling such a large number of base arms would be impractical.
The second issue is the sample efficiency in the stochastic regime. Even with the duplicating technique, existing algorithms maintain separate statistics on each base arm \citep{SiweiWangICML2018,NeuCoLT2015,WeiChenUAI2021,TsuchiyaAISTATS2023}, even though all of the duplicated base arms follow the same distribution.
Such a lack of distinction between identical distributions leads to poor sample efficiency and may force the player to choose suboptimal actions frequently. See Appendix \ref{DuplicatingTechnique_Appendix} for details. \par

\subsection{Discussion on the Optimality} \label{LowerBound_Section} 
Next, we discuss lower bounds of MP-CSB for stochastic and adversarial regimes.
For the stochastic regime, if the objective is linear, i.e., $r\left(\bs{a}, \left\{ \ell_{i} \right\}_{i \in [I_{\bs{a}}]}\right) = \bs{a}^{\top} \bs{\ell} $, any consistent \footnote{ We say that an algorithm is consistent if for any stochastic CSB instance problem instance, any suboptimal $\bs{a}$, and any $0 < \alpha < 1$,
$\mathbb{E} \left[ T_{n}(\bs{a}) \right] = o(n^{\alpha})$, where $T_{n}(\bs{a})$ is the number of times that action $\bs{a}$ is chosen in $n$ steps by the algorithm.} algorithm with the duplicating technique suffers a regret of $\Omega \left( \frac{\sum_{i = 1}^{d} b_{i} M }{\Delta} \log T \right)$ asymptotically, where $\Delta = \min\limits_{\bs{a} \in \mathcal{A} \setminus \{ \bs{a}^{*} \}} \bs{a}^{\top} \bs{\ell} - {\bs{a}^{*}}^{\top} \bs{\ell} $ and $b_{i}$ is the number of duplicates of arm $i$ \citep{KventonAISTATS2015,MerlisCoLT2020}. 
In Sections \ref{GenCTS_Section} and \ref{GenLBINFV_Section}, we show that the upper bounds of our proposed algorithms are tighter than this lower bound. 
We leave the derivation of a regret lower bound of consistent algorithms without using the duplicating technique in MP-CSB as a future work. \par
On the other hand, in the adversarial regime, since the adversary is setting $\sum_{i = 1}^{d} n_{i}$ losses in total, from \citet{AudibertMathOR2014}, we can directly obtain a worst case lower bound of MP-CSB of $ \sqrt{ M \left(\sum_{i = 1}^{d} n_{i} \right) T}$.

\section{Generalized CTS Algorithm} \label{GenCTS_Section}
In this section, we introduce the generalized combinatorial Thompson sampling (GenCTS) algorithm, which is computationally feasible even when the action set $\mathcal{A}$ is exponentially large in $d$. We show that the GenCTS algorithm achieves $O(\log T)$ distribution-dependent regret in the stochastic regime. 
\paragraph{Technical Assumptions.} 
To allow the GenCTS to handle not only linear loss functions but also a broader class of nonlinear loss functions, we assume that the function $r$ is Lipschitz continuous. Specifically, there exists a constant $\lipconst$, such that for every action $\bs{a}$ and every pair of mean vectors $\bs{\mu}$ and $\bs{\mu}'$, $\left| r\left( \bs{a}, \left\{ \mu_{i} \right\}_{i \in I_{\bs{a}}} \right) - r\left( \bs{a}, \left\{\mu'_{i} \right\}_{i\in I_{\bs{a}}} \right) \right| \leq \lipconst \sum_{i \in I_{\bs{a}}} \left| \mu_{i} - \mu'_{i} \right|$. 
In the OT and knapsack problems, $r\left(\bs{a}, \left\{ \mu_{i} \right\}_{i \in [I_{\bs{a}}]}\right) = \sum_{i \in I_{\bs{a}}} a_{i} \mu_{i}$, and we can easily confirm that $\lipconst = \max\limits_{i \in [d]} n_{i}$. \par
Also, GenCTS assumes that we have an \emph{oracle} that takes a vector $\bs{\rho} = \left(\rho_{1}, \ldots, \rho_{d}\right)$ as input and output an action $\mathrm{Oracle}(\bs{\rho}) = \argmin\limits_{\bs{a} \in \mc{A}} r\left( \bs{a}, \left\{ \rho_{i} \right\}_{i \in I_{\bs{a}}} \right)$.  
We assume that the time complexity of the oracle is polynomial or pseudo-polynomial \footnote{In computational complexity theory, a numeric algorithm runs in pseudo-polynomial time if its running time is a polynomial in the \emph{numeric value} of the input (the largest integer present in the input)—but not necessarily in the length (dimension) of the input, which is the case for polynomial time algorithms.} in~$d$. 
For instance, since it is known that the OT problem can be solved in $O(d^3 \log d)$ time \citep{CuturiNeurIPS2013} using a linear programming solver, it can be the oracle. For the knapsack problem, there is a dynamic programming-based algorithm that runs in pseudo-polynomial time \citep{KellererSpringer2011,FujimotoSpringer2016}, and therefore it can be the oracle. \par

\paragraph{Algorithm.} GenCTS is shown in Algorithm \ref{GenCTSAlgorithm}. 
Initially, we set a prior distribution of all the base arms as the beta distribution $\mathrm{Beta}(1, 1)$, which is the uniform distribution on $[0, 1]$. In each round~$t$, we choose an action by drawing independent samples, $\{ \theta_{i}(t) \}_{i \in [d]}$, from each base arm's prior distribution, and use the output from the oracle, $\bs{a}(t) = \mathrm{Oracle}(\bs{\theta}(t))$, as the action to play. 
After we obtain losses, we update the prior distributions of each base arm $i$ using the procedure \textsf{Update} (Algorithm \ref{GenCTSUpdateProcedure}). 
In the \textsf{Update} procedure, we update the prior beta distribution of each base arm as follows.
For each $i\in I_{\bs{a}(t)}$ and $j \in [a_{i}(t)]$, we generate a Bernoulli random variable $Y_{i, j}(t)$ with mean $L_{i, j}(t)$, and update the prior beta distribution of base arm $i$ using $Y_{i, j}(t)$ as the new observation. 
Let $p_{i}(t)$ and $q_{i}(t)$ denote the values of $p_{i}$ and $q_{i}$ at the beginning of round $t$, respectively. Here $p_{i}(t) - 1$ and $q_{i}(t) - 1$ represent the number of 1s and 0s in $\bigcup_{s = 1}^{t - 1}\bigcup_{j = 1}^{a_{i}(s)} \{Y_{i, j}(s)\} $, respectively. 
Then, following Bayes' rule, the posterior distribution of arm $i$ after round $t$ is $\mathrm{Beta} \left( p_{i}(t) + \sum_{j = 1}^{a_{i}(t)} Y_{i, j}(t), q_{i}(t) + \sum_{j = 1}^{a_{i}(t)} (1 - Y_{i, j}(t)) \right)$, which is what the $\textsf{Update}$ procedure does for $p_{i}$ and $q_{i}$. 
\par
One key advantage of the GenCTS algorithm is that it does not require enumerating all the possible actions in $\mc{A}$ in the beginning of the game. The total computation time of GenCTS is $O(\text{poly}(d) T)$ or $O(\text{pseudo-poly}(d) T)$.

\begin{minipage}[t]{0.54\linewidth}
  \begin{algorithm}[H]
    \caption{GenCTS: Generalized Combinatorial Thompson Sampling}
    \begin{algorithmic}[1] \label{GenCTSAlgorithm}
         \STATE $\bs{p}(1), \bs{q}(1) \leftarrow \mathbf{1}_{d}, \mathbf{1}_{d}$
         \FOR{$t = 1, 2, \ldots$}
            \STATE $\mathrm{Beta}(p_{i}(t), q_{i}(t)) \leftarrow \frac{x^{p_{i}(t) - 1} (1 - x)^{q_{i}(t) - 1}}{\int_{0}^{1} u^{p_{i}(t) - 1} (1 - u)^{q_{i}(t) - 1} du}$
            \STATE For all arm $i \in [d]$, draw a sample $\theta_{i}(t)$ from $\mathrm{Beta}(p_{i}(t), q_{i}(t))$
            \STATE $\bs{\theta}(t) \leftarrow \left( \theta_{1}(t), \ldots, \theta_{d}(t)\right)$
            \STATE \texttt{/* Play an Action /*}
            \STATE Play action $\bs{a}(t) = \mathrm{Oracle}\left( \bs{\theta}(t) \right)$
            \STATE \texttt{/* Collect Losses /*}
            \STATE $Q(t)=\left\{  \right\}$
            \FOR{$i \in I_{\bs{a}(t)}$}
                \FOR{$j = 1, \ldots, a_{i}(t)$ }
                    \STATE Observe loss $L_{i, j}(t)$
                    \STATE $Q(t) \leftarrow (i, j, L_{i, j}(t))$
                \ENDFOR
            \ENDFOR 
            \STATE \texttt{/* Update the beta distribution /*}
            \STATE $\bs{p}(t + 1), \bs{q}(t + 1) \leftarrow $ \textsf{Update($\bs{p}(t), \bs{q}(t),  Q(t)$)} \label{GCTSAlgorithmUpdateLine}
         \ENDFOR
    \end{algorithmic} 
  \end{algorithm}
\end{minipage}
\hfill
\begin{minipage}[t]{0.45\linewidth}
  \begin{algorithm}[H]
    \caption{Procedure \textsf{Update}}
    \begin{algorithmic}[1] \label{GenCTSUpdateProcedure}
        \STATE \textbf{Input}:~$\bs{p}(t), \bs{q}(t), Q(t)$ \\
        \STATE \textbf{Output}: Updated $\bs{p}(t + 1)$ and $\bs{q}(t + 1)$
        \FOR{$(i, j, L_{i, j}(t)) \in Q(t)$}
           \STATE $Y_{i, j}(t) \leftarrow 1$ with probability $L_{i, j}(t)$, $0$ with probability $ 1 - L_{i, j}(t)$ 
           \STATE $p_{i}(t + 1) \leftarrow p_{i}(t) + Y_{i, j}(t)$
           \STATE $q_{i}(t + 1) \leftarrow q_{i}(t) + 1 - Y_{i, j}(t)$
        \ENDFOR \\
        \STATE \textbf{Return} $\bs{p}(t + 1)$ and $\bs{q}(t + 1)$
    \end{algorithmic}
  \end{algorithm}

  \vspace{0em} 

  \begin{algorithm}[H]
    \caption{Generalized LBINFV}
    \begin{algorithmic}[1] \label{GenLBINFVAlgorithm}
         \renewcommand{\algorithmicrequire}{\textbf{Input:}}
         \renewcommand{\algorithmicensure}{\textbf{Parameter:}}
         \REQUIRE Action set $\mc{A}$, time horizon $T$
         \FOR{$t = 1, 2, \ldots T$}
            \STATE Compute $\bs{x}(t) \in \mc{X}$ by (\ref{prob:OFTRL}) 
            \STATE Sample $\bs{a}(t)$ such that $\mathbb{E}\left[ \bs{a}(t) | \bs{x}(t) \right] = \bs{x}(t)$
            \STATE Take action $\bs{a}(t)$ and observe feedback $\left\{ L_{i, 1}, \ldots, L_{i, a_{i}(t)} \right\}$ for $i$ such that $a_{i}(t) \geq 1$.
            \STATE Update the regularization parameters $\beta_{i}(t)$ in (\ref{beta_definition}) and optimistic prediction $q_{i}(t)$ using (\ref{LS_optpred}) or (\ref{GD_optpred}).
        \ENDFOR
    \end{algorithmic} 
  \end{algorithm}
\end{minipage}

\paragraph{Regret Analysis.} Here, we show a regret upper bound of the GenCTS algorithm. 
\begin{theorem}\label{GenCTSUpperBoundTheorem}
    The GenCTS algorithm achieves $R_T= \mc{O}\left(\sum_{i = 1}^{d} \frac{ \lipconst^{2} \log m  }{\Delta_{i}} \log  T  \right)$, where $\Delta_{i} = \min\limits_{\bs{a} \in \mathcal{A} \setminus \{ \boldsymbol{a}^{*} \}:a_{i} \geq 1} r\left(\bs{a}, \left\{\ell_{i}\right\}_{i \in I_{\bs{a}}}\right) - r\left( \bs{a}^{*}, \{ \ell_{i} \}_{i \in I_{\bs{a}^{*}}} \right)$
\end{theorem}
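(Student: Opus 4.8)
The strategy is to adapt the regret analysis of the CTS algorithm of \citet{SiweiWangICML2018} to the multi-play setting, exploiting the fact that GenCTS maintains one Beta posterior per *base arm* (not per duplicated arm) and that the Lipschitz assumption on $r$ converts per-arm estimation errors into a bound on the per-round regret. The first step is to write the single-round regret, conditioned on the history, as a function of the deviation of the Thompson samples $\bs{\theta}(t)$ from the true means $\bs{\ell}$. By the Lipschitz property of $r$ and the definition of the oracle (which minimizes $r(\cdot,\bs{\theta}(t))$ so that $r(\bs{a}(t),\bs{\theta}(t))\le r(\bs{a}^{*},\bs{\theta}(t))$), a standard chain of inequalities gives, on a "clean" event where each sampled value and each empirical mean is close to $\ell_i$, that the instantaneous regret of playing $\bs{a}(t)\ne\bs{a}^{*}$ is controlled by $\lipconst\sum_{i\in I_{\bs{a}(t)}}|\theta_i(t)-\ell_i|$, and this can only be larger than $\Delta_{i}$-scale quantities when some coordinate's sample is far from its mean.

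Next I would partition time into rounds where $\bs{a}(t)$ is suboptimal and charge each such round to a base arm $i\in I_{\bs{a}(t)}$ whose Thompson sample $\theta_i(t)$ is "responsible" for the error — i.e.\ deviates from $\ell_i$ by more than a threshold proportional to $\Delta_i/\lipconst$. The key quantity is then $\sum_t \mathbbm{1}[\,\theta_i(t)\text{ deviates}\,]$ for each $i$, and one must bound its expectation by $O(\log T / \Delta_i^2)$ times $\lipconst^2$-type factors, times a $\log m$ factor coming from a union bound over the at most $m$ coordinates in $I_{\bs{a}(t)}$. The analysis of this sum is the heart of the CTS-style argument: split according to whether the number of Bernoulli pseudo-observations $N_i(t) = p_i(t)+q_i(t)-2$ of arm $i$ is small or large. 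When $N_i(t)$ is large, concentration of the empirical mean of the $Y_{i,j}$'s together with Beta tail bounds (anti-concentration for the "good" direction, concentration for the "bad" direction, as in \citet{AgrawalCOLT2012}) shows the deviation event is rare; when $N_i(t)$ is small, we simply bound the count by the threshold index. A subtlety specific to MP-CSB is that each played round can contribute up to $a_i(t)\le n_i$ fresh Bernoulli observations to arm $i$, so $N_i(t)$ grows faster than the number of rounds in which $i$ is touched — this only helps (it accelerates concentration), but one must be careful that the pull-count bookkeeping is stated in terms of number of observations, not number of rounds.

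The main obstacle I anticipate is handling the interaction between the Lipschitz constant $\lipconst$, the threshold choice, and the $\log m$ factor so that the final bound comes out as $\lipconst^2\log m$ rather than, say, $\lipconst^2 \log d$ or with an extra $m$ factor. Concretely, for each suboptimal round one must distribute the "blame" over the $|I_{\bs{a}(t)}|\le m$ coordinates, and a naive union bound over all $d$ arms at each round would give $\log d$; obtaining $\log m$ requires arguing that only the coordinates actually in $I_{\bs{a}(t)}$ matter and invoking the union bound at the right granularity, in the spirit of the "$\log M$ vs.\ $\log d$" improvement already present in \citet{SiweiWangICML2018}. A second, more routine obstacle is carefully defining the clean event and showing its complement contributes only $O(1)$ (or $O(d)$) regret via a summable-tail argument, and handling the Beta-to-Bernoulli randomization introduced by the \textsf{Update} procedure, which adds an extra layer of (independent) randomness but does not change the means. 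Once these pieces are in place, summing the per-arm contributions $\sum_{i=1}^{d} O\!\left(\lipconst^2 \log m / \Delta_i\right)\log T$ yields the claimed bound. I would defer the full computation to an appendix and present here only the clean-event decomposition, the Lipschitz-plus-oracle inequality, and the statement of the per-arm deviation-count lemma.
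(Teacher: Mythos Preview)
Your high-level plan---adapt the CTS analysis of \citet{SiweiWangICML2018}, use the Lipschitz property together with the oracle optimality to control instantaneous regret by $\lipconst\sum_{i\in I_{\bs{a}(t)}}|\theta_i(t)-\ell_i|$, and decompose according to whether empirical means and Thompson samples are close to the true means---is indeed the paper's approach. However, two of your technical mechanisms differ from what the paper actually does, and one of them would not give the claimed $\log m$.

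First, the $\log m$ factor does \emph{not} come from a union bound over the $\le m$ coordinates in $I_{\bs{a}(t)}$. The paper uses a \emph{regret allocation} method: for each round where the ``bad sample / enough observations'' event $\mathcal{S}(t)\land\lnot\mathcal{T}(t)$ holds, it assigns to every arm $i\in I_{\bs{a}(t)}$ a charge $g_i(N_i(t))$ depending on the observation count, with $g_i(w)\propto \sqrt{\log(\cdot)/w}$ for $w\le L_{i,2}$ and $g_i(w)\propto \log(\cdot)/w$ for $L_{i,2}<w\le L_{i,1}$, where $L_{i,1}=m\,L_{i,2}$. One then verifies $\sum_{i\in I_{\bs{a}(t)}} g_i(N_i(t))\ge \Delta_{\bs{a}(t)}$ and sums the charges over time; the harmonic sum $\sum_{w=L_{i,2}}^{L_{i,1}} 1/w\approx \log(L_{i,1}/L_{i,2})=\log m$ is the source of the $\log m$. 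Your ``blame a single responsible arm via pigeonhole'' idea would instead produce a threshold $\Delta_i/(m\lipconst)$ and hence an extra factor of $m$, not $\log m$; you correctly flagged this as the main obstacle, but the resolution is the two-threshold allocation, not a refined union bound.

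Second, the term you dismiss as ``routine''---showing the complement of the clean event contributes $O(1)$---is in fact the most delicate piece. When all Thompson samples in $I_{\bs{a}(t)}$ are close to $\bs{\ell}$ yet $\bs{a}(t)\ne\bs{a}^{*}$ (the event $\lnot\mathcal{R}(t)\land\mathcal{P}(t)$), one must exhibit a nonempty subset $I\subseteq I_{\bs{a}^{*}}$ such that, whenever the samples on $I$ are normal, the oracle either plays optimally or plays an action with large sample-deviation; this is the paper's Lemma~\ref{CTS_Lemma2}, proved by an inductive shrinking of $I$ inside $I_{\bs{a}^{*}}$. Only after this lemma can one invoke the Beta anti-concentration argument of \citet{SiweiWangICML2018} to get a $T$-independent bound. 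This step is specific to the combinatorial oracle structure and is not a standard summable-tail argument.
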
 
We can see that the upper bound of GenCTS is tighter than the asymptotic lower bound of consistent algorithms with the duplicating technique, $\mathcal{O}\left( \frac{\sum_{i = 1}^{d} b_{i} M }{\Delta} \log T \right)$, since $b_{i} \geq 1$ and $M \geq m$. From the result in \citet{SiweiWangICML2018}, if we use the ordinary CTS algorithm with the duplicating technique for MP-CSB, the regret upper bound is $\mc{O}\left( \sum_{i = 1}^{d} b_{i} \frac{ \lipconst^{2}\log M \log (T)}{\Delta_{\mathrm{i}}} \right)$. Therefore, we can see that the upper bound of GenCTS is tighter than that of the ordinary CTS algorithm with the duplicating technique. 
Moreover, the time complexity of GenCTS in each round, $\mathcal{O}\left(\mathrm{Poly}(d)\right)$, can be smaller than that of CTS with the duplicating technique, which is $\mathcal{O}\left(\mathrm{Poly}\left( \sum_{i = 1}^{d} b_{i} \right)\right)$. For instance, in the example of the OT problem, the time complexity of the ordinary CTS in each round is $\mathcal{O}( ( \sum_{i=1}^{d} b_{i} )^{3} \log (\sum_{i=1}^{d} b_{i}) )$, which can be much larger than that of GenLBINFV, which is $\mathcal{O} \left(d^{3} \log d\right)$.

\section{Generalized LBINFV Algorithm} \label{GenLBINFV_Section}
In this section, we introduce the GenLBINFV algorithm, which is a BOBW algorithm for MP-CSB. 

\paragraph{Technical Assumption for GenLBINFV.} For the GenLBINFV algorithm, we need an assumption that the loss function is linear, i.e., $f\left(\bs{a}, \bs{L}^{1}(t), \ldots, \bs{L}^{d}(t)\right) = \sum_{i = 1}^{d}\sum_{j = 1}^{a_{i}} L_{i, j}(t)$.
This assumption holds for many combinatorial optimization problems with a linear objective, such as the OT and knapsack problems.

\paragraph{Algorithm.} We construct the algorithm based on the \emph{optimistic-follow-the-regularized-leader} (OFTRL) framework, which has occasionally been used in the development of the BOBW algorithms \citep{WeiCOLT2018,ItoNeurIPS2021}.
In each round $t$, we choose $\bs{a}(t) \in \mathcal{A}$ so that $\mathbb{E}\left[ \bs{a}(t) \mid \bs{x}(t) \right] = \bs{x}(t)$, where 
\begin{align}\label{prob:OFTRL}
    \bs{x}(t) \in \min_{\bs{x} \in \mathcal{X}}\left\{ \left\langle \bs{q}(t) + \sum_{s = 1}^{t - 1} \hat{\bs{\ell}}(s), \bs{x} \right\rangle + \psi_{t}\left(\bs{x}\right) \right\}.
\end{align}
Here, $\mathcal{X} = \mathrm{conv}\left( \mathcal{A} \right)$ is a convex hull of the action set $\mathcal{A}$. Below, we define $\bs{q}(t)$, $\hat{\bs{\ell}}(t) $, and $\psi_{t}(\bs{x})$. \par
$q(t)$ is called the optimistic prediction; intuitively, it estimates the loss of the arms in round $t$. For the choice of optimistic prediction $\bs{q}(t)$, we introduce two methods: the least squares (LS) and gradient descent (GD) methods. LS defines $\bs{q}(t) = \left(q_{1}(t), \ldots, q_{d}(t)\right)^{\top} \in \left[0, 1\right]^{d}$ by 
\begin{align} \label{LS_optpred}
    q_{i}(t) = \frac{1}{N_{i}(t - 1)} \left( \frac{1}{2} + \sum_{s = 1}^{t - 1} \sum_{j = 1}^{a_{i}(s)} L_{i, j}(s) \right),
\end{align}
and GD defines $\bs{q}(t)$ by $q_{i}(1) = \frac{1}{2}$ and 
\begin{align} \label{GD_optpred}
    q_{i}(t + 1)
        &  = \! \left\{
                \begin{array}{ll}
            \! (1 - \eta)q_{i}(t) + \eta \frac{1}{a_{i}(t)} \sum_{j = 1}^{a_{i}(t)} L_{i, j}(t) & \text{if $a_{i}(t) \geq 1$}, \\
            \!  q_{i}(t) & \text{otherwise},
                \end{array} \right. 
\end{align}
for all $i \in [d]$ with a step size $\eta \in \left(0, \frac{1}{2}\right)$. The design of LS is to reduce the leading constant $\frac{1}{1-2\eta}$ in the regret, and GD is to derive a path-length bound. \par
Next, we define $\hat{\bs{\ell}}(t) = \left( \hat{\ell}_{1}(t), \ldots, \hat{\ell}_{d}(t) \right) \in \mathbb{R}^{d}$ as $ \hat{\ell}_{i}(t) = q_{i}(t) + \frac{a_{i}(t) }{x_{i}(t)} \left( k_{i}(t) - q_{i}(t) \right)$ for $i \in [d]$, where 
$\bs{k}(t) = \left( \frac{1}{a_{1}(t)} \sum_{j = 1}^{a_{1}(t)} L_{1, j}(t), \ldots, \frac{1}{a_{d}(t)} \sum_{j = 1}^{a_{d}(t)} L_{d, j}(t) \right)$. From basic calculation, we can confirm that $\hat{\ell}_{i}(t)$ is an unbiased estimator of $\mathbb{E} \left[ \sum_{j = 1}^{a_{i}(t)} L_{i, j}(t) \mid \bs{x}(t) \right] / x_{i}$, which can be seen as the average of the losses occurred by pulling arm $i$. 
The optimistic prediction $\bs{q}(t)$ plays a role in reducing the variance of $\hat{\ell}(t)$; the better $\bs{q}(t)$ predicts $\bs{k}(t)$, the smaller the variance of $\hat{\bs{\ell}}(t)$ becomes. \par
$\psi_{t}: \mathbb{R}^{d} \rightarrow \mathbb{R}$ is a convex regularizer function given by $\psi_{t}(\bs{x}) = \sum_{i = 1}^{d} \beta_{i}(t) \varphi_{i} (x_{i})$, where $\varphi_{i}: \mathbb{R} \rightarrow \mathbb{R}$ is defined as 
\begin{align} \label{phi_definition}
    \varphi_{i}(z) = n_{i}\left( \frac{z}{n_{i}} - 1 - \log \frac{z}{n_{i}} + \log T \left( \frac{z}{n_{i}} + \left( 1 - \frac{z}{n_{i}} \log \left( 1 - \frac{z}{n_{i}} \right) \right) \right) \right), 
\end{align}
and regularization parameters $\left\{ \beta_{i}(t) \right\}_{i = 1, \ldots, d}$ are defined as
\begin{align} \label{beta_definition}
    \beta_{i}(t) =  \sqrt{\left( 1 + \frac{\epsilon_{i}}{n_{i}} \right)^{2} + \frac{1}{\log T} \sum_{s = 1}^{t - 1}  \left( \frac{a_{i}(s)}{n_{i}} \right)^{2} \left( k_{i}(s) - q_{i}(s) \right)^{2} \min\left\{ 1, \frac{2 \left( 1 - \frac{x_{i}(s)}{n_{i}} \right)}{\left( \frac{x_{i}(s)}{n_{i}} \right)^{2} \log T } \right\} }.
\end{align}
Here, $\epsilon_{i} \in \left( 0, \frac{n_{i}}{2} \right]$ is a hyperparameter.
Our regularizer function $\varphi_{i}$ consists of a logarithmic barrier term $ - \log \frac{z}{n_{i}}$ and an entropy term $\left( 1- \frac{z}{n_{i}}\right) \log \left(1 - \frac{z}{n_{i}}\right)$. 
This type of regularizer is called a \emph{hybrid} regularizer and was employed in existing studies for bounding a component of the regret \citep{ZimmertICML2019,ItoCoLT2022,ItoNeurIPS2022}. 
Our regularizer function can be seen as a generalization of that of LBINFV \citep{TsuchiyaAISTATS2023} since $n_{i} = 1$ for all $i \in [d]$ in the ordinary CSB. $\beta_{i}(t)$ determines the strength of the regularization. 
When $a_{i}(s) \geq 1$, $\left( k_{i}(s) - q_{i}(s) \right)^{2} $ in (\ref{beta_definition}) can be seen as the squared error of the optimistic prediction, and the algorithm becomes more explorative when the loss is unpredictable or has a high variance. \par
Overall, intuitively, $\bs{q}(t)$ and $\sum_{s = 1}^{t - 1} \hat{\bs{\ell}}(t)$ are values determined based on past information and are responsible for the \emph{exploitation}. On the other hand, the regularizer $\psi_{t}(\bs{x})$ prevents overfitting to past data and encourages moderate \emph{exploration}. This is intended to minimize regret even in adversarial environments. \par


\paragraph{Computational complexity.}
OFTRL in \eqref{prob:OFTRL} can be solved in polynomial time in $d$ as long as the convex hull $\mathcal{X} = \mathrm{conv}(\mathcal{A})$ is represented by a polynomial number of constraints or admits a polynomial-time separation oracle~\citep{SchrijverBook1998}.
Given the solution $\bs{x}(t) \in \mathcal{X}$, sampling a combinatorial action $\bs{a}(t) \in \mathcal{A}$ such that $\mathbb{E}[\bs{a}(t) \mid \bs{x}(t)] = \bs{x}(t)$ requires a convex decomposition of $\bs{x}(t)$~\citep{WeiCOLT2018,ItoNeurIPS2021}.
By Carathéodory's theorem, any point $\bs{x}(t) \in \mathcal{X}$ can be expressed as a convex combination $\bs{x}(t) = \sum_{k=1}^m \lambda_k \bs{a}^{(k)}$, where $\bs{a}^{(k)} \in \mathcal{A}$, $\lambda_k \in [0,1]$, $\sum_{k=1}^m \lambda_k = 1$, and $m \leq d+1$. 
When the linear optimization oracle over $\mathcal{A}$ is efficient, the Frank-Wolfe (FW) algorithm can construct such a decomposition iteratively~\citep{CombettesPokuttaMathPro2019}. This holds, for instance, in OT problems, where the feasible set forms a transportation polytope and each FW step reduces to a tractable linear program.
In contrast, for NP-hard domains such as knapsack problems, solving OFTRL exactly is generally intractable. In practice, this can be addressed either by exploiting problem structure to keep the action set small enough to enumerate, or by using approximate optimization and sampling methods.

\paragraph{Regret Analysis.} Here, we show regret upper bounds of the GenLBINFV algorithm. First, we show regret upper bounds of the GenLBINFV algorithm for each optimistic prediction method under the stochastic regime.
\begin{theorem} \label{GenLBINFV_Stochastic_Theorem}
    Regret upper bounds of GenLBINFV using LS and GD methods in the stochastic regime are $\mc{O}\left( \sum_{i \in J^{*}} \frac{\lambda_{\mc{A}} \sigma^{2}_{i}}{ \Delta_{i} } \log T \right)$ and $\mc{O}\left( \frac{1}{1 - 2\eta} \sum_{i \in J^{*}} \frac{\lambda_{\mc{A}} \sigma^{2}_{i}}{ \Delta_{i} } \log T \right)$, respectively, where $\Delta_{i} = \min\limits_{\bs{a} \in \mathcal{A}:a_{i} \geq 1} \bs{a}^{\top} \bs{\ell} - {\bs{a}^{*}}^{\top} \bs{\ell}$.
\end{theorem}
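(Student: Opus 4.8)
The plan is to follow the best-of-both-worlds template for optimistic follow-the-regularized-leader (OFTRL) with an adaptive hybrid regularizer --- deriving a generic regret bound valid in every regime and then specializing it to the stochastic regime by a self-bounding argument, mirroring the LBINFV analysis of \citet{TsuchiyaAISTATS2023} with the new element being the $n_i$-dependent bookkeeping. \textbf{Step 1 (generic OFTRL decomposition).} Since $\hat{\bs{\ell}}(t)$ is an unbiased estimate of the scaled per-arm losses, $R_T = \mathbb{E}\big[\sum_{t=1}^{T}\langle\hat{\bs{\ell}}(t),\bs{x}(t)-\bs{x}^{*}\rangle\big]$ with $\bs{x}^{*}=\bs{a}^{*}$. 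To avoid the singularity of the logarithmic barrier at $x_{i}^{*}=0$ (which occurs exactly for $i\in J^{*}$) I compare against the shifted comparator $\tilde{\bs{x}}^{*}=(1-1/T)\bs{a}^{*}+(1/T)\bs{x}^{\circ}$ for a fixed interior point $\bs{x}^{\circ}\in\mc{X}$, at a cost of only $\mc{O}(\sum_{i}n_{i})=\mc{O}(1)$. The OFTRL inequality for a non-decreasing sequence of regularizers, written so that the regularizer-drift terms stay paired, gives
\begin{align}
    R_{T}\le\mathbb{E}\big[\psi_{1}(\tilde{\bs{x}}^{*})-\textstyle\min_{\bs{x}}\psi_{1}(\bs{x})+\mathrm{Stab}+\mathrm{Drift}\big]+\mc{O}(1),\nonumber
\end{align}
where $\mathrm{Stab}=\sum_{t}\big(\langle\hat{\bs{\ell}}(t)-\bs{q}(t),\bs{x}(t)-\bs{x}(t+1)\rangle-D_{\psi_{t}}(\bs{x}(t+1),\bs{x}(t))\big)$ ($D_{\psi_{t}}$ the Bregman divergence of $\psi_{t}$) and $\mathrm{Drift}=\sum_{t}\sum_{i}(\beta_{i}(t+1)-\beta_{i}(t))\big(\varphi_{i}(\tilde{x}_{i}^{*})-\varphi_{i}(x_{i}(t+1))\big)$. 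Keeping $\mathrm{Drift}$ paired is the key point: for $i\in I_{\bs{a}^{*}}$ both $\tilde{x}_{i}^{*}$ and $x_{i}(t+1)$ sit near $a_{i}^{*}$, so these terms nearly cancel, and this is why coordinates outside $J^{*}$ do not appear in the final sum.

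\textbf{Step 2 (bounding $\mathrm{Stab}$ and $\mathrm{Drift}$).} A direct computation gives $\varphi_{i}''(z)=\tfrac{n_{i}}{z^{2}}+\tfrac{\log T}{n_{i}-z}$. Using the one-step FTRL stability estimate $x_{i}(t+1)=\Theta(x_{i}(t))$ (from \eqref{prob:OFTRL} and strong convexity of $\psi_{t}$), each summand of $\mathrm{Stab}$ is at most a constant times $(\hat{\ell}_{i}(t)-q_{i}(t))^{2}/(\beta_{i}(t)\varphi_{i}''(x_{i}(t)))$; substituting $\hat{\ell}_{i}(t)-q_{i}(t)=\tfrac{a_{i}(t)}{x_{i}(t)}(k_{i}(t)-q_{i}(t))$ and keeping whichever of the two lower bounds on $\varphi_{i}''$ is larger reproduces exactly the quantity $\big(\tfrac{a_{i}(t)}{n_{i}}\big)^{2}(k_{i}(t)-q_{i}(t))^{2}\min\{1,\tfrac{2(1-x_{i}(t)/n_{i})}{(x_{i}(t)/n_{i})^{2}\log T}\}$ that appears under the root in \eqref{beta_definition}. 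Since $\beta_{i}(t)^{2}$ equals $\beta_{i}(1)^{2}$ plus $\tfrac1{\log T}$ times the running sum of these quantities, the standard estimate $\sum_{t}(\beta_{i}(t+1)^{2}-\beta_{i}(t)^{2})/\beta_{i}(t)\le\mc{O}(\beta_{i}(T))$ bounds $\mathrm{Stab}$ by $\mc{O}\big(\log T\sum_{i}n_{i}\beta_{i}(T)\big)$. For $\mathrm{Drift}$, a mean-value bound $|\varphi_{i}(\tilde{x}_{i}^{*})-\varphi_{i}(x_{i}(t+1))|\le\varphi_{i}'(\zeta)\,|\tilde{x}_{i}^{*}-x_{i}(t+1)|$ together with $\sum_{t}(\beta_{i}(t+1)-\beta_{i}(t))\le\beta_{i}(T)$ gives a term of the same $\mc{O}(\log T\,\beta_{i}(T))$ form for $i\in J^{*}$ (coming from the barrier), and for $i\in I_{\bs{a}^{*}}$ a term controlled by $\beta_{i}(T)\sup_{t}|x_{i}(t)-a_{i}^{*}|$. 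The fixed part $\psi_{1}(\tilde{\bs{x}}^{*})-\min\psi_{1}$ is $\mc{O}(\log T\cdot\mathrm{poly}(d))$ because $\beta_{i}(1)=1+\epsilon_{i}/n_{i}=\mc{O}(1)$ and $\varphi_{i}(\tilde{x}_{i}^{*})=\mc{O}(n_{i}\log T)$.

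\textbf{Step 3 (self-bounding and the LS/GD constant).} In the stochastic regime with linear losses, decomposing $\bs{x}(t)$ into actions in $\mc{A}$ and using that any action with $a_{i}\ge1$ for some $i\in J^{*}$ is suboptimal by at least $\Delta_{i}$ gives, for each $i\in J^{*}$, the lower bound $R_{T}\ge\mathbb{E}\big[\sum_{t}\tfrac{\Delta_{i}}{n_{i}}x_{i}(t)\big]$, i.e.\ $\mathbb{E}[\sum_{t}x_{i}(t)]\le\tfrac{n_{i}}{\Delta_{i}}R_{T}$. On the other hand, conditioning on the pulls of arm $i\in J^{*}$ and using that $q_{i}(t)$ tracks $\ell_{i}$, the expected accumulated squared prediction error on arm $i$ is $\mc{O}\big(\tfrac{\sigma_{i}^{2}}{n_{i}^{2}}\mathbb{E}[\sum_{t}x_{i}(t)]\big)$ plus lower-order terms; here the least-squares rule \eqref{LS_optpred} is the exact running mean and loses no constant, while the exponential-moving-average rule \eqref{GD_optpred} loses a factor $\tfrac{1}{1-2\eta}$ at this step, which is the only source of the discrepancy between the two bounds. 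Plugging these into Step 2 (Jensen to pull the expectation inside the roots, and a Cauchy--Schwarz over $i\in J^{*}$ that bundles $\sum_{i\in J^{*}}n_{i}=W_{J^{*}}$, the alternative grouping via $\sum_{i\in J^{*}}x_{i}(t)\le M$ yielding the $\lambda_{\mc{A}}=\min\{M,W_{J^{*}}\}$ factor) produces a self-referential inequality of the shape $R_{T}\le C\log T+\sqrt{R_{T}}\sqrt{\lambda_{\mc{A}}\log T\sum_{i\in J^{*}}\tfrac{\sigma_{i}^{2}}{\Delta_{i}}}$; solving via $x\le a+\sqrt{bx}\Rightarrow x\le2a+b$ gives $R_{T}=\mc{O}\big(\sum_{i\in J^{*}}\tfrac{\lambda_{\mc{A}}\sigma_{i}^{2}}{\Delta_{i}}\log T\big)$, as claimed.

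\textbf{The main obstacle.} The delicate part is making the $i\in I_{\bs{a}^{*}}$ analysis airtight: $x_{i}(t)$ concentrates near $a_{i}^{*}$ only because the algorithm is already doing well, so the $\mathrm{Drift}$ bound $\beta_{i}(T)\sup_{t}|x_{i}(t)-a_{i}^{*}|$ must itself be folded back into the self-bounding (its deviation is again paid for by the regret), and this has to be balanced against $\beta_{i}(T)$ possibly being as large as $\tilde{\Theta}(\sqrt{T})$ when $a_{i}^{*}\in(0,n_{i})$ and $\sigma_{i}>0$ --- using that the regularizer curvature $\varphi_{i}''(a_{i}^{*})=\Theta(\log T)$ at an interior point suppresses the corresponding stability mass --- so that the whole $I_{\bs{a}^{*}}$ block contributes only $\mc{O}(\log T)$ and no spurious $\sqrt{T}$ survives. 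Getting this interplay with the adaptive $\beta_{i}(t)$ exactly right is where the real work lies; the rest is bookkeeping.
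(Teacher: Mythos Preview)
Your overall architecture (OFTRL decomposition $+$ self-bounding against a regret lower bound) is the same as the paper's, and Step~1 plus the stability analysis in Step~2 are essentially correct. But two of your ingredients are off, and the one you flag as ``the main obstacle'' is exactly where the real gap lies.

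\textbf{The drift term is just nonpositive.} In the paper's penalty decomposition, the telescoping part $\sum_t\big(\psi_t(\bs{y}(t+1))-\psi_{t+1}(\bs{y}(t+1))\big)$ is immediately $\le 0$ because $\beta_i(t)$ is nondecreasing and $\varphi_i\ge 0$. There is no need to ``keep Drift paired'' with the comparator. The penalty contribution is simply $\psi_{T+1}(\tilde{\bs{x}}^{*})\le\gamma\sum_{i}n_i\beta_i(T{+}1)$, so coordinates $i\in I_{\bs{a}^*}$ \emph{do} appear in the sum --- they are not cancelled away by any pairing.

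\textbf{The $I^{*}$ analysis does not work via $\sup_t|x_i(t)-a_i^{*}|$.} What makes the optimal-arm block benign is a completely different mechanism you have missed: the factor $\min\{1,\,2(1-x_i(t)/n_i)/((x_i(t)/n_i)^{2}\log T)\}$ sitting inside $\alpha_i(t)$ in \eqref{beta_definition}. The paper's Lemma (the analogue of Lemma~6) shows $\mathbb{E}[\alpha_i(t)\mid x_i(t)]\le (\sigma_i/n_i)^{2}\cdot 2(n_i-x_i(t))/\sqrt{\log T}$, so $\beta_i(T{+}1)$ for $i\in I^{*}$ is bounded in terms of $Q_i:=\mathbb{E}\big[\sum_t(n_i-x_i(t))\big]$ with an extra $1/\sqrt{\log T}$ savings. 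Combined with the companion lower bound $R_T\gtrsim\frac{1}{\lambda'_{\mc A}}\sum_{i\in I^{*}}\Delta'_{i}Q_i$, the self-bounding ($R_T/\log T=2R_T/\log T-R_T/\log T$, then maximize over $Q_i$) makes the entire $I^{*}$ contribution $\mc O(\sqrt{\log T})$, i.e.\ lower order. Your proposed control via $\beta_i(T)\sup_t|x_i(t)-a_i^{*}|$ has no such savings and cannot close the loop when $\beta_i(T)=\tilde\Theta(\sqrt{T})$ --- the very worry you voice is fatal to your route, and the fix is structural (use the min-factor), not more self-bounding.

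\textbf{Secondary point on the self-bounding shape.} The paper does not produce $R_T\le C\log T+\sqrt{R_T}\sqrt{\cdots}$ via Cauchy--Schwarz. Instead it writes $R_T/\gamma=2R_T/\gamma-R_T/\gamma$, plugs the upper bound $\sum_{i\in J^{*}}\bar f_i(P_i/\gamma)+\cdots$ and the lower bound $\frac{1}{\lambda_{\mc A}}\sum_{i\in J^{*}}\Delta_iP_i$, and then maximizes $2\bar f_i(x)-\tfrac{\Delta_i}{\lambda_{\mc A}}x$ over $x$ for each $i\in J^{*}$ \emph{separately}. This directly yields $\sum_{i\in J^{*}}\lambda_{\mc A}\sigma_i^{2}/\Delta_i$, with the factor $\lambda_{\mc A}=\min\{M,W_{J^{*}}\}$ coming from the joint lower-bound lemma (that either $\sum_{i\in J^{*}}a_i(t)\le M$ or $\le W_{J^{*}}$), not from a Cauchy--Schwarz on the upper-bound side. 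Your Cauchy--Schwarz route can introduce spurious factors (e.g.\ $W_{J^{*}}$ or $|J^{*}|$) that the coordinate-wise maximization avoids.
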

We can see that both optimistic prediction methods achieve $\mathcal{O}\left( \log T \right)$ variance-dependent regret bound. Variance dependency is a clear advantage since the variances of losses for each base arm are extremely small in many real-world applications \citep{TsuchiyaAISTATS2023,KomiyamaNeurIPS2017,GyorgyTheLearningTheory2006}.
The upper bound of the ordinary LBINFV algorithm with the duplicating technique is $\mathcal{O} \left( \sum_{i \in J^{*}} b_{i} \frac{\lambda_{\mathcal{A}}' \sigma^{2}_{i} }{\Delta_{i}} \log T \right)$, where $\lambda_{\mathcal{A}}' = \min\{ M, \sum_{i = 1}^{d} b_{i} - \left\| \bs{a}^{*} \right\|_{1} \}$. 
In the example of OT, when $u_{x}$'s are large, $\sum_{i = 1}^{d} b_{i} - \|\bs{a}^{*}\|_{1} = \left(\sum_{x = 1}^{N_{\mathrm{sup}}} u_{x}\right) \times N_{\mathrm{dem}} - \|\bs{a}^{*}\|_{1}$ is much larger than $M$, and we have $\lambda_{\mathcal{A}}' = M$. Therefore, $\lambda_{\mathcal{A}} = \min\{M, W_{J^{*}}\} \leq  M = \lambda_{\mathcal{A}}'$, which implies that the upper bound of GenLBINFV is no looser than that of the LBINFV since $b_{i} \geq 1$.

Next, we show regret upper bounds of the GenLBINFV algorithm for each optimistic prediction method under the adversarial regime. 
Let us denote the cumulative loss of the optimal action, total quadratic variation in loss sequence, and path-length of loss sequence by $L^{*} = \min\limits_{\bs{a} \in \mathcal{A}} \mathbb{E} [ \sum_{t = 1}^{T} \textcolor{red}{\sum_{i = 1}^{d}} \sum_{j = 1}^{a_{i}(t)} L_{i, j}(t) ]$, $ Q_{2} =  \mathbb{E}[ \sum_{t = 1}^{T} \| \bs{k}(t) - \frac{1}{T} \sum_{s = 1}^{T} \bs{k}(s) \|^{2}_{2} ]  $, and $ V_{1} = \mathbb{E}[ \sum_{t = 1}^{T - 1} \| \bs{k}(t) - \bs{k}(t + 1) \|_{1} ] $, respectively, to introduce the data-dependent bound of the GenLBINFV algorithm.
\begin{theorem} \label{GenLBINFV_Adversarial_Theorem}
    Regret upper bounds of GenLBINFV using the LS and GD methods in the adversarial regime are $\mc{O}\left( \sqrt{  \left(\sum_{i = 1}^{d} n^{2}_{i}\right)  Z^{\mathrm{LS}} \log T } \right)$ and $\mc{O}\left( \sqrt{ \frac{1}{1 - 2\eta} \left( \sum_{i = 1}^{d} n^{2}_{i} \right) Z^{\mathrm{GD}} \log T }  \right)$, respectively, where $Z^{\mathrm{LS}} = \min\{ L^{*}, MT - L^{*}, Q_{2} \}$ and $Z^{\mathrm{GD}} = \min\{ L^{*}, MT - L^{*}, Q_{2}, \frac{V_{1}}{\eta} \}$.
\end{theorem}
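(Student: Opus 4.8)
The plan is to run the by-now-standard analysis of optimistic follow-the-regularized-leader \eqref{prob:OFTRL} with an adaptive, hybrid log-barrier/negative-entropy regularizer, tracking throughout the dependence on the multiplicities $n_i$; structurally this generalizes the LBINFV proof of \citet{TsuchiyaAISTATS2023}, which is the special case $n_i=1$ for all $i$. First I would shift the comparator: replace $\bs{a}^{*}$ by $\tilde{\bs{x}}^{*}=(1-\tfrac1T)\bs{a}^{*}+\tfrac1T\bs{x}_{0}$ for a fixed interior point $\bs{x}_{0}\in\mathcal{X}$, so that every coordinate of $\tilde{\bs{x}}^{*}$ is bounded away from $0$ and from $n_i$ by $\Omega(1/T)$; boundedness of the losses makes this shift cost only $O(M)$ additive regret. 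Using the unbiasedness of $\hat{\bs\ell}(t)$ noted in the excerpt, the expected regret against $\tilde{\bs{x}}^{*}$ equals $\mathbb{E}\big[\sum_{t}\langle\hat{\bs\ell}(t),\bs{x}(t)-\tilde{\bs{x}}^{*}\rangle\big]$, and the standard OFTRL inequality bounds this (in expectation) by
\[
\underbrace{\psi_{T+1}(\tilde{\bs{x}}^{*})-\min_{\bs{x}\in\mathcal{X}}\psi_{1}(\bs{x})}_{\text{(penalty)}}
\;+\;\underbrace{\sum_{t=1}^{T}\Big(\big\langle\hat{\bs\ell}(t)-\bs{q}(t),\,\bs{x}(t)-\bs{x}(t+1)\big\rangle-D_{\psi_{t}}\big(\bs{x}(t+1),\bs{x}(t)\big)\Big)}_{\text{(stability)}}\;+\;O\!\Big(\sum_{i}n_{i}\log T\Big),
\]
where the term coming from the time-varying regularizer $\sum_{t}(\psi_{t}-\psi_{t+1})(\bs{x}(t+1))$ is nonpositive because $\beta_i(t)$ is nondecreasing and $\varphi_i\ge0$ (which one checks directly from \eqref{phi_definition} using $w-1-\log w\ge0$ and $-w\log(1-w)\ge0$ for $w=z/n_i\in[0,1]$), and the last $O(\sum_i n_i\log T)$ collects lower-order initialization constants.

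Next I would show that both (penalty) and (stability) are $O\big(\log T\sum_{i}n_{i}\beta_{i}(T+1)\big)$. For (penalty), $\psi_{T+1}(\tilde{\bs{x}}^{*})=\sum_{i}\beta_{i}(T+1)\varphi_{i}(\tilde{x}^{*}_{i})$, and on the shifted comparator the barrier term $-n_i\log(\tilde{x}^{*}_i/n_i)$ and the entropy term are each $O(n_i\log T)$; the $\log T$ factor placed inside $\varphi_i$ in \eqref{phi_definition} and the $1/\log T$ placed inside $\beta_i(t)$ in \eqref{beta_definition} are calibrated precisely so that this stays a single $\log T$. For (stability), the Bregman divergence $D_{\psi_t}$ is strongly convex in the local norm induced by the Hessian $\mathrm{diag}(\beta_i(t)\varphi_i''(x_i(t)))$, with $\varphi_i''(z)\ge n_i/z^2$ from the barrier and an additional $\Omega\big(1/(n_i(1-z/n_i))\big)$ from the entropy piece — the latter supplying curvature exactly in the regime $x_i(t)/n_i\to1$ where the barrier is weak. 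The usual argument that $\bs{x}(t+1)$ stays in a neighbourhood of $\bs{x}(t)$ (this is where the truncation $\min\{1,\tfrac{2(1-x_i/n_i)}{(x_i/n_i)^2\log T}\}$ in \eqref{beta_definition} is used, both to keep $\beta_i$ from growing too fast and to guarantee the neighbourhood) then gives, with $\hat\ell_i(t)-q_i(t)=\tfrac{a_i(t)}{x_i(t)}(k_i(t)-q_i(t))$, a per-round stability contribution of order $n_i\log T\cdot\gamma_i(t)/\beta_i(t)$, where $\gamma_i(t)=\tfrac{1}{\log T}(a_i(t)/n_i)^2(k_i(t)-q_i(t))^2\min\{1,\tfrac{2(1-x_i(t)/n_i)}{(x_i(t)/n_i)^2\log T}\}$, so that $\beta_i(t)^2=(1+\epsilon_i/n_i)^2+\sum_{s<t}\gamma_i(s)$. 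A telescoping/integral bound of the form $\sum_t\gamma_i(t)/\beta_i(t)=O(\beta_i(T+1))$ then makes (stability) $=O(\log T\sum_i n_i\beta_i(T+1))$, hence $R_T=O\big(\log T\sum_i n_i\beta_i(T+1)\big)+O(M)$.

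The next step is to convert $\sum_i n_i\beta_i(T+1)$ into the advertised $\sqrt{\cdot}$ form. By $\sqrt{u+v}\le\sqrt u+\sqrt v$ and Cauchy--Schwarz over the coordinates,
\[
\sum_{i=1}^{d}n_{i}\beta_{i}(T+1)\;\le\;\sum_{i=1}^{d}(n_{i}+\epsilon_{i})\;+\;\Big(\sum_{i=1}^{d}n_{i}^{2}\Big)^{1/2}\Big(\sum_{i=1}^{d}\sum_{s=1}^{T}\gamma_{i}(s)\Big)^{1/2},
\]
and since $a_i(s)\le n_i$ and the truncation is at most $1$, $\sum_{i,s}\gamma_i(s)\le\tfrac{1}{\log T}\sum_{i,s}(k_i(s)-q_i(s))^2$. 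It therefore remains to bound $\sum_{i,s}(k_i(s)-q_i(s))^2$ by (a constant times) $Z^{\mathrm{LS}}\log T$ for the LS predictor and by $Z^{\mathrm{GD}}\log T$ for GD, after which plugging back yields $R_T=O\big(\sqrt{(\sum_i n_i^2)\,Z\,\log T}\big)$ plus the lower-order $O(\log T\sum_i n_i)+O(M)$. The three universal bounds are obtained coordinatewise: the $Q_2$ bound uses the known property that a running-average predictor (the LS choice \eqref{LS_optpred}, and the discounted average \eqref{GD_optpred} for GD) satisfies $\sum_t(k_i(t)-q_i(t))^2\le O(\log T)\min_{\mu_i}\sum_t(k_i(t)-\mu_i)^2$; the $L^{*}$ bound uses $(k_i(s)-q_i(s))^2\le\max\{k_i(s),q_i(s)\}$ together with the fact that $q_i(s)$ unrolls into a normalized sum of past observed losses, so that the double sum is controlled by $\mathbb{E}[\sum_t\sum_i\sum_j L_{i,j}(t)]$ evaluated at the comparator scale; the $MT-L^{*}$ bound is the symmetric statement applied to $1-L_{i,j}(t)$. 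For GD one has in addition the path-length bound: iterating $q_i(t+1)-k_i(t+1)=(1-\eta)\big(q_i(t)-k_i(t)\big)+\big(k_i(t)-k_i(t+1)\big)$ gives a geometric series dominated by $\tfrac1\eta\sum_t|k_i(t)-k_i(t+1)|$, producing the $V_1/\eta$ entry of $Z^{\mathrm{GD}}$; the discounting also inflates the local-norm/variance estimates by the factor $\tfrac{1}{1-2\eta}$, which is exactly the extra constant in the GD bound.

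The step I expect to be the main obstacle is the stability analysis together with the exact form of the truncation in $\beta_i(t)$: one must verify that $\min\{1,\tfrac{2(1-x_i/n_i)}{(x_i/n_i)^2\log T}\}$ is simultaneously small enough that $\beta_i(t)$ cannot grow too quickly (so that $\sum_t\gamma_i(t)/\beta_i(t)=O(\beta_i(T+1))$ and the Hessian-to-Bregman transfer is valid on the required neighbourhood) and large enough that, combined with the entropy part of $\varphi_i$, it still dominates the genuine curvature deficit near $x_i/n_i=1$. Pinning down the constants in \eqref{phi_definition} and \eqref{beta_definition} so that these two requirements hold simultaneously for every $n_i$, and then re-deriving the three data-dependent bounds on $\sum_{i,s}(k_i(s)-q_i(s))^2$ with the correct $n_i$-scaling, is the bulk of the work; the remaining pieces parallel the $n_i=1$ argument of \citet{TsuchiyaAISTATS2023}.
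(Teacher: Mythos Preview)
Your outline matches the paper's route: OFTRL decomposition into penalty and stability, the bound $R_T=O\big(\gamma\sum_i n_i\,\beta_i(T{+}1)\big)$ (this is the paper's Lemma~\ref{Lemma3}), Cauchy--Schwarz across coordinates to pull out $\sqrt{\sum_i n_i^2}$, and then a predictor-error bound for $\sum_{i,t}(k_i(t)-q_i(t))^2$. Three points deserve correction.

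\textbf{Additive, not multiplicative, online-regression regret.} For LS the relevant property is not $\sum_t(k_i-q_i)^2\le O(\log T)\min_{\mu}\sum_t(k_i-\mu)^2$ but the \emph{additive} Azoury--Warmuth/FTL type bound (the paper's Lemmas~\ref{Lemma4}--\ref{Lemma5}): $\sum_t (a_i/n_i)^2(k_i-q_i)^2 \le \sum_t (a_i/n_i)^2(k_i-m^{*})^2 + O(\log T)$ for \emph{any} $m^{*}\in[0,1]$. Consequently the correct intermediate target is $\sum_{i,t}(k_i-q_i)^2 = O(Z)+O(d\log T)$, not $O(Z\log T)$; combined with the $1/\log T$ already inside $\beta_i$ this gives the claimed $\sqrt{(\sum_i n_i^2)\,Z\,\log T}$. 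Your bookkeeping as written would produce an extra $\sqrt{\log T}$.

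\textbf{The $L^{*}$ and $MT-L^{*}$ bounds need a self-bounding step.} The paper does not bound $(k_i-q_i)^2$ directly; it uses the same additive lemma with $m^{*}=0$ (resp.\ $m^{*}=1$), then $k_i^2\le k_i$ (resp.\ $(1-k_i)^2\le 1-k_i$), which yields a bound of the form $R_T\le 2\sqrt{(\sum_i n_i^2)\,\gamma\,(R_T+L^{*})}+O(\cdot)$ and solves the quadratic in $R_T$. Your sketch (``$q_i$ unrolls into a sum of past losses \dots\ evaluated at the comparator scale'') gestures at this but omits the self-bounding, without which you get the algorithm's cumulative loss rather than $L^{*}$.

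\textbf{Stability is handled by exact conjugate computation, not a neighbourhood argument.} The paper computes $\max_{y}\{a(x-y)-D_i^{(1)}(y,x)\}=n_i\,g(ax/n_i)$ and $\max_{y}\{a(x-y)-D_i^{(2)}(y,x)\}=n_i(1-x/n_i)\,h(a)$ (Lemma~\ref{Lemma5_Ito}), takes the smaller, and then bounds $g(u)\le u^2/2+\delta_i|u|^3$ for $u\ge -1/\beta_i(1)$ and $h(u)\le u^2$ for $u\le 1$; these ranges are guaranteed simply by $\beta_i(1)\ge 1$. The $\min\{1,\,2(1-x_i/n_i)/((x_i/n_i)^2\gamma)\}$ factor in $\alpha_i(t)$ then arises as the smaller of the two resulting quadratic bounds. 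So the ``main obstacle'' you anticipate --- verifying a neighbourhood/local-norm condition compatible with the truncation --- is not actually where the work lies; the hybrid regularizer makes the stability term tractable by direct calculation.
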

$Z^{\mathrm{LS}}$ and $Z^{\mathrm{GS}}$ can be seen as indicators of the problem. If the problem is relatively easy and can be assumed to be $\mathcal{O}\left(Z^{\mathrm{LS}}\right) = \mathcal{O}\left(Z^{\mathrm{GD}}\right) = o\left(T\right)$, the GenLBINFV can achieve a much smaller bound than the the worst case bound $\mathcal{O}\left( \sqrt{ M \left(\sum_{i =1}^{d} b_{i} \right) T} \right)$. 
The upper bound of the ordinary LBINFV using LS and GD methods are $\mc{O}\left( \sqrt{  \left(\sum_{i = 1}^{d} b_{i} \right)  Z^{\mathrm{LS}} \log T } \right)$ and $\mc{O}\left( \sqrt{ \frac{ 1 }{1 - 2\eta} \left( \sum_{i = 1}^{d} b_{i} \right) Z^{\mathrm{GD}} \log T }  \right)$, respectively. In general, we do not know whether $\sum_{i = 1}^{d} {n_{i}}^{2}$ is smaller than $\sum_{i = 1}^{d} b_{i}$ or not. \par
On the other hand, GenLBINFV is computationally friendlier than LBINFV with the duplicating technique, since when calling the oracle to compute (\ref{prob:OFTRL}), the time complexity of GenLBINFV in each round is $\mathcal{O}(\mathrm{Poly}(d))$, which can be much smaller than the time complexity of LBINFV with the duplicating technique, $\mathcal{O}(\mathrm{Poly}(\sum_{i = 1}^{d} b_{i}))$.

\paragraph{Regret Upper Bound of the Intermediate Regime}
We have the following theorem for the intermediate regime.
\begin{theorem}
    In the stochastic regime with adversarial corruptions, upper bounds of GenLBINFV using the LS and GD methods are $\mathcal{O}\left(R^{\mathrm{LS}} + \sqrt{CM R^{\mathrm{LS}}}\right)$ and $\mathcal{O}\left(R^{\mathrm{GD}} + \sqrt{CMR^{\mathrm{GD}}}\right)$, respectively. Here, $R^{\mathrm{LS}} = \mc{O}\left( \sum_{i \in J^{*}} \frac{\lambda_{\mc{A}} \sigma^{2}_{i}}{ \Delta_{i} } \log T \right)$ and $R^{\mathrm{GD}} = \mc{O}\left( \frac{1}{1 - 2\eta} \sum_{i \in J^{*}} \frac{\lambda_{\mc{A}} \sigma^{2}_{i}}{ \Delta_{i} } \log T \right)$.
\end{theorem}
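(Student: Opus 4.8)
The plan is to prove this by the usual self-bounding argument for best-of-both-worlds guarantees under corruption: play the data-dependent (adversarial-style) upper bound from the proof of Theorem~\ref{GenLBINFV_Adversarial_Theorem} against the stochastic self-bounding lower bound behind Theorem~\ref{GenLBINFV_Stochastic_Theorem}, carrying the corruption along as a bounded perturbation. Let $\hat R_T := \mathbb{E}[\sum_{t=1}^{T} \langle \bs{\ell}, \bs{x}(t) - \bs{a}^{*}\rangle]$ be the regret measured against the \emph{uncorrupted} means, where $\bs{a}^{*}$ is the optimal action of the underlying stochastic instance (the algorithm still observes the corrupted losses $L_{i,j}(t)$). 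First I would control the gap between $R_T$ and $\hat R_T$: since $f$ is linear and $a_i(t), a^{*}_i \le n_i$, for any fixed $\bs{a}$ we have $|f(\bs{a}, \bs{L}^{1}(t),\dots) - f(\bs{a}, {\bs{L}'}^{1}(t),\dots)| \le \|\bs{a}\|_1 \max_{i,j}|L_{i,j}(t)-L'_{i,j}(t)|$, and likewise $\mathbb{E}[\hat\ell_i(t)\mid\bs{x}(t)]$ differs from $\ell_i$ by at most the conditional expectation of $\max_{i,j}|L_{i,j}(t)-L'_{i,j}(t)|$; summing over $t$, using $\|\bs{a}\|_1\le M$ and the definition of $C$ gives $|R_T-\hat R_T|\le\mathcal{O}(MC)$, and — more importantly — that the entire OFTRL analysis may be carried out against $\bs{\ell}$ at an additive cost of $\mathcal{O}(MC)$.

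Second, I would re-run the regret decomposition of Theorem~\ref{GenLBINFV_Adversarial_Theorem} but stop before replacing the data-dependent quantity by its worst case, keeping it in the ``penalty $+$ stability'' form $\hat R_T \le \mathbb{E}[\sum_{i\in J^{*}} c\,n_i\beta_i(T)\log T] + \mathcal{O}(MC) + \tilde{\mathcal{O}}(1)$ for an absolute constant $c$ (the sum restricts to $J^{*}$ because the penalty on arms in $I_{\bs{a}^{*}}$ is absorbed by the leading-leader term, as in the stochastic analysis). Using Jensen and \eqref{beta_definition}, $\mathbb{E}[\beta_i(T)]$ is bounded by the square root of $(1+\epsilon_i/n_i)^2$ plus $\tfrac{1}{\log T}\sum_{s}\mathbb{E}[(\tfrac{a_i(s)}{n_i})^2(k_i(s)-q_i(s))^2\min\{1,\cdot\}]$, and I would split $\mathbb{E}[(k_i(s)-q_i(s))^2\mid\mathcal{F}_{s-1}]$ into a \emph{clean} part $\lesssim \sigma_i^2/a_i(s) + (q_i(s)-\ell_i)^2$ and a \emph{corruption} part $\lesssim \mathbb{E}[\delta_s^2\mid\mathcal{F}_{s-1}]$ with $\delta_s:=\max_{i,j}|L_{i,j}(s)-L'_{i,j}(s)|$. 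For the clean part, $q_i(s)$ is the empirical mean (LS) or its exponentially weighted version (GD), so $\sum_s(q_i(s)-\ell_i)^2$ is $\tilde{\mathcal{O}}(\sigma_i^2)$, up to the $1/(1-2\eta)$ factor for GD, and the truncation $\min\{1,\cdot\}$ with $\mathbb{E}[a_i(s)^2\mid\bs{x}(s)]\le n_i x_i(s)$ makes the weighted clean sum $\tilde{\mathcal{O}}(\sigma_i^2\bar x_i/n_i^{2})$, where $\bar x_i:=\mathbb{E}[\sum_t x_i(t)]$; the weighted corruption sum telescopes to $\mathcal{O}(C)$ since $\delta_s\le 1$. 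Aggregating over $i\in J^{*}$ via Cauchy--Schwarz and using $\sum_{i\in J^{*}}x_i(t)\le\min\{M,W_{J^{*}}\}=\lambda_{\mathcal{A}}$ (as $x_i(t)\le n_i$ and $\|\bs{x}(t)\|_1\le M$) then yields
\begin{align}
\hat R_T \;\le\; \mathbb{E}\!\left[\Phi\big(\{\bar x_i\}_{i\in J^{*}}\big)\right] \;+\; \mathcal{O}\!\left(\sqrt{\lambda_{\mathcal{A}}\,C\,M\log T}\right) \;+\; \mathcal{O}(MC) \;+\; \tilde{\mathcal{O}}(1),
\end{align}
where $\Phi$ is an explicit concave function, increasing in each $\bar x_i$, that recovers the route to Theorem~\ref{GenLBINFV_Stochastic_Theorem} when $C=0$.

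Third, I would pair this with the stochastic self-bounding lower bound: writing $\bs{x}(t)$ as a convex combination of actions and using that an action with $a_i\ge 1$ for some $i\in J^{*}$ has gap at least $\Delta_i$ while touching at most $\lambda_{\mathcal{A}}$ arm-units inside $J^{*}$, one gets $\hat R_T \ge \tfrac{1}{\lambda_{\mathcal{A}}}\sum_{i\in J^{*}}\Delta_i\bar x_i$ (this uses only $\bs{x}(t)\in\mathcal{X}$, so no corruption correction). Then, with a free parameter $\rho>0$, I would write $R_T\le\hat R_T+\mathcal{O}(MC)$ and $\hat R_T\le(1+\rho)\hat R_T-\rho\hat R_T\le(1+\rho)\mathbb{E}[\Phi(\{\bar x_i\})]-\tfrac{\rho}{\lambda_{\mathcal{A}}}\sum_{i\in J^{*}}\Delta_i\bar x_i+(1+\rho)\,\mathcal{O}(\sqrt{\lambda_{\mathcal{A}}CM\log T}+MC)+\tilde{\mathcal{O}}(1)$, maximize the $\Phi$-versus-linear expression over each $\bar x_i\ge 0$ by AM--GM, and then optimize over $\rho$. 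With $\rho=\Theta(1)$ the clean part collapses to $\mathcal{O}(R^{\mathrm{LS}})$; letting $\rho$ grow when $C$ is large, the corruption terms balance against the $\tfrac{\rho}{\lambda_{\mathcal{A}}}\sum\Delta_i\bar x_i$ slack and collapse to $\mathcal{O}(\sqrt{CMR^{\mathrm{LS}}})$ (using $a+\sqrt{ab}\asymp\max\{a,\sqrt{ab}\}$). This gives the LS bound; the GD bound is identical except that the clean part and $R^{\mathrm{GD}}$ carry the $1/(1-2\eta)$ factor, exactly as in Theorems~\ref{GenLBINFV_Stochastic_Theorem} and~\ref{GenLBINFV_Adversarial_Theorem}, and the path-length refinement plays no role here.

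The main obstacle is the second step: propagating the per-round corruption $\delta_s$ through the unbiased estimator $\hat{\bs{\ell}}$, the optimistic prediction $\bs{q}$, and the adaptive weights $\{\beta_i(t)\}$ so that the corruption contributions aggregate with exactly the $M$ and $\lambda_{\mathcal{A}}$ factors that let them collapse to $\mathcal{O}(\sqrt{CMR^{\mathrm{LS}}})$, rather than a looser $\sqrt{(\sum_i n_i^{2})C\log T}$ or $MC$. In particular, multi-sample averaging gives the clean variance a $1/a_i(s)$ gain that the per-round corruption $\delta_s$ does not enjoy, so the clean and corruption contributions must be kept separate throughout the sum over $s$, and one must check that the bound degrades gracefully to the $\tilde{\mathcal{O}}(\sqrt{T})$ adversarial bound of Theorem~\ref{GenLBINFV_Adversarial_Theorem} when $C=\Theta(T)$.
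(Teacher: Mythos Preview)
Your proposal follows essentially the same self-bounding route as the paper: bound the OFTRL regret by $\gamma\sum_i n_i\beta_i(T{+}1)$, control $\beta_i(T{+}1)$ through $\sum_t\alpha_i(t)$ split into a clean $\sigma_i^2 P_i$-type piece and a corruption piece of total size $\mathcal{O}(C)$, pair the result with the gap-based lower bound (which under corruption becomes $R_T\ge\frac{1}{\lambda_{\mathcal{A}}}\sum_{i\in J^*}\Delta_i P_i - 2CM$, i.e.\ your $\hat R_T$ step), and then optimize a free scalar to balance the two. The paper's parameter is $\chi\in(0,1]$, with the explicit choice $\chi=\sqrt{\gamma\Sigma/(\gamma\Sigma+2CM)}$ where $\Sigma\asymp\sum_{i\in J^*}\lambda_{\mathcal{A}}\sigma_i^2/\Delta_i$; your $\rho$ plays the same role, and the GD case differs only by the $1/(1-2\eta)$ factor, as you say.

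One technical point where your sketch is looser than the paper: you split $(k_i(s)-q_i(s))^2$ directly and claim $\sum_s(q_i(s)-\ell_i)^2=\tilde{\mathcal{O}}(\sigma_i^2)$. But $q_i$ is built from \emph{corrupted} feedback, so $(q_i(s)-\ell_i)^2$ itself carries corruption and that claim does not hold as stated. The paper avoids analyzing $q_i$ altogether: its key lemma for LS (and the GD analogue) gives $\sum_t(a_i/n_i)^2(k_i(t)-q_i(t))^2\le\sum_t(a_i/n_i)^2(k_i(t)-m^*)^2+\tilde{\mathcal{O}}(1)$ for \emph{any} fixed comparator $m^*$. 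Taking $m^*=\ell_i$, one then splits $(k_i(t)-\ell_i)^2$ into the uncorrupted variance $(k'_i(t)-\ell_i)^2$ and the corruption $(k_i(t)-k'_i(t))^2$, packages the latter as $P'_i:=\mathbb{E}[\sum_t(a_i(t)/n_i)^2(k_i(t)-k'_i(t))^2]$, and aggregates via Cauchy--Schwarz and $\sum_{i\in J^*}P'_i\le|J^*|\,C$. Substituting this lemma for your direct split closes the joint; the rest of your argument is correct.
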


\section{Experiments} \label{ExperimentSection}

\begin{figure}[t]
    \centering
    \begin{minipage}[]{0.49\columnwidth}
        \centering
        \includegraphics[width=0.8\linewidth]{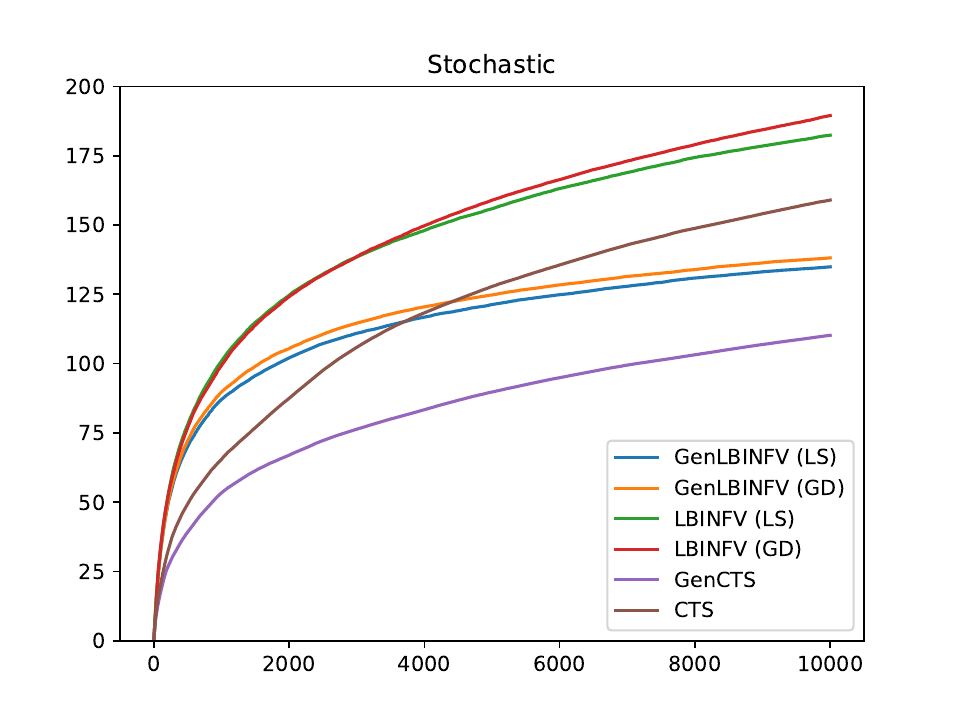}
        \caption{The result of the experiment under the stochastic regime. }
        \label{SyntheticExperimentStochastic}
    \end{minipage}
    \begin{minipage}[]{0.49\columnwidth}
        \centering
        \includegraphics[width=0.8\linewidth]{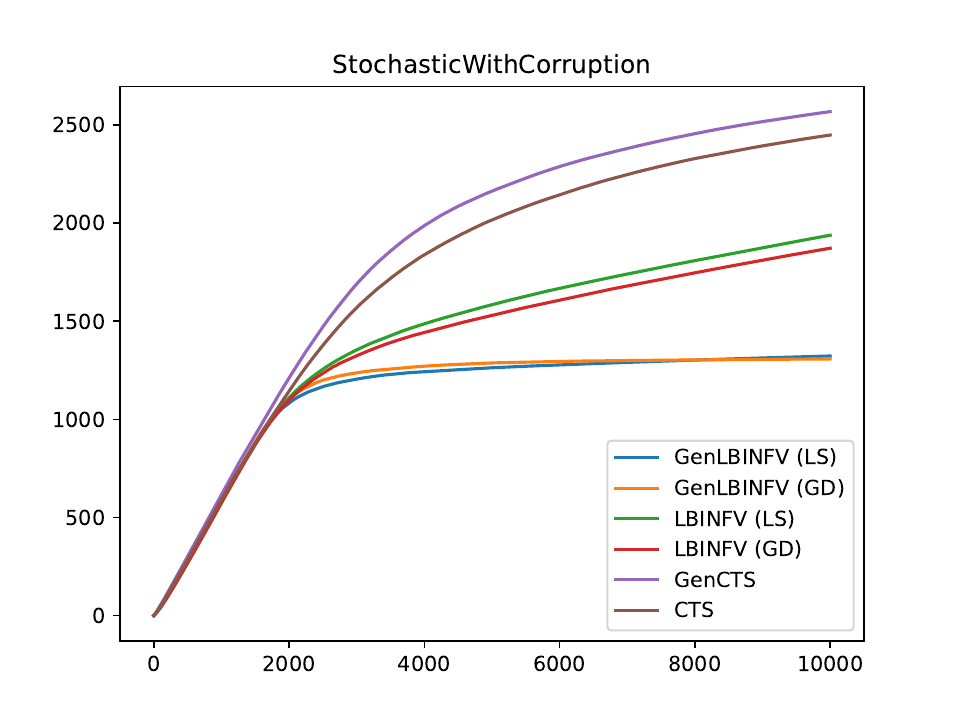}
    \caption{The result of the experiment under the stochastic regime with adversarial corruption.}
    \label{SyntheticExperimentAdversarialCorruption}
    \end{minipage}
\end{figure}

In this section, we compare the GenLBINFV and GenCTS algorithms with existing algorithms in the CSB literature with the duplicating technique, and numerically illustrate their behavior with synthetic data. \par
We use the same notation as that used in (\ref{OptimalTransportProblem}). We consider a case where $\bs{u} = (1, 4, 5)^{\top}$ and $\bs{v} = (4, 6)^{\top}$. 
In each round $t$, the environment sets a loss $\{c_{xy, j}(t) \}_{j = 1, \ldots, n}$ for each edge $(x, y)$.
Then, the player's objective is to minimize the regret defined as follows: $R_{T}=  \mathbb{E}\left[ \sum_{t = 1}^{T}  \sum_{x = 1}^{m} \sum_{y = 1}^{n} \left( \sum_{j = 1}^{a_{xy}(t)} c_{xy, j}(t) - \sum_{j = 1}^{a^{*}_{xy}} c_{xy, j}(t) \right) \right]$,
where $\bs{a}^{*} = \argmin_{\bs{a} \in \mc{A}} \mathbb{E}\left[ \sum_{t = 1}^{T} \sum_{x = 1}^{m} \sum_{y = 1}^{n} \sum_{j = 1}^{a_{xy}} c_{xy, j}(t) \right]$. \par
We generate each element of the cost matrix $\bs{c}$  uniformly from $[0.10, 0.50]$. The time horizon $T$ is set to $10000$.
For the stochastic regime, each sample from edge $(x, y)$ is from $U(0, 2c_{xy})$, where $U(a, b)$ denotes the uniform distribution on $[a, b]$. For the stochastic regime with adversarial corruption, until $t \leq 2000$, each sample from edge $(x, y)$ is drawn from $U(0, 2c_{xy})$, but when $t > 2000$, it is drawn from $U(1 - 2c_{xy}, 1)$. We compare our algorithm with the LBINFV and CTS algorithms. To apply these two methods, we use the \emph{duplicating} technique.   \par
We show the results in Figures \ref{SyntheticExperimentStochastic} and \ref{SyntheticExperimentAdversarialCorruption}. The lines indicate the average over 30 independent trials. In the stochastic regime, we can see that GenCTS and GenLBINFV algorithms outperform the CTS and LBINFV algorithms, respectively. In the stochastic regime with adversarial corruptions, while Thompson sampling-based algorithms suffer linear regret, the GenLBINFV algorithm does not. We can see that the GenLBINFV algorithm successfully converges faster than the LBINFV algorithm. 

\section{Conclusion}
In this study, we proposed the MP-CSB framework, where a player can select a non-negative integer action and observe multiple feedbacks from a single arm in each round. We proposed two algorithms for MP-CSB: GenCTS and GenLBINFV. GenCTS is computationally feasible even when the action set $\mathcal{A}$ is exponentially large in $d$, and achieves a $\mathcal{O}\left( \log T \right)$ distribution-dependent regret. GenLBINFV is a BOBW algorithm, which achieves $\mathcal{O}(\log T)$ variance-dependent regret in the stochastic regime and $\tilde{\mathcal{O}}(\sqrt{T})$ regret in the adversarial regime. We numerically showed that the proposed algorithms outperform existing methods in the CSB literature.

\bibliography{neurips_2025}

\begin{thebibliography}{41}
\providecommand{\natexlab}[1]{#1}
\providecommand{\url}[1]{\texttt{#1}}
\expandafter\ifx\csname urlstyle\endcsname\relax
  \providecommand{\doi}[1]{doi: #1}\else
  \providecommand{\doi}{doi: \begingroup \urlstyle{rm}\Url}\fi

\bibitem[Audibert and Bubeck(2009)]{AudibertCoLT2009}
Jean-Yves Audibert and S{\'e}bastien Bubeck.
\newblock Minimax policies for adversarial and stochastic bandits.
\newblock In \emph{Annual Conference Computational Learning Theory}, 2009.

\bibitem[Audibert et~al.(2014)Audibert, Bubeck, and Lugosi]{AudibertMathOR2014}
Jean-Yves Audibert, Sébastien Bubeck, and Gábor Lugosi.
\newblock Regret in online combinatorial optimization.
\newblock \emph{Mathematics of Operations Research}, 39\penalty0 (1):\penalty0 31--45, 2014.

\bibitem[Auer et~al.(2002)Auer, Cesa-Bianchi, and Fischer]{Auer2002}
Peter Auer, Nicolò Cesa-Bianchi, and Paul Fischer.
\newblock Finite-time analysis of the multiarmed bandit problem.
\newblock \emph{Machine Learning}, 47:\penalty0 235--256, 05 2002.

\bibitem[Cesa-Bianchi and Lugosi(2012)]{CesaBianchiJCSS2012}
Nicolò Cesa-Bianchi and G{\'a}bor Lugosi.
\newblock Combinatorial bandits.
\newblock \emph{Journal of Computer and System Sciences}, 78\penalty0 (5):\penalty0 1404--1422, 2012.

\bibitem[Chen et~al.(2016{\natexlab{a}})Chen, Hu, Li, Li, Liu, and Lu]{WeiChenNeurIPS2016}
Wei Chen, Wei Hu, Fu~Li, Jian Li, Yu~Liu, and Pinyan Lu.
\newblock Combinatorial multi-armed bandit with general reward functions.
\newblock In \emph{Proceedings of the 30th International Conference on Neural Information Processing Systems}, NIPS'16, page 1659–1667, Red Hook, NY, USA, 2016{\natexlab{a}}. Curran Associates Inc.
\newblock ISBN 9781510838819.

\bibitem[Chen et~al.(2016{\natexlab{b}})Chen, Wang, Yuan, and Wang]{WeiChenJMLR2016}
Wei Chen, Yajun Wang, Yang Yuan, and Qinshi Wang.
\newblock Combinatorial multi-armed bandit and its extension to probabilistically triggered arms.
\newblock \emph{J. Mach. Learn. Res.}, 17\penalty0 (1):\penalty0 1746–1778, jan 2016{\natexlab{b}}.

\bibitem[Chen et~al.(2021)Chen, Wang, Zhao, and Zheng]{WeiChenUAI2021}
Wei Chen, Liwei Wang, Haoyu Zhao, and Kai Zheng.
\newblock Combinatorial semi-bandit in the non-stationary environment.
\newblock In \emph{Proceedings of the Thirty-Seventh Conference on Uncertainty in Artificial Intelligence}, volume 161 of \emph{Proceedings of Machine Learning Research}, pages 865--875. PMLR, 27--30 Jul 2021.

\bibitem[Combes et~al.(2015)Combes, Talebi Mazraeh~Shahi, Proutiere, and Lelarge]{CombesNeurIPS2015}
Richard Combes, Mohammad~Sadegh Talebi Mazraeh~Shahi, Alexandre Proutiere, and Marc Lelarge.
\newblock Combinatorial bandits revisited.
\newblock In \emph{Neural Information Processing Systems}, pages 2116--2124, 2015.

\bibitem[Combettes and Pokutta(2021)]{CombettesPokuttaMathPro2019}
Cyrille~W. Combettes and Sebastian Pokutta.
\newblock Revisiting the approximate carath\'{e}odory problem via the frank-wolfe algorithm.
\newblock \emph{Math. Program.}, 197\penalty0 (1):\penalty0 191–214, 2021.

\bibitem[Cuturi(2013)]{CuturiNeurIPS2013}
Marco Cuturi.
\newblock Sinkhorn distances: Lightspeed computation of optimal transport.
\newblock In \emph{Advances in Neural Information Processing Systems}, volume~26. Curran Associates, Inc., 2013.

\bibitem[Dantzig and Mazur(2007)]{DantzigPPG2007}
T.~Dantzig and J.~Mazur.
\newblock \emph{Number: The Language of Science}.
\newblock A Plume book. Penguin Publishing Group, 2007.

\bibitem[Fujimoto(2016)]{FujimotoSpringer2016}
Noriyuki Fujimoto.
\newblock A pseudo-polynomial time algorithm for solving the knapsack problem in polynomial space.
\newblock In \emph{Combinatorial Optimization and Applications}, pages 624--638, Cham, 2016. Springer International Publishing.
\newblock ISBN 978-3-319-48749-6.

\bibitem[Gai et~al.(2012)Gai, Krishnamachari, and Jain]{GaiIEEETN2012}
Yi~Gai, Bhaskar Krishnamachari, and Rahul Jain.
\newblock Combinatorial network optimization with unknown variables: Multi-armed bandits with linear rewards and individual observations.
\newblock \emph{IEEE/ACM Transactions on Networking}, 20\penalty0 (5):\penalty0 1466--1478, 2012.

\bibitem[Gibbons(1985)]{GibbonsCambridgeUnivPress1985}
A.~Gibbons.
\newblock \emph{Algorithmic Graph Theory}.
\newblock Cambridge University Press, 1985.
\newblock ISBN 9780521288811.

\bibitem[Gy{\"o}rgy et~al.(2006)Gy{\"o}rgy, Linder, and Ottucs{\'a}k]{GyorgyTheLearningTheory2006}
Andr{\'a}s Gy{\"o}rgy, Tam{\'a}s Linder, and Gy{\"o}rgy Ottucs{\'a}k.
\newblock The shortest path problem under partial monitoring.
\newblock In \emph{Learning Theory}, pages 468--482, Berlin, Heidelberg, 2006. Springer Berlin Heidelberg.
\newblock ISBN 978-3-540-35296-9.

\bibitem[Hoeffding(1994)]{HoeffdingSpringer1994}
Wassily Hoeffding.
\newblock \emph{Probability Inequalities for sums of Bounded Random Variables}, pages 409--426.
\newblock Springer New York, New York, NY, 1994.
\newblock ISBN 978-1-4612-0865-5.
\newblock \doi{10.1007/978-1-4612-0865-5_26}.

\bibitem[Ito(2021)]{ItoNeurIPS2021}
Shinji Ito.
\newblock Hybrid regret bounds for combinatorial semi-bandits and adversarial linear bandits.
\newblock In \emph{Advances in Neural Information Processing Systems}, volume~34, pages 2654--2667. Curran Associates, Inc., 2021.

\bibitem[Ito and Takemura(2023{\natexlab{a}})]{ItoTakemuraCOLT2023}
Shinji Ito and Kei Takemura.
\newblock Best-of-three-worlds linear bandit algorithm with variance-adaptive regret bounds.
\newblock In \emph{Proceedings of Thirty Sixth Conference on Learning Theory}, pages 2653--2677, 2023{\natexlab{a}}.

\bibitem[Ito and Takemura(2023{\natexlab{b}})]{ItoTakemuraNeurIPS2023}
Shinji Ito and Kei Takemura.
\newblock An exploration-by-optimization approach to best of both worlds in linear bandits.
\newblock In \emph{Thirty-seventh Conference on Neural Information Processing Systems}, 2023{\natexlab{b}}.

\bibitem[Ito et~al.(2022{\natexlab{a}})Ito, Tsuchiya, and Honda]{ItoCoLT2022}
Shinji Ito, Taira Tsuchiya, and Junya Honda.
\newblock Adversarially robust multi-armed bandit algorithm with variance-dependent regret bounds.
\newblock In \emph{Proceedings of Thirty Fifth Conference on Learning Theory}, volume 178, pages 1421--1422. PMLR, 2022{\natexlab{a}}.

\bibitem[Ito et~al.(2022{\natexlab{b}})Ito, Tsuchiya, and Honda]{ItoNeurIPS2022}
Shinji Ito, Taira Tsuchiya, and Junya Honda.
\newblock Nearly optimal best-of-both-worlds algorithms for online learning with feedback graphs.
\newblock In \emph{Proceedings of the 36th International Conference on Neural Information Processing Systems}, 2022{\natexlab{b}}.

\bibitem[Kellerer et~al.(2004)Kellerer, Pferschy, and Pisinger]{KellererSpringer2011}
H.~Kellerer, U.~Pferschy, and D.~Pisinger.
\newblock \emph{Knapsack Problems}.
\newblock Springer, Berlin, Germany, 2004.

\bibitem[Komiyama et~al.(2017)Komiyama, Honda, and Takeda]{KomiyamaNeurIPS2017}
Junpei Komiyama, Junya Honda, and Akiko Takeda.
\newblock Position-based multiple-play bandit problem with unknown position bias.
\newblock In \emph{Advances in Neural Information Processing Systems}, volume~30. Curran Associates, Inc., 2017.

\bibitem[Kveton et~al.(2015)Kveton, Wen, Ashkan, and Szepesvari]{KventonAISTATS2015}
Branislav Kveton, Zheng Wen, Azin Ashkan, and Csaba Szepesvari.
\newblock {Tight Regret Bounds for Stochastic Combinatorial Semi-Bandits}.
\newblock In \emph{Proceedings of the Eighteenth International Conference on Artificial Intelligence and Statistics}, volume~38 of \emph{Proceedings of Machine Learning Research}, pages 535--543, San Diego, California, USA, 09--12 May 2015. PMLR.

\bibitem[Lee et~al.(2021)Lee, Luo, Wei, Zhang, and Zhang]{LeeICML2021}
Chung-Wei Lee, Haipeng Luo, Chen-Yu Wei, Mengxiao Zhang, and Xiaojin Zhang.
\newblock Achieving near instance-optimality and minimax-optimality in stochastic and adversarial linear bandits simultaneously.
\newblock In Marina Meila and Tong Zhang, editors, \emph{Proceedings of the 38th International Conference on Machine Learning}, volume 139 of \emph{Proceedings of Machine Learning Research}, pages 6142--6151. PMLR, 18--24 Jul 2021.

\bibitem[Merlis and Mannor(2020)]{MerlisCoLT2020}
Nadav Merlis and Shie Mannor.
\newblock Tight lower bounds for combinatorial multi-armed bandits.
\newblock In Jacob Abernethy and Shivani Agarwal, editors, \emph{Proceedings of Thirty Third Conference on Learning Theory}, volume 125 of \emph{Proceedings of Machine Learning Research}, pages 2830--2857. PMLR, 09--12 Jul 2020.

\bibitem[Neu(2015)]{NeuCoLT2015}
Gergely Neu.
\newblock First-order regret bounds for combinatorial semi-bandits.
\newblock In \emph{Proceedings of The 28th Conference on Learning Theory}, volume~40 of \emph{Proceedings of Machine Learning Research}, pages 1360--1375, Paris, France, 03--06 Jul 2015. PMLR.

\bibitem[Perrault et~al.(2020)Perrault, Boursier, Perchet, and Valko]{PerraultNeurIPS2020}
Pierre Perrault, Etienne Boursier, Vianney Perchet, and Michal Valko.
\newblock Statistical efficiency of thompson sampling for combinatorial semi-bandits.
\newblock In \emph{Proceedings of the 34th International Conference on Neural Information Processing Systems}, NIPS '20, Red Hook, NY, USA, 2020. Curran Associates Inc.
\newblock ISBN 9781713829546.

\bibitem[Pettie and Ramachandran(2002)]{PettieJACM2002}
Seth Pettie and Vijaya Ramachandran.
\newblock An optimal minimum spanning tree algorithm.
\newblock \emph{J. ACM}, 49\penalty0 (1):\penalty0 16–34, jan 2002.

\bibitem[Qin et~al.(2014)Qin, Chen, and Zhu]{QinSDM2014}
Lijing Qin, Shouyuan Chen, and Xiaoyan Zhu.
\newblock Contextual combinatorial bandit and its application on diversified online recommendation.
\newblock In \emph{SDM}, 2014.

\bibitem[Schrijver(1998)]{SchrijverBook1998}
Alexander Schrijver.
\newblock \emph{Theory of linear and integer programming}.
\newblock John Wiley \& Sons, 1998.

\bibitem[Sniedovich(2006)]{SniedovichControlCybernetics2006}
Moshe Sniedovich.
\newblock Dijkstra's algorithm revisited: the dynamic programming connexion.
\newblock \emph{Control and Cybernetics}, 35:\penalty0 599--620, 2006.

\bibitem[Tsuchiya et~al.(2023)Tsuchiya, Ito, and Honda]{TsuchiyaAISTATS2023}
Taira Tsuchiya, Shinji Ito, and Junya Honda.
\newblock Further adaptive best-of-both-worlds algorithm for combinatorial semi-bandits.
\newblock In \emph{Proceedings of The 26th International Conference on Artificial Intelligence and Statistics}, volume 206 of \emph{Proceedings of Machine Learning Research}, pages 8117--8144. PMLR, 25--27 Apr 2023.

\bibitem[{ul Hassan} and Curry(2016)]{ULHASSANESA2016}
Umair {ul Hassan} and Edward Curry.
\newblock Efficient task assignment for spatial crowdsourcing: A combinatorial fractional optimization approach with semi-bandit learning.
\newblock \emph{Expert Systems with Applications}, 58:\penalty0 36--56, 2016.

\bibitem[Villani(2008)]{VillaniSpringer2008}
C.~Villani.
\newblock \emph{Optimal Transport: Old and New}.
\newblock Grundlehren der mathematischen Wissenschaften. Springer Berlin Heidelberg, 2008.
\newblock ISBN 9783540710509.

\bibitem[Wang and Chen(2017)]{QinshiWangNeurIPS2017}
Qinshi Wang and Wei Chen.
\newblock Improving regret bounds for combinatorial semi-bandits with probabilistically triggered arms and its applications.
\newblock In \emph{Neural Information Processing Systems}, 2017.

\bibitem[Wang and Chen(2018)]{SiweiWangICML2018}
Siwei Wang and Wei Chen.
\newblock Thompson sampling for combinatorial semi-bandits.
\newblock In \emph{Proceedings of the 35th International Conference on Machine Learning}, volume~80 of \emph{Proceedings of Machine Learning Research}, pages 5114--5122. PMLR, 10--15 Jul 2018.

\bibitem[Wei and Luo(2018)]{WeiCOLT2018}
Chen-Yu Wei and Haipeng Luo.
\newblock More adaptive algorithms for adversarial bandits.
\newblock In \emph{Annual Conference Computational Learning Theory}, 2018.

\bibitem[Wen et~al.(2014)Wen, Kveton, and Ashkan]{ZhengWenICML2014}
Zheng Wen, Branislav Kveton, and Azin Ashkan.
\newblock Efficient learning in large-scale combinatorial semi-bandits.
\newblock In \emph{International Conference on Machine Learning}, 2014.

\bibitem[Zimmert et~al.(2019)Zimmert, Luo, and Wei]{ZimmertICML2019}
J.~Zimmert, H.~Luo, and C.-Y. Wei.
\newblock Beating stochastic and adversarial semi-bandits optimally and simultaneously.
\newblock In \emph{Proceedings of The 36th International Conference on Machine Learning}, volume~97, pages 7683 -- 7692, 2019.

\bibitem[Zimmert and Seldin(2021)]{ZimmertJMLR2021}
Julian Zimmert and Yevgeny Seldin.
\newblock Tsallis-inf: an optimal algorithm for stochastic and adversarial bandits.
\newblock \emph{J. Mach. Learn. Res.}, 22\penalty0 (1), January 2021.

\end{thebibliography}

\clearpage
\appendix

\section{Example of the Duplicating Technique} \label{DuplicatingTechnique_Appendix}
In Figure \ref{OT_Figure}, we show an example of the OT problem. Here, $S = \{1, 2, 3\}$ and $D = \{1, 2, 3, 4\}$. We have $d = n_{s} \times n_{d} = 12$ base arms. One candidate of action $\bs{a}$ can be $\bs{a} = \begin{pmatrix}
                        1 & 0 & 0 & 0 \\
                        0 & 1 & 2 & 0 \\
                        0 & 0 & 0 & 4 
                        \end{pmatrix}$. 
                        The player observes a set of losses, $\{L_{(1, 1), 1}\} \cup \{L_{(2, 2), 1}\} \cup \{L_{(2, 3), 1}, L_{(2, 3), 2}\} \cup \{L_{(3, 4), 1}, L_{(3, 4), 2}, L_{(3, 4), 3}, L_{(3, 4), 4}\}$. Here, $L_{(x, y), j}$ is the loss of edge $(x, y)$ observed by the $j$-th truck departed from supplier $x$ to demander $y$. Then, she incurs a loss of $f(\bs{a}(t), \bs{L}^{(1, 1)}, \ldots, \bs{L}^{(3, 4)}) = \sum\limits_{x = 1}^{n_{s}} \sum\limits_{y = 1}^{n_{d}} \sum\limits_{j = 1}^{a_{xy}} L_{(x, y), j}$. \par
To apply existing algorithms in the ordinary CSB algorithms to MP-CSB, one may use the duplicating technique, which is shown in Figure \ref{OT_Figure_Duplicate}. Here, we treat each truck independently so that the action set becomes binary. However, the duplicating technique makes the sample efficiency in the stochastic regime worse. 
For instance, in the stochastic regime, orange, green, and purple arms (edges) in Figure~\ref{OT_Figure_Duplicate} follow the same distribution as edges $(1, 2)$, $(2, 2)$, and $(3, 2)$, in Figure $\ref{OT_Figure}$, respectively. 
Such a lack of distinction between identical distributions leads to poor sample efficiency and may force the player to choose suboptimal actions frequently.
\begin{figure}[t]
    \centering
    \begin{minipage}[]{0.49\columnwidth}
        \centering
        \begin{tikzpicture}[baseline=(current bounding box.north),
scale=0.9,
    node style/.style={circle, draw, minimum size=0.1cm,  inner sep=1pt, font=\small}, 
    edge style/.style={draw, ->, thick}, 
    label style/.style={font=\small, draw=none}, 
    scale=0.8, 
    every node/.style={node style} 
]

\node (s1) at (0,2) {$1$};
\node (s2) at (0,0.5) {$2$};
\node (s3) at (0,-1) {$3$};

\node (d1) at (4,2) {$1$};
\node (d2) at (4,0.5) {$2$};
\node (d3) at (4,-1) {$3$};
\node (d4) at (4,-2.5) {$4$};

\node[label style] at (-1.7,2) {$u_1 = 1$};
\node[label style] at (-1.7,0.5) {$u_2 = 3$};
\node[label style] at (-1.7,-1) {$u_3 = 4$};

\node[label style] at (5.7,2) {$v_1 = 1$};
\node[label style] at (5.7,0.5) {$v_2 = 1$};
\node[label style] at (5.7,-1) {$v_3 = 2$};
\node[label style] at (5.7,-2.5) {$v_4 = 4$};

\foreach \source in {s1, s2, s3} {
    \foreach \target in {d1, d2, d3, d4} {
        \draw[edge style] (\source) -- (\target);
    }
}
\end{tikzpicture}
        \caption{A simple sketch of the MP-CSB problem applied to the OT problem. }
        \label{OT_Figure}
    \end{minipage}
    \begin{minipage}[]{0.49\columnwidth}
        \centering
        \begin{tikzpicture}[
scale=0.9,
    every node/.style={font=\tiny}, 
      truck/.style={draw, circle, minimum size=0.2cm, inner sep=1pt, font=\tiny}, 
    demander/.style={draw, circle, minimum size=0.2cm, inner sep=1pt, font=\small}, 
    edge/.style={draw, ->, thick}, 
    u1color/.style={fill=red!40}, 
    u2color/.style={fill=green!40}, 
    u3color/.style={fill=purple!40}, 
    u1text/.style={font=\small, color=red!80!black}, 
    u2text/.style={font=\small, color=green!80!black}, 
    u3text/.style={font=\small, color=purple!80!black}, 
    label/.style={font=\small, draw=none} 
]

\node[u1text] at (0,4.5) {$u_1 = 1$}; 
\node[u2text] at (0,3) {$u_2 = 3$};  
\node[u3text] at (0,1) {$u_3 = 4$};  

\node[truck, u1color] (t11) at (1,4.5) {1}; 

\node[truck, u2color] (t21) at (1,3.5) {2}; 
\node[truck, u2color] (t22) at (1,3) {3};   
\node[truck, u2color] (t23) at (1,2.5) {4}; 

\node[truck, u3color] (t31) at (1,1.6) {5}; 
\node[truck, u3color] (t32) at (1,1.2) {6}; 
\node[truck, u3color] (t33) at (1,0.8) {7}; 
\node[truck, u3color] (t34) at (1,0.4) {8}; 

\node[demander] (v1) at (4,5) {$1$};      
\node[demander] (v2) at (4,3.5) {$2$};    
\node[demander] (v3) at (4,2) {$3$};      
\node[demander] (v4) at (4,0.5) {$4$};    

\node[label] at (5.5,5) {$v_1 = 1$};    
\node[label] at (5.5,3.5) {$v_2 = 1$};  
\node[label] at (5.5,2) {$v_3 = 2$};   
\node[label] at (5.5,0.5) {$v_4 = 4$}; 

\foreach \target in {v1, v3, v4} {
    \draw[edge, gray] (t11.east) -- (\target.west);
}
\draw[edge, orange] (t11.east) -- (v2.west);

\foreach \source in {t21, t22, t23} {
    \foreach \target in {v1, v3, v4} {
        \draw[edge, gray] (\source.east) -- (\target.west);
    }
}
\foreach \dep in {t21, t22, t23} {
    \draw[edge, green] (\dep.east) -- (v2.west);
}

\foreach \source in {t31, t32, t33, t34} {
    \foreach \target in {v1, v3, v4} {
        \draw[edge, gray] (\source.east) -- (\target.west);
    }
}
\foreach \dep in {t31, t32, t33, t34} {
    \draw[edge, purple] (\dep.east) -- (v2.west);
}

\end{tikzpicture}
        \caption{An illustration of the duplicating technique. The total number of the duplicated base arms is $d'=\left(\sum\limits_{x = 1}^{n_s} u_{x}\right)~\cdot~n_d=~32$.}
        \label{OT_Figure_Duplicate}
    \end{minipage}
\end{figure}


\section{Proof of Theorem \ref{GenCTSUpperBoundTheorem}}

Here, we prove Theorem \ref{GenCTSUpperBoundTheorem}. 
\subsection{Chernoff-Hoeffding Inequality}
We first introduce the Chernoff-Hoeffding inequality, which is useful in the analysis.
\begin{fact}[Chernoff-Hoeffding Inequality \citep{HoeffdingSpringer1994}] \label{Chernoff-HoeffdingInequality}
    When $X_{1}, X_{2}, \ldots, X_{N}$ are identical independent random variables such that $X_{i} \in [0, 1]$ and $\mathbb{E}[X_{i}] = \mu_{i}$, we have the following inequalities:
    \begin{align}
        & \Pr \left[ \frac{\sum\limits_{i = 1}^{N} X_{i}}{N} \geq \mu_{i} + \epsilon \right] \leq \exp\left(- 2 \epsilon^{2} N \right), \\
        & \Pr \left[ \frac{\sum\limits_{i = 1}^{N} X_{i}}{N} \leq \mu_{i} + \epsilon \right] \leq \exp\left(- 2 \epsilon^{2} N \right).
    \end{align}
\end{fact}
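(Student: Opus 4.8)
The plan is to prove both tail bounds via the standard Chernoff bounding (exponential Markov) technique, reducing everything to a uniform bound on the moment generating function of a centered, bounded variable. Write $\mu = \mu_i$ (all means coincide since the $X_i$ are identically distributed) and $S = \sum_{i=1}^{N} X_i$. For the upper tail, fix any $s > 0$ and apply Markov's inequality to $e^{sS}$:
\begin{align}
\Pr\left[ \frac{S}{N} \geq \mu + \epsilon \right] = \Pr\left[ e^{sS} \geq e^{sN(\mu+\epsilon)} \right] \leq e^{-sN(\mu+\epsilon)}\, \mathbb{E}\left[ e^{sS} \right]. \nonumber
\end{align}
By independence, $\mathbb{E}[e^{sS}] = \prod_{i=1}^{N} \mathbb{E}[e^{sX_i}]$, so the entire task collapses to controlling the single-variable MGF $\mathbb{E}[e^{sX_i}]$.

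The key step, and the main obstacle, is Hoeffding's lemma: for a random variable $Z$ with $\mathbb{E}[Z] = 0$ and $Z \in [a,b]$ almost surely, $\mathbb{E}[e^{sZ}] \leq \exp(s^2(b-a)^2/8)$. I would prove it by using convexity of $z \mapsto e^{sz}$ to write $e^{sz} \leq \frac{b-z}{b-a}e^{sa} + \frac{z-a}{b-a}e^{sb}$ on $[a,b]$, taking expectations, and using $\mathbb{E}[Z]=0$ to reach $\mathbb{E}[e^{sZ}] \leq e^{L(u)}$ with $u = s(b-a)$, $p = -a/(b-a)$, and $L(u) = -pu + \log(1 - p + p e^u)$. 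A short computation gives $L(0) = L'(0) = 0$, and since $L''(u) = \theta(1-\theta)$ for $\theta = pe^u/(1-p+pe^u) \in (0,1)$, we have $L''(u) \leq 1/4$ everywhere; Taylor's theorem then yields $L(u) \leq u^2/8$, which is exactly the claim. Applying this to $Z = X_i - \mu \in [-\mu,\, 1-\mu]$ (so $b-a = 1$) gives $\mathbb{E}[e^{s(X_i-\mu)}] \leq e^{s^2/8}$, hence $\mathbb{E}[e^{sX_i}] \leq e^{s\mu + s^2/8}$.

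Substituting this bound into the Chernoff inequality and collecting exponents, the $\mu$-terms cancel and we obtain $\Pr[S/N \geq \mu + \epsilon] \leq \exp(-sN\epsilon + Ns^2/8)$ for every $s > 0$. Minimizing the exponent over $s$ gives the optimal choice $s = 4\epsilon$, at which the exponent equals $-2N\epsilon^2$, establishing the first inequality. The second inequality is the complementary lower-tail statement (as written it should read $\Pr[S/N \leq \mu - \epsilon]$), and it follows immediately by applying the upper-tail bound just proved to the variables $1 - X_i$, which again lie in $[0,1]$ with mean $1-\mu$; this symmetry argument introduces no new ideas. Thus essentially all the real work is concentrated in the estimate $L''(u) \leq 1/4$ underlying Hoeffding's lemma, while the rest is routine bookkeeping and the one-line optimization in $s$.
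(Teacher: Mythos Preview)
Your proof is the standard Chernoff--Hoeffding argument and is correct, including your observation that the second inequality should read $\Pr[S/N \leq \mu - \epsilon]$ rather than $\mu + \epsilon$. The paper, however, does not prove this statement at all: it is stated as a \emph{Fact} with a citation to \citet{HoeffdingSpringer1994} and invoked as a black box (e.g., in the proof of Lemma~\ref{Lemma1_TS}). So there is nothing to compare against; your derivation simply supplies the classical proof that the paper omits.
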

\subsection{Notations}
We use $p_{i}(t)$ and $q_{i}(t)$ to denote the value of $p_{i}$ and $q_{i}$ at the beginning of time $t$. Let
\begin{align}
    \hat{\mu}_{i}(t) = \frac{p_{i}(t) - 1}{N_{i}(t)}  = \frac{1}{N_{i}(t)} \sum\limits_{\tau: \tau < t, i \in I_{\bs{a}(t)}} \sum\limits_{j = 1}^{a_{i}(t)} Y_{i, j}(t) 
\end{align}
be the empirical mean of arm $i$ at the beginning of time $t$, where $N_{i}(t) = p_{i}(t) + q_{i}(t) - 2$ is the number of observations of arm $i$ at the beginning of time $t$.
Notice that for fixed arm $i$, in different time $t$ with $i\in I_{\bs{a}(t)}$ and $j \in [a_{i}(t)]$, $X_{i, j}(t)$'s are i.i.d with mean $\ell_{i}$, and $Y_{i, j}(t)$ is a Bernoulli random variable with mean $X_{i, j} $ thus the Bernoulli random variables $Y_{i, j}(t)$'s are also i.i.d.~with mean~$\ell_{i}$. \par
Let us define $M^{*} = \| \bs{a}^{*} \|_{1}$. Also, let $\epsilon$ be an arbitrary real number that satisfies $\ $ Based on $\hat{\mu}_{i}(t)$, we can define the following five events :
\begin{itemize}
    \item $\mathcal{P}(t) = \left\{ \bs{a}(t) \neq \bs{a}^{*} \right\}$
    \item $\mathcal{Q}(t) = \left\{ \exists i\in I_{\bs{a}(t)}, | \hat{\mu}_{i}(t) - \ell_{i} | > \frac{\epsilon}{\|\bs{a}(t)\|_{1}} \right\}$
    \item $\mathcal{R}(t) = \left\{ \sum\limits_{i \in I_{\bs{a}(t)}} a_{i}(t) | \theta_{i}(t) - \ell_{i} | > \frac{\Delta_{\bs{a}(t)}}{\lipconst} - \left( {M^{*}}^{2} + 1 \right) \epsilon  \right\}$
    \item $\mathcal{S}(t) = \left\{ \sum\limits_{i \in I_{\bs{a}(t)}} a_{i}(t) | \theta_{i}(t) - \hat{\mu}_{i}(t) | >  \frac{\Delta_{\bs{a}(t)}}{\lipconst} - \left({M^{*}}^{2} + 2 \right) \epsilon \right\}$
    \item $\mathcal{T}(t) = \left\{ \sum\limits_{i \in I_{\bs{a}(t)}} \frac{1}{N_{i}(t)} \leq \frac{2 \left( \frac{\Delta_{\bs{a}(t)}}{\lipconst} - \left( {M^{*}}^{2} + 2 \right)\epsilon \right)^{2} }{\log \left( 2^{d} |\mathcal{A}| T \right)} \right\}$
\end{itemize}

\subsection{Proof of Theorem \ref{GenCTSUpperBoundTheorem}}
The total regret can be written as follows:
\begin{align}
          &\sum\limits_{t = 1}^{T} \mathbb{E} \left[ \mathbbm{1} \left[ \mathcal{P}(t) \right] \times \Delta_{\bs{a}(t)} \right] \nonumber \\
     \leq & \sum\limits_{t = 1}^{T} \mathbb{E}\left[\mathbbm{1}\left[ \mathcal{Q}(t) \land \mathcal{P}(t) \right] \times \Delta_{\bs{a}(t)}\right] 
           +  \sum\limits_{t = 1}^{T} \mathbb{E}\left[\mathbbm{1}\left[ \lnot \mathcal{Q}(t) \land \mathcal{R}(t) \land \mathcal{P}(t) \right] \times \Delta_{\bs{a}(t)}\right] \nonumber \\
          & + \sum\limits_{t = 1}^{T} \mathbb{E}\left[\mathbbm{1}\left[ \lnot \mathcal{R}(t) \land \mathcal{P}(t) \right] \times \Delta_{\bs{a}(t)}\right]. \label{GenCTSRegretDecomposition}
\end{align}
We analyze each term in the RHS of (\ref{GenCTSRegretDecomposition}).
\subsubsection{The First Term of the RHS of (\ref{GenCTSRegretDecomposition})}
We can use the following lemma to bound the first term. Below, we denote $\hat{\mu}_{i}(t)$ by the sample mean of arm $i$ at the beginning of round $t$.
\begin{lemma}\label{Lemma1_TS}
    In Algorithm \ref{GenCTSAlgorithm}, we have
    \begin{align}
        \mathbb{E}\left[\left\{ t \in [T] \ | \ i \in I_{\bs{a}(t)}, |\hat{\mu}_{i}(t) - \ell_{i}| > \epsilon \right\}\right] \leq 1 + \frac{1}{\epsilon^{2}} \nonumber 
    \end{align}
    for any $1\leq i \leq d$.
\end{lemma}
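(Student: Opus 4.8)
The plan is to bound the number of such rounds by relating them to the number of Bernoulli observations collected from arm $i$, and then \emph{decoupling} the (adaptive) observation count from the observations themselves via a union bound over a deterministic range. First I would isolate the very first round in which $i$ is observed: let $t_{0}$ be the smallest $t$ with $i \in I_{\bs{a}(t)}$. At that round $N_{i}(t_{0}) = 0$, so its empirical mean is based on no samples; we simply charge this round $1$. For every round $t > t_{0}$ with $i \in I_{\bs{a}(t)}$ we have $N_{i}(t) \geq 1$, since $N_{i}(\cdot)$ is nondecreasing and $N_{i}(t_{0}+1) = a_{i}(t_{0}) \geq 1$.

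The key structural observation is that $N_{i}(t)$ is \emph{strictly} increasing along the subsequence of rounds in which arm $i$ is observed: if $i \in I_{\bs{a}(t)}$ then $a_{i}(t) \geq 1$, so $N_{i}(t+1) = N_{i}(t) + a_{i}(t) \geq N_{i}(t) + 1$, while $N_{i}(\cdot)$ is nondecreasing everywhere. Hence each integer $n \geq 1$ equals $N_{i}(t)$ for \emph{at most one} observed round $t$, and therefore
\begin{align}
  \bigl| \{ t \in [T] : i \in I_{\bs{a}(t)},\ |\hat{\mu}_{i}(t) - \ell_{i}| > \epsilon \} \bigr|
  \leq 1 + \sum_{n \geq 1} \mathbbm{1}\Bigl[ \exists\, t \leq T:\ i \in I_{\bs{a}(t)},\ N_{i}(t) = n,\ |\hat{\mu}_{i}(t) - \ell_{i}| > \epsilon \Bigr]. \nonumber
\end{align}
To handle the right-hand side I would couple the process with an infinite sequence $\tilde{Y}_{i,1}, \tilde{Y}_{i,2}, \ldots$ of i.i.d.\ $\mathrm{Bernoulli}(\ell_{i})$ variables, arranged so that the $k$-th Bernoulli observation ever drawn from arm $i$ by \textsf{Update} equals $\tilde{Y}_{i,k}$; this is legitimate in the stochastic regime because, as recorded in the Notations, the $Y_{i,j}(t)$ are i.i.d.\ with mean $\ell_{i}$. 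Writing $S_{i}(n) = \tfrac{1}{n}\sum_{k=1}^{n}\tilde{Y}_{i,k}$, on the event $\{N_{i}(t) = n\}$ we have $\hat{\mu}_{i}(t) = S_{i}(n)$, so the $n$-th summand is at most $\mathbbm{1}[|S_{i}(n) - \ell_{i}| > \epsilon]$. Taking expectations and applying the Chernoff--Hoeffding inequality (Fact \ref{Chernoff-HoeffdingInequality}) gives $\Pr[|S_{i}(n) - \ell_{i}| > \epsilon] \leq 2 e^{-2\epsilon^{2} n}$, and summing the geometric series, $\sum_{n \geq 1} 2 e^{-2\epsilon^{2} n} = \tfrac{2}{e^{2\epsilon^{2}} - 1} \leq \tfrac{1}{\epsilon^{2}}$ using $e^{x} - 1 \geq x$. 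Adding the $1$ from round $t_{0}$ yields the claim.

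The main obstacle is exactly this decoupling step: $N_{i}(t)$ is itself a random, history-dependent quantity (it depends on which actions the oracle returned, hence on past samples), so one cannot directly treat $\hat{\mu}_{i}(t)$ as the empirical mean of a \emph{fixed} number of independent draws and invoke Hoeffding round-by-round. Re-indexing by the realized observation count $n$ and union-bounding over the \emph{deterministic} range $n \geq 1$ removes the adaptivity, since for each fixed $n$ the average $S_{i}(n)$ is an average of $n$ i.i.d.\ variables irrespective of how many rounds were needed to accumulate them. A minor point to verify carefully is that when $a_{i}(t) > 1$ several samples arrive in a single round, which only makes $N_{i}(\cdot)$ skip values and can thus only shrink the set over which the indicators are summed, leaving the bound intact.
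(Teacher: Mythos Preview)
Your proposal is correct and follows essentially the same approach as the paper: both isolate the initial round with $N_{i}=0$, reindex the remaining observed rounds by the realized observation count, note that each count value is attained at most once, apply Chernoff--Hoeffding for each fixed count, and sum the resulting geometric series. Your write-up is in fact cleaner about the decoupling step (the explicit coupling with an i.i.d.\ sequence $\tilde{Y}_{i,k}$) and about why multiple samples per round can only help, whereas the paper's proof overloads the symbol $w$ and leaves these points implicit.
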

\begin{proof}
    Let $\tau_{1}, \tau_{2}, \ldots$ be the time slots such that $i \in I_{\bs{a}(t)}$ and define $\tau_{0} = 0$, then
    \begin{align}
        & \mathbb{E}\left[ \left| \left\{ t \in [T] \mid i \in I_{\bs{a}(t)}, | \hat{\mu}_{i}(t) - \ell_{i}| > \epsilon \right\} \right| \right] \nonumber \\
        = & \mathbb{E}\left[ \sum\limits_{t = 1}^{T} \mathbbm{1} \left[ i \in I_{\bs{a}(t)}, | \hat{\mu}_{i}(t) - \ell_{i} |  > \epsilon \right] \right] \nonumber \\
        \leq & \mathbb{E} \left[ \sum\limits_{w = 0}^{T} \mathbb{E} \left[ \sum\limits_{t = \tau_{w}}^{\tau_{w + 1} - 1} \mathbbm{1} \left[ i \in I_{\bs{a}(t)}, |\hat{\mu}_{i}(t) - \ell_{i}| > \epsilon \right] \right] \right] \nonumber \\
        \leq & \mathbb{E} \left[ \sum\limits_{w = 0}^{n_{i}T} \Pr \left[ |\hat{\mu}_{i}(t) - \ell_{i}| > \epsilon , N_{i} = w \right]  \right] \nonumber \\
        \leq & 1 + \sum\limits_{w = 1}^{n_{i}T} \Pr \left[ |\hat{\mu}_{i}(t) - \ell_{i}| > \epsilon , N_{i} = w \right] \nonumber \\
        \leq & 1 + \sum\limits_{w = 1}^{T} \exp\left(-2w\epsilon^{2}\right) + \sum\limits_{w = 1}^{T} \exp \left(-2w\epsilon^{2}\right) \label{Eq2_TS} \\
        \leq & 1 + 2\sum\limits_{w = 1}^{\infty} \left(\exp\left( -2w\epsilon^{2} \right)\right)^{w} \nonumber \\
        \leq & 1 + 2 \frac{\exp\left(-2\epsilon^{2}\right)}{1 + \exp\left(-2\epsilon^{2}\right)} \nonumber \\
        \leq & 1 + \frac{2}{2\epsilon^{2}} \nonumber \\
        = & 1 + \frac{1}{\epsilon^{2}} \nonumber
    \end{align}
    where Eq (\ref{Eq2_TS}) is because of the Chernoff-Hoeffding's inequality (Fact \ref{Chernoff-HoeffdingInequality}).
\end{proof}
By Lemma \ref{Lemma1_TS}, we know that the first term is upper bounded by $\left(\frac{dM^{2}}{\epsilon^{2}} + d \right)\Delta_{\mathrm{max}}$, where $M = \max\limits_{\bs{a} \in \mc{A}} \| \bs{a} \|_{1}$.

\subsubsection{The Second Term of the RHS of (\ref{GenCTSRegretDecomposition})}
Under $\lnot \mathcal{Q}(t) \land \mathcal{R}(t)$, we must have that 
\begin{align}
    \sum\limits_{i=1}^{d} a_{i}(t) | \theta_{i}(t) - \hat{\mu}_{i}(t)| 
    \geq & \sum\limits_{i=1}^{d} a_{i}(t) | \theta_{i}(t) - \ell_{i}| - \sum\limits_{i=1}^{d} a_{i}(t) | \ell_{i} - \hat{\mu}_{i}(t)| \nonumber \\
    > &  \frac{\Delta_{\bs{a}(t)}}{\lipconst} - ({M^{*}}^{2} + 1) \epsilon^{2} - \epsilon, \nonumber \\
    = &  \frac{\Delta_{\bs{a}(t)}}{\lipconst} - ({M^{*}}^{2} + 2) \epsilon^{2}
\end{align}
i.e., event $\mathcal{S}(t)$ must happen. \par
Then, the second term of the RHS of 
(\ref{GenCTSRegretDecomposition}) can be bounded by 
\begin{align}
    & \sum\limits_{t = 1}^{T} \mathbb{E}\left[ \mathbbm{1} \left[ \lnot \mathcal{Q}(t) \land \mathcal{R}(t) \land \mathcal{P}(t) \right] \times \Delta_{\bs{a}(t)} \right] \nonumber \\
    \leq & \sum\limits_{t = 1}^{T} \mathbb{E} \left[ \mathbbm{1}\left[ \mathcal{S}(t) \land \mathcal{T}(t) \land \mathcal{P}(t) \right]\times \Delta_{\bs{a}(t)} \right] + \sum\limits_{t = 1}^{T} \mathbb{E} \left[ \mathbbm{1}\left[ \mathcal{S}(t) \land \lnot \mathcal{T}(t) \land \mathcal{P}(t) \right] \times \Delta_{\bs{a}(t)} \right]. \nonumber
\end{align}
Following the same discussion in \citet{PerraultNeurIPS2020}, we can obtain $\Pr\left[ \mathcal{S}(t) \land \mathcal{T}(t) \right] \leq \mathcal{O} \left( \frac{1}{T} \right)$, and therefore, $\mathbb{E} \left[ \mathbbm{1}\left[ \mathcal{S}(t) \land \mathcal{T}(t) \land \mathcal{P}(t) \right]\times \Delta_{\bs{a}(t)} \right]$ is $\mathcal{O}\left(1\right)$. \par
Now, we bound the regret term $\mathbb{E}\left[ \mathbbm{1}\left[ \mathcal{S}(t) \land \lnot \mathcal{T}(t) \land \mathcal{P}(t) \right] \right]$. Here, we use the regret allocation method to count this regret term. That is, for any time step $t$ such that $\mathcal{S}(t) \land \lnot \mathcal{T}(t) \land \mathcal{P}(t)$ happens, we allocate regret $g_{i}(N_{i}(t))$ to each base arm $i \in I_{\bs{a}(t)}$. We say the allocation function $g_{i}$'s are correct if the sum of allocated regret in this step is larger than $\Delta_{\bs{a}(t)}$, i.e., $\sum\limits_{i = 1}^{d} g_{i}(N_{i}(t)) \geq \Delta_{\bs{a}(t)}$. \par
Then, we describe our allocation function $g_{i}$'s. Here, we define 
\begin{align}
    L_{i, 1} = \frac{m \log \left(2^{d} |\mathcal{A}| T \right)}{\min\limits_{\bs{a}:i \in I_{\bs{a}}} \left( \frac{\Delta_{\bs{a}}}{\lipconst} - ( {M^{*}}^{2}+ 2)\epsilon\right)^{2}}
\end{align}
and 
\begin{align}
    L_{i, 2} = \frac{ \log \left(2^{d} |\mathcal{A}| T \right)}{\min\limits_{\bs{a}:i \in I_{\bs{a}}} \left( \frac{\Delta_{\bs{a}}}{\lipconst} - ({M^{*}}^{2}+ 2)\epsilon\right)^{2}}.
\end{align}
Also, we define $g_{i}(w)$ as follows:
\begin{align}
    g_{i}(w) = \left\{
            \begin{array}{ll}
            \Delta_{\mathrm{max}} & (w =  0)\\
             2 \lipconst \sqrt{\frac{\log \left( 2^d |\mathcal{A}| T \right) }{w}} & 0 < w < L_{i, 2} \\
             \frac{2 \lipconst \log \left( 2^{d} | \mathcal{A} | T \right)}{w \min\limits_{\bs{a}: i \in I_{\bs{a}}} \left( \frac{{\Delta_{\bs{a}}}}{\lipconst} - \left({M^{*}}^{2} + 2\right)\epsilon \right)} & L_{i, 2} < w \leq L_{i, 1} \\
             0 & w > L_{i, 1}
            \end{array}.
\right.
\end{align}
Now, we prove that these allocation function $g_{i}$'s satisfy the correctness condition when $\epsilon \leq \frac{\Delta_{\mathrm{min}}}{2 \lipconst ({M^{*}}^{2}+ 2)}$, i.e., if event $\mathcal{S}(t) \land \lnot \mathcal{T}(t) \land \mathcal{P}(t)$ happens, then $\sum\limits_{i = 1}^{d} g_{i}(N_{i}(t)) \geq \Delta_{\bs{a}(t)}$. \par
If there exists $i \in I_{\bs{a}(t)}$ such that $N_{i}(t) = 0$, then $g_{i}(N_{i}(t)) = \Delta_{\mathrm{max}} \geq \Delta_{\bs{a}(t)}$. 
Since $g_{i}(w)$ is always non-negative, we know that $\sum\limits_{i \in I_{\bs{a}(t)}} g_{i}(w) \geq \Delta_{\bs{a}(t)}$. \par
If there exists $i \in I_{\bs{a}(t)}$ such that $1 \leq N_{i}(t) \leq \frac{ \log \left(2^{d} |\mathcal{A}| T \right)}{\min\limits_{\bs{a}:i \in I_{\bs{a}}} \left( \frac{\Delta_{\bs{a}}}{\lipconst} - ({M^{*}}^{2}+ 2)\epsilon \right)^{2}} $, then $N_{i}(t) \leq L_{i, 2}$, and therefore, 
\begin{align}
    g_{i}(t) =  2 \lipconst  \sqrt{\frac{\log \left( 2^{d} |\mathcal{A}| T \right)}{N_{i}(t)}} 
    \geq  2 \lipconst \sqrt{\frac{\log \left( 2^{d} |\mathcal{A}| T \right)}{\frac{\log \left(2^{d} |\mathcal{A}| T \right)}{ \left( \frac{\Delta_{\bs{a}(t)}}{\lipconst}- ({M^{*}}^{2}+ 2 ) \epsilon \right)^{2}}} } 
    =2 \lipconst \left( \frac{\Delta_{\bs{a}(t)}}{\lipconst} - ({M^{*}}^{2}+ 2) \epsilon \right)
    \geq \Delta_{\bs{a}(t)}, \nonumber 
\end{align}
where the last inequality is because that $\epsilon \leq \frac{\Delta_{\mathrm{min}}}{2 \lipconst ({M^{*}}^{2} + 2) }$ and $a_{i}(t) \geq 1$. From the above inequalities, we know that $\sum\limits_{i \in I_{\bs{a}(t)}}  g_{i}(t) \geq \Delta_{\bs{a}(t)}$. \par
If for all $i \in I_{\bs{a}(t)}$, $N_{i}(t) > \frac{\log \left(2^{d} |\mathcal{A}| T\right)}{\left( \frac{\Delta_{\bs{a}(t)}}{\lipconst} - \left({M^{*}}^{2}+ 2\right) \epsilon \right)^{2}}$, then we use $ S^{1}_{\bs{a}(t)}$ to denote the set of arms $i \in I_{\bs{a}(t)}$ such that $N_{i}(t) > L_{i, 1}$, $S^{2}_{\bs{a}(t)}$ to denote the set of arms $i \in I_{\bs{a}(t)}$ such that $L_{i, 2} < N_{i}(t) < L_{i, 1}$, $S^{3}_{\bs{a}(t)}(t)$ to denote the set of arms $i \in I_{\bs{a}(t)}$ such that $N_{i}(t) \leq L_{i, 2}$. By the definition of allocation functions $g_{i}$'s, we have that 
\begin{align}
      & \sum\limits_{i i \in I_{\bs{a}(t)}}  g_{i}(N_{i}(t) \nonumber \\
    = & \sum\limits_{i \in I^{3}_{\bs{a}(t)}}  2 \lipconst \sqrt{\frac{\log \left( 2^{d} |\mathcal{A}| T \right)}{N_{i}(t)}} + \sum\limits_{i \in I^{2}_{\bs{a}(t)}}  \frac{2 \lipconst \log \left(2^{d} |\mathcal{A}| T \right)}{N_{i}(t) \min\limits_{\bs{a} : i \in I_{\bs{a}}} \left( \frac{\Delta_{\bs{a}(t)}}{\lipconst} - \left({M^{*}}^{2}+ 2\right)\epsilon\right)} \nonumber \\
    \geq & \sum\limits_{i \in I^{3}_{\bs{a}(t)}}2 \lipconst \sqrt{\frac{\log \left( 2^{d} |\mathcal{A}| T \right)}{N_{i}(t)}} + \sum\limits_{i \in I^{2}_{\bs{a}(t)} } \frac{2 \lipconst \log \left(2^{d} |\mathcal{A}| T \right)}{N_{i}(t)  \left( \frac{\Delta_{\bs{a}(t)}}{\lipconst} - \left({M^{*}}^{2}+ 2\right) \epsilon \right)} \nonumber \\ 
    = & \sum\limits_{i \in I^{3}_{\bs{a}(t)}}
                   2 \lipconst \frac{ \log \left( 2^{d} |\mathcal{A}|T \right)}{N_{i}(t) \left( \frac{\Delta_{\bs{a}(t)}}{\lipconst} - \left({M^{*}}^{2}+ 2\right)\epsilon\right) } \cdot \sqrt{\frac{N_{i}(t) \left( \frac{\Delta_{\bs{a}(t)}}{\lipconst} - \left({M^{*}}^{2}+ 2\right)\epsilon\right)^{2}}{\log \left(2^{d} |\mathcal{A}| T \right)}} \nonumber \\
      & + \sum\limits_{i \in I^{2}_{\bs{a}(t)}}  \frac{2 \lipconst \log \left(2^{d} |\mathcal{A}| T \right)}{N_{i}(t)  \left( \frac{\Delta_{\bs{a}(t)}}{\lipconst} - \left({M^{*}}^{2}+ 2\right)\epsilon\right)} \nonumber \\
    \geq & \sum\limits_{i \in I^{3}_{\bs{a}(t)}}  \frac{2 \lipconst \log \left(2^{d} |\mathcal{A}| T \right)}{ N_{i}(t) \left( \frac{\Delta_{\bs{a}(t)}}{\lipconst} - \left({M^{*}}^{2}+ 2 \right) \epsilon\right) } + \sum\limits_{i \in I^{2}_{\bs{a}(t)}(t)}  \frac{2 \lipconst \log \left(2^{d} |\mathcal{A}| T \right)}{ N_{i}(t) \left(\frac{\Delta_{\bs{a}(t)}}{\lipconst}  - \left({M^{*}}^{2}+ 2 \right) \epsilon\right) } \label{Eq3_TS} \\
    = & \sum\limits_{i \in  I_{\bs{a}(t)} \setminus I^{1}_{\bs{a}(t)}(t) } \frac{2 \lipconst \log \left(2^{d} |\mathcal{A}| T \right)}{ N_{i}(t) \left( \frac{\Delta_{\bs{a}(t)}}{\lipconst} - \left({M^{*}}^{2}+ 2 \right) \epsilon \right) } \nonumber \\ 
    = & \frac{2 \lipconst \log\left( 2^{d} | \mathcal{A}| T \right)}{ \left( \frac{\Delta_{\bs{a}(t)}}{\lipconst} - \left({M^{*}}^{2}+ 2 \right) \epsilon \right)}  \left( \sum\limits_{i \in I_{\bs{a}(t)}} \frac{1}{N_{i}(t)} - \sum\limits_{i \in I^{1}_{\bs{a}(t)}} \frac{1}{N_{i}(t)}\right)  \nonumber \\
    \geq & \frac{2 \lipconst \log\left( 2^{d} | \mathcal{A}| T \right)}{ \left(\frac{\Delta_{\bs{a}(t)}}{\lipconst} - \left({M^{*}}^{2} + 2 \right) \epsilon \right)} \left( \frac{ 2 \left( \frac{\Delta_{\bs{a}(t)}}{\lipconst} - \left( {M^{*}}^{2}+ 2 \right) \epsilon \right)^{2}}{\log\left( 2^{d} | \mathcal{A}| T \right)} - \sum\limits_{i \in I^{1}_{\bs{a}(t)}} \frac{\left( \frac{\Delta_{\bs{a}(t)}}{\lipconst} - \left( {M^{*}}^{2} + 2 \right) \epsilon \right)^{2}}{m \log \left( 2^{d} |\mathcal{A}| T \right)} \right) \label{Eq4_TS} \\
    \geq & \frac{2 \lipconst \log\left( 2^{d} | \mathcal{A}| T \right)}{ \left( \frac{\Delta_{\bs{a}(t)}}{\lipconst} - \left({M^{*}}^{2} + 2 \right) \epsilon \right)} \left( \frac{ 2 \left( \frac{\Delta_{\bs{a}(t)}}{\lipconst} - \left( {M^{*}}^{2}+ 2 \right) \epsilon \right)^{2}}{\log\left( 2^{d} | \mathcal{A}| T \right)} - m \frac{\left( \frac{\Delta_{\bs{a}(t)}}{\lipconst} - \left( {M^{*}}^{2} + 2 \right) \epsilon \right)^{2}}{m \log \left( 2^{d} |\mathcal{A}| T \right)} \right) \nonumber \\
     =   &  \frac{2 \lipconst \log\left( 2^{d} | \mathcal{A}| T \right)}{ \left( \frac{\Delta_{\bs{a}(t)}}{\lipconst} - \left({M^{*}}^{2} + 2 \right) \epsilon \right)} \frac{ \left( \frac{\Delta_{\bs{a}(t)}}{\lipconst} - \left( {M^{*}}^{2}+ 2 \right) \epsilon \right)^{2}}{\log\left( 2^{d} | \mathcal{A}| T \right)} \nonumber \\
     = & 2 \lipconst \left(\frac{\Delta_{\bs{a}(t)}}{\lipconst} - \left( {M^{*}}^{2} + 2 \right) \epsilon\right) \nonumber \\
    \geq & \Delta_{\bs{a}(t)}. \nonumber
\end{align}
Here, Eq(\ref{Eq3_TS}) is because that $N_{i}(t) > \frac{\log \left( 2^{d} |\mathcal{A}| T \right)}{\left(  \frac{\Delta_{\bs{a}(t)}}{\lipconst} - ({M^{*}}^{2}+ 2)\epsilon \right)^{2}}$ (as we assumed in the beginning of the paragraph), Eq (\ref{Eq4_TS}) comes from the definition of $\lnot \mathcal{T}(t)$ (the first term) and the definition of $S^{1}_{\bs{a}(t)}$ (the second term). This finishes the proof that the allocation functions $g_{i}$'s satisfy the correctness condition when $\epsilon \leq \frac{\Delta_{\mathrm{min}}}{2 \lipconst \left({M^{*}}^{2}+ 2 \right) }$. \par
Because of this, the second term of (\ref{GenCTSRegretDecomposition}) is upper-bounded by 
\begin{align}
    & \mathbb{E}\left[ \mathbbm{1} \left[ \lnot \mathcal{Q}(t) \land \mathcal{R}(t) \land \mathcal{P}(t) \right] \right] \nonumber \\
    \leq & (d + 1) \Delta_{\mathrm{max}} + \sum\limits_{i = 1}^{d} \sum\limits_{w = 1}^{L_{i, 2}} 2 \lipconst \sqrt{\frac{\log\left( 2^{d} |\mathcal{A}| T \right)}{w}} + \sum\limits_{i = 1}^{d} \sum\limits_{w = L_{i, 2} + 1}^{L_{i, 1}} \frac{1}{w} \frac{2 \lipconst \log \left(2^{d} |\mathcal{A}| T \right)}{\min\limits_{\bs{a}: i \in I_{\bs{a}(t)}} \left( \frac{\Delta_{\bs{a}}}{\lipconst} - \left({M^{*}}^{2}+ 2\right)\epsilon\right)} \nonumber \\
    \leq & (d + 1) \Delta_{\mathrm{max}} 
    + \sum\limits_{i = 1}^{d} 4 \sqrt{\log\left( 2^{d} |\mathcal{A}| T \right) L_{i, 2}} + \sum\limits_{i = 1}^{d} \left( 1 + \log \left( \frac{L_{i, 1}}{L_{i, 2}} \right) \right)  \frac{2 \lipconst \log \left(2^{d} |\mathcal{A}| T \right)}{\min\limits_{\bs{a}: i \in I_{\bs{a}(t)}} \left( \frac{\Delta_{\bs{a}}}{\lipconst} - \left({M^{*}}^{2}+ 2\right)\epsilon \right)} \label{Eq5_TS}
\end{align}
Here, Eq (\ref{Eq5_TS}) is because that $\sum\limits_{w = 1}^{\lipconst} \sqrt{\frac{1}{w}} \leq 2\sqrt{\lipconst}$ (by as simple inductive proof on $N$) and $\sum\limits_{w = N_{1}}^{N_{2}} \frac{1}{w} \leq 1 + \log \frac{N_{2}}{N_{1}}$. \par
The value $\sqrt{\log \left( 2^{d} |\mathcal{A}| T \right) L_{i, 2}}$ equals to $\frac{  \log \left( 2^{d} |\mathcal{A}| T \right)}{\min\limits_{\bs{a} \in \mathcal{A} : i \in I_{\bs{a}}} \left( \frac{\Delta_{\bs{a}}}{\lipconst} - \left({M^{*}}^{2}+ 2 \right)\epsilon\right) }$, and $\log \frac{L_{i, 1}}{L_{i, 2}} = \log m $, and therefore, the total regret in the second term is 
\begin{align}
    \mathcal{O} \left( \sum\limits_{i = 1}^{d} \frac{ \lipconst \log m  \log \left( T \right)}{\min\limits_{\bs{a}:i \in I_{\bs{a}}} \left( \frac{\Delta_{\bs{a}}}{\lipconst} - \left({M^{*}}^{2}+ 2 \right) \epsilon \right)  } \right),
\end{align}
and if set $\epsilon = \frac{ \Delta }{ ({M-{*}}^{2} + 2) \epsilon }$, the second term is 
\begin{align}
    \mathcal{O}\left( \sum\limits_{i = 1}^{d} \frac{{\kappa_{r}}^{2} \log m  }{ \Delta_{i} } \log T \right)
\end{align}

\subsubsection{The Third Term of the RHS of (\ref{GenCTSRegretDecomposition})}
Let $\bs{\theta} = \left(\theta_{1}, \ldots, \theta_{d}\right)$ be a vector of parameters, $I\subseteq [d]$ and $I \neq \emptyset$ be some arm set, and $\bs{V}^{c}$ be the complement of $\bs{V}$. Recall that $\bs{\theta}_{V}$ is a vector whose $i$-th element is $\theta_{i}$ if $i \in V$ and 0 if $i \notin V$. Also, we use the notation $(\bs{\theta}'_{\bs{V}}, \bs{\theta}_{\bs{V}^{c}})$ to denote replacing $\theta_{i}$'s for $i\in V$ and keeping the values $\theta_{i}$ for $i \in V^{c}$ unchanged. \par
Given a subset $I\subseteq I_{\bs{a}^{*}}$, we consider the following property for $\bs{\theta}_{I^{c}}$. For any $\bs{\theta}'_{Z}$ such that $\| \bs{\theta}'_{Z} - \bs{\ell}_{Z} \|_{\infty} \leq \epsilon$, let $\bs{\theta}' = \left( \bs{\theta}'_{Z}, \bs{\theta}_{I^{c}} \right)$, then:
\begin{itemize}
    \item $I \subseteq I_{\mathsf{Oracle}(\bs{\theta}')} $
    \item Either $\mathsf{\mathsf{Oracle}}(\bs{\theta}') = \bs{a}^{*}$ or $\| \mathsf{Oracle} \left( \bs{\theta}' \right) \cdot \left( \bs{\theta}' - \bs{\ell} \right) \|_{1} \geq \Delta_{\mathsf{Oracle}(\bs{\theta}')} - \left({M^{*}}^{2}+ 2 \right) \epsilon$
\end{itemize}

The first one is to make sure that if we have normal samples in $I$ at time $t$ (i.e., the samples value $\theta_{i}(t)$ is within $\epsilon$ neighborhood of $\ell_{i}$ for all $i \in Z$), then all the arms in $I$ will be played and observed. These observations would update the beta distributions of these base arms to be more accurate, such that the probability of the next time that the samples from these base arms are also within $\epsilon$ neighborhood of their true mean value becomes larger. This fact would be used later in the quantitative regret analysis. The second one says that if the samples in $I$ are normal, then $ \lnot \mathcal{R}(t) \land \mathcal{P}(t)$ can not happen. We use $\mathcal{E}_{Z, 1}\left( \bs{\theta} \right)$ to denote the event that the vector $\bs{\theta}_{I^{c}}$ has such a property, and emphasize that this event only depends on the values in vector $\bs{\theta}_{I^c}$. \par
What we want to do is to find some exact $I$ such that $\mathcal{E}_{Z, 1}\left( \bs{\theta}(t) \right)$ happens when $\lnot \mathcal{R}(t) \land \mathcal{P}(t)$ happens. If such $I$ exists, then for any $t$ such that $\mathcal{E}_{Z, 1}\left( \bs{\theta}(t) \right)$ happens, there are two possible cases: $\mathrm{i}$) the samples of all arms $i \in Z$ are normal, which means $\lnot \mathcal{R}(t) \land \mathcal{P}(t)$ cannot happen, and will update the posterior distributions of all the arms $i \in Z$ to increase the probability that the samples of all the arms $i \in Z$ are normal; $\mathrm{ii}$) the samples of some arms $i \in Z$ are not normal, and $\lnot \mathcal{R}(t) \land \mathcal{P}(t)$ may happen in this case. As time goes on, the probability that the samples in $I$ are normal becomes larger and larger, and therefore the probability that $\lnot \mathcal{R}(t) \land \mathcal{P}(t)$ happens becomes smaller and smaller. Thus, $\sum\limits_{t = 1}^{T} \mathbb{E}\left[ \mathbf{1}\left[ \lnot \mathcal{R}(t) \land \mathcal{P}(t) \right] \right]$ has a constant upper bound. \par
The following lemma shows that such $I$ must exist, and it is the key lemma in the analysis of the third term. 
\begin{lemma}\label{CTS_Lemma2}
    Suppose that $\lnot \mathcal{R}(t) \land \mathcal{P}(t)$ happens, then there exists $I \subseteq I_{\bs{a}^{*}}$ and $I \neq \emptyset$ such that $\mathcal{E}_{Z, 1}(\bs{\theta}(t))$ holds.
\end{lemma}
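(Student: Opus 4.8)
The plan is to produce the set $I$ by an iterative construction that whittles down the support $I_{\bs a^*}$ of the optimal action, in the spirit of the combinatorial Thompson sampling analyses of \citet{SiweiWangICML2018,PerraultNeurIPS2020}, invoking the hypothesis $\lnot\mathcal{R}(t)\land\mathcal{P}(t)$ exactly where it is indispensable: to certify that the construction bottoms out at a \emph{nonempty} set rather than exhausting $I_{\bs a^*}$.

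First I would extract the quantitative consequence of $\lnot\mathcal{R}(t)\land\mathcal{P}(t)$. Since $\bs a(t)=\mathsf{Oracle}(\bs\theta(t))$ minimizes $r(\cdot,\bs\theta(t))$ over $\mathcal{A}$, we have $r(\bs a(t),\bs\theta(t))\le r(\bs a^*,\bs\theta(t))$; inserting $r(\bs a(t),\bs\ell)$ and $r(\bs a^*,\bs\ell)$ and applying the Lipschitz property of $r$ twice yields $\Delta_{\bs a(t)}\le\lipconst\sum_{i\in I_{\bs a(t)}}|\theta_i(t)-\ell_i|+\lipconst\sum_{i\in I_{\bs a^*}}|\theta_i(t)-\ell_i|$. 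Since $a_i(t)\ge 1$ on $I_{\bs a(t)}$, the event $\lnot\mathcal{R}(t)$ bounds the first sum by $\Delta_{\bs a(t)}/\lipconst-({M^*}^2+1)\epsilon$, so $\sum_{i\in I_{\bs a^*}}|\theta_i(t)-\ell_i|\ge({M^*}^2+1)\epsilon$. Morally, the bad event can occur only because the current samples on the optimal support are appreciably wrong, so a ``large-deviation core'' of $I_{\bs a^*}$ persists under any peeling, and this is what keeps the final $I$ nonempty.

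Next comes the iteration. Initialise $I\leftarrow I_{\bs a^*}$ and let $\bs c\leftarrow\bs a^*$ be the current ``target'' action. At each stage, freeze the coordinates outside $I$ to their current values $\bs\theta_{I^c}(t)$ and ask: does every $\bs\theta'$ that agrees with the frozen coordinates and lies within $\epsilon$ of $\bs\ell$ on $I$ produce an oracle output $\bs b=\mathsf{Oracle}(\bs\theta')$ with $I\subseteq I_{\bs b}$ and with either $\bs b=\bs a^*$ or $\|\bs b\cdot(\bs\theta'-\bs\ell)\|_1\ge\Delta_{\bs b}-({M^*}^2+2)\epsilon$? If so, this $I$ certifies $\mathcal{E}_{I,1}(\bs\theta(t))$ and we stop. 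If not, a witnessing $\bs\theta'$ gives a $\bs b$ that either (a) misses some arm of $I$ from its support, in which case we replace $I$ by $I\cap I_{\bs b}$, which is strictly smaller than $I$, or (b) is a suboptimal action with $\|\bs b\cdot(\bs\theta'-\bs\ell)\|_1<\Delta_{\bs b}-({M^*}^2+2)\epsilon$, in which case $\bs b$ takes over the role of the current target and we recurse; because $I$ is nonincreasing and always a subset of $I_{\bs a^*}$, and in case (b) the deviation slack available to the new target is strictly smaller, the process terminates after $O(M^*)$ stages.

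The crux --- the only place substantial work is required --- is showing that this process cannot terminate at $I=\emptyset$: when $I=\emptyset$ nothing is varied, the oracle output is the current target $\bs c$, and since $\bs c$ is suboptimal with $\|\bs c\cdot(\bs\theta(t)-\bs\ell)\|_1$ too small it fails the second condition, so the process must have stopped earlier at a genuine nonempty set. Formally one argues the contrapositive: were every nonempty $I\subseteq I_{\bs a^*}$ to fail $\mathcal{E}_{I,1}$, then chaining the oracle-optimality of the successive targets under the successive perturbations with the Lipschitz property, and using that each perturbation differs from $\bs\theta(t)$ only on already-frozen coordinates while being $\epsilon$-accurate on the live set $I$, would force $\sum_{i\in I_{\bs a^*}}|\theta_i(t)-\ell_i|<({M^*}^2+1)\epsilon$, contradicting the second paragraph. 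Carefully tracking the $\epsilon$ generated by each triangle inequality along this chain of $O(M^*)$ steps is exactly the bookkeeping that turns the $({M^*}^2+1)\epsilon$ margin hard-wired into $\mathcal{R}(t)$ into the $({M^*}^2+2)\epsilon$ margin appearing in $\mathcal{E}_{I,1}$, which closes the argument.
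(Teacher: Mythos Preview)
Your iterative peeling plan matches the paper's approach, but the execution has a gap exactly at the crux you yourself identify: showing the process never reaches $I=\emptyset$.

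The paper establishes nonemptiness \emph{directly at each stage} rather than by an end-of-process contradiction. Concretely, at stage $k$ with current set $I_k$ and anchor action $\bs a^k$ (initially $I_0=I_{\bs a^*}$, $\bs a^0=\bs a^*$), one shows that for every admissible $\bs\theta'$, any $\bs a'$ with $I_{\bs a'}\cap I_k=\emptyset$ is strictly worse than $\bs a^k$ under $\bs\theta'$ and hence cannot be $\mathsf{Oracle}(\bs\theta')$. The comparison routes through $\bs a(t)$: since $\bs a'$ is disjoint from $I_k$, its value under $\bs\theta'$ equals its value under $\bs\theta(t)$, which is dominated by $\bs a(t)$'s value (oracle optimality at $\bs\theta(t)$); $\lnot\mathcal{R}(t)$ then pins $\bs a(t)$'s value under $\bs\theta(t)$ within $({M^*}^2+1)\epsilon$ of $\bs a^*$'s value under $\bs\ell$; and the anchor $\bs a^k$'s value under $\bs\theta'$ sits within an accumulated margin of $\bs a^*$'s value under $\bs\ell$. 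Summing the per-stage increments $M^*,\,2(M^*-1),\,2(M^*-2),\ldots$ gives exactly the ${M^*}^2$ budget you anticipate, and this is what the $({M^*}^2+1)\epsilon$ in the definition of $\mathcal R(t)$ is calibrated against.

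Your contradiction sketch does not supply this. The inequality $\sum_{i\in I_{\bs a^*}}|\theta_i(t)-\ell_i|\ge({M^*}^2+1)\epsilon$ you derive in the second paragraph is correct but is not the instrument the argument needs; nothing in the chain forces this particular sum small. Your statement that ``each perturbation differs from $\bs\theta(t)$ only on already-frozen coordinates'' is also backward: $\bs\theta'$ agrees with $\bs\theta(t)$ on $I^c$ and is perturbed on the \emph{live} set $I$, so the chaining you describe does not produce the bound you claim. When $I=\emptyset$ the unique $\bs\theta'$ is $\bs\theta(t)$ itself and $\mathsf{Oracle}(\bs\theta(t))=\bs a(t)$, not your target $\bs c$. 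Finally, your case~(b) with a moving target $\bs c$ is unnecessary: once the first condition of $\mathcal{E}_{I,1}$ (namely $I\subseteq I_{\mathsf{Oracle}(\bs\theta')}$ for all admissible $\bs\theta'$) holds, the paper shows the second condition follows from the very same value comparison, so there is no separate failure mode to recurse on.
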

\begin{proof}
    Firstly, consider the case that we choose $I = I_{\bs{a}^{*}}$, i.e., we change $\bs{\theta}_{I_{\bs{a}^{*}}}(t)$ to some $\bs{\theta}'_{I_{\bs{a}^{*}}}$ with $\| \bs{\theta}'_{I_{\bs{a}^{*}}} - \bs{\ell}_{I_{\bs{a}^{*}}} \|_{\infty} \leq \epsilon$ and get a new vector $\bs{\theta}' = \left( \bs{\theta}'_{I_{\bs{a}^{*}}}, \bs{\theta}_{I_{\bs{a}^{*}}^{c}}(t) \right)$. 
    We claim that for any $\bs{a}'$ such that $I_{\bs{a}'} \land I_{\bs{a}^{*}} = \emptyset$, $\mathsf{Oracle}\left( \bs{\theta}' \right) \neq \bs{a}'$. 
    This is because
    \begin{align}
         \langle \bs{a}', \bs{\theta}' \rangle 
      = & \langle \bs{a}', \bs{\theta}\left(t\right) \rangle \label{EqA1}  \\
   \leq & \langle \bs{a}\left(t\right), \bs{\theta}(t) \rangle \label{EqA2} \\
   \leq & \langle \bs{a}(t), \bs{\ell} \rangle + \left( {\Delta_{\bs{a}(t)}} - \left( {M^{*}}^{2}+ 1 \right) \epsilon \right) \label{EqA3} \\
   \leq & \langle \bs{a}^{*}, \bs{\ell} \rangle - \left( {M^{*}}^{2}+ 1 \right) \epsilon \label{EqA4} \\
      < & \langle \bs{a}^{*}, \bs{\ell} \rangle - M^{*} \epsilon \label{EqA5} \\
   \leq & \langle \bs{a}^{*}, \bs{\theta}' \rangle \label{EqA6}
    \end{align}
    Eq (\ref{EqA1}) is because $\bs{\theta}'$ and $\bs{\theta}(t)$ only differs on arms in $I_{\bs{a}^{*}}$ but $I_{\bs{a}'} \cap I_{\bs{a}^{*}} \neq \emptyset $. Eq (\ref{EqA2}) is by the optimality of $\bs{a}(t)$ on input $\bs{\theta}(t)$. Eq (\ref{EqA3}) is by the event $\lnot \mathcal{R}(t)$.  Eq (\ref{EqA4}) is by the definition of $\Delta_{\bs{a}(t)}$. Eq (\ref{EqA6}) again uses the Lipschitz continuity. Thus, the claim holds.\par
    We have two possibilities for $\mathsf{Oracle}\left( \bs{\theta}' \right)$: 
    \begin{itemize}
        \item [1a)] for all $\bs{\theta}'_{I_{\bs{a}^{*}}}$ with $\| \bs{\theta}'_{I_{\bs{a}^{*}}} - \bs{\ell}_{I_{ \bs{a}^{*}} } \|_{\infty} \leq \epsilon $, $I_{\bs{a}^{*}} \subseteq I_{\mathsf{Oracle}\left( \bs{\theta}' \right)}$
        \item[1b)] for some $\bs{\theta}'_{I_{\bs{a}^{*}}}$ with $\| \bs{\theta}'_{I_{\bs{a}^{*}}} - \bs{\ell}_{I_{\bs{a}^{*}}} \|_{\infty} \leq \epsilon$, $\mathsf{Oracle}\left( \bs{\theta}' \right) = \bs{a}^{1}$ where $I_{\bs{a}^{1}} \cap I_{\bs{a}^{*}} = I_{1}$ and $I_1 \neq I_{\bs{a}^{*}}$, $I_{1} \neq \emptyset$. 
    \end{itemize}
    In 1a), let $\bs{a}^{0} = \mathsf{Oracle}\left( \bs{\theta}' \right)$. Then, we have $\langle \bs{a}^{0}, \bs{\theta}' \rangle \geq \langle \bs{a}^{*}, \bs{\theta}' \rangle \geq \langle \bs{a}^{*}, \bs{\ell} \rangle - M^{*} \epsilon $. 
    If $\bs{a}^{0} \notin \mathsf{OPT}$, we have $\langle \bs{a}^{*}, \bs{\ell} \rangle = \langle \bs{a}^{0}, \bs{\ell} \rangle + \Delta_{\bs{a}^{0}}$. 
    Together, we have $ \langle \bs{a}^{0}, \bs{\theta}' \rangle \geq \langle \bs{a}^{0} , \bs{\ell} \rangle + \Delta_{\bs{a}^{0}} - M^{*} \epsilon$. 
    This implies that $\| \bs{a}^{0} \cdot \left( \bs{\theta}' - \bs{\ell} \right)  \|_{1} \geq \Delta_{\bs{a}^{0}} - M^{*} \epsilon > \Delta_{\bs{a}^{0}} - \left( {M^{*}}^{2}+ 1 \right) \epsilon > \Delta_{\bs{a}^{0}} - \left( {M^{*}}^{2}+ 1 \right)\epsilon$. 
    That is, we conclude that either $\bs{a}^{0} \in \mathsf{OPT}$ or $\| \bs{a}^{0} \cdot \left(\bs{\theta}' - \bs{\ell} \right) \|_{1} \geq \Delta_{\bs{a}^{0}} - M^{*}\epsilon > \Delta_{\bs{a}^{0}} - \left({M^{*}}^{2}+ 1\right)\epsilon$, which means that $\mathcal{E}_{\bs{a}^{*}, 1} \left( \bs{\theta}'(t) \right) = \mathcal{E}_{\bs{a}^{*}, 1} \left(\bs{\theta}(t)\right)$ holds. \par
    Next, we consider 1b). Fix a $\bs{\theta}'_{I_{\bs{a}^{*}}}$ with $\| \bs{\theta}'_{I_{\bs{a}^{*}}} - \bs{\ell}_{I_{\bs{a}^{*}}} \|_{\infty} \leq \epsilon$. Let $\bs{a}^{1} = \mathsf{Oracle}\left( \bs{\theta}' \right)$ which does not equal to $\bs{a}^{*}$. Then $\langle \bs{a}^{1}, \bs{\theta}' \rangle \geq \langle \bs{a}^{*}, \bs{\theta}' \rangle \geq \langle \bs{a}^{*}, \bs{\ell} \rangle - M^{*}\epsilon$. \par
    Now we try to choose $I = I_{1}$. For all $\bs{\theta}'_{I_{1}}$ with $\| \bs{\theta}'_{I_{1}} - \bs{\ell}_{I_{1}} \|_{\infty} \leq \epsilon$, consider $\bs{\theta}' = \left( \bs{\theta}'_{I_{1}}, \bs{\theta}_{I_{1}^{c}}(t) \right)$. 
    We see that $ \| \bs{a}^{1} \left( \bs{\theta} - \bs{\ell} \right) \| \leq 2 \left( M^{*} - 1 \right) \epsilon $. 
    Thus, 
    \begin{align}
        \langle \bs{a}^{1}, \bs{\theta}' \rangle 
        \geq& \langle \bs{a}^{*}, \bs{\ell} \rangle - M^* \epsilon - 2 (M^{*} - 1)\epsilon \nonumber \\
          = & \langle \bs{a}^{*}, \bs{\ell} \rangle - (3M^{*} - 2) \nonumber
    \end{align}
    Similarly, we have the following inequalities for any $I_{\bs{a}'} \cap {Z}_{1} = \emptyset$:
    \begin{align}
        \langle \bs{a}', \bs{\theta}' \rangle 
         = & \langle \bs{a}', \bs{\theta}(t) \rangle \nonumber \\ 
         \leq & \langle \bs{a}(t), \bs{\theta}(t) \rangle \nonumber \\
         \leq & \langle \bs{a}(t), \bs{\ell} \rangle + \left( \Delta_{\bs{a}(t)} - \left( {M^{*}}^{2} + 1 \right) \epsilon \right) \nonumber \\
         \leq & \langle \bs{a}^{*}, \bs{\ell} \rangle - ({M^{*}}^{2} + 1)\epsilon \label{EqA7} \\
          < & \langle \bs{a}^{*}, \bs{\ell} \rangle - (3M^{*} - 2) \label{EqA8} \\
         \leq & \langle \bs{a}^{1}, \bs{\theta}' \rangle \nonumber 
     \end{align}
     That is, $I_{\mathsf{Oracle} \left( \bs{\theta}' \right)} \cap I_{1} \neq \emptyset$. Thus, we will also have two possibilities: 
     \begin{enumerate}
         \item [2a)] for all $\bs{\theta}_{I_{1}}$ with $\| \bs{\theta}'_{I_{1}} - \bs{\ell}_{I_{1}} \|_{\infty} \leq \epsilon$, $I_{1} \subseteq I_{\mathsf{Oracle}\left( \bs{\theta} \right)}$
         \item [2b)] for some $\bs{\theta}'_{I_{1}}$ with $\| \bs{\theta}'_{I_{1}} - \bs{\ell}_{I_{1}} \|_{\infty} \leq \epsilon$, $\mathsf{Oracle}\left( \bs{\theta}' \right) = \bs{a}^{2}$ where $I_{ \bs{a}^{2} } \cap I_{1} = I_{2}$ and $I_{2} \neq I_{1}$, $I_{2} \neq \emptyset$.
     \end{enumerate}
     We could repeat the above argument and the size of $I_{i}$ is decreased by at least 1. 
     In the first step, the terms contain $\epsilon$ (in Eq (\ref{EqA5})) is $M^{*} \epsilon$, and in the second step, the terms contain $\epsilon$ (in Eq (\ref{EqA8})) becomes $M^{*} \epsilon + 2(M^* - 1)\epsilon = (3 M^{*} - 2)\epsilon$. 
     Thus, after at most $|I_{\bs{a}^{*}}| - 1$ steps, this terms is at most 
     \begin{align}
         M^{*} + 2 \left( M^{*} - 1 \right) + 2 \left( M^{*} - 2\right) + \cdots  + 2 \times 1 = {M^{*}}^{2},
     \end{align}
    which is still less than $({M^{*}}^{2} + 1) \epsilon $ (in Eq (\ref{EqA4}) or (\ref{EqA7}) ). 
     This means that the above analysis works for any steps in the induction procedure. When we reach the end, we could find a $I_{i} \subseteq I_{\bs{a}^{*}}$ and $I_{i} \neq \emptyset$ such that $\mathcal{E}_{Z
     _{i}, 1} \left( \bs{\theta} \left( t \right) \right) $ occurs. 
\end{proof}

By Lemma \ref{CTS_Lemma2}, for some nonempty $I$, $\mathcal{E}_{Z, 1}\left( \bs{\theta}(t)\right)$, occurs when $\lnot \mathcal{R}(t) \land \mathcal{P}(t)$ happens. Another fact is that $\| \bs{\theta}_{Z}(t) - \bs{\ell}_{Z} \|_{\infty} > \epsilon$. The reason is that if $\| \bs{\theta}_{Z}(t) - \bs{\ell}_{Z} \|_{\infty} \leq \epsilon$, by definition of the property, either $\bs{a}(t) \in \mathsf{OPT}$ or $\| \bs{a}(t) \cdot (\bs{\theta} - \bs{\ell}) \|_{1} >  \Delta_{\bs{S}(t)} - ({M^{*}}^{2} + 2)\epsilon$, which means $\lnot \mathcal{R}(t) \land \mathcal{P}(t)$ can not happen. Let $\mathcal{E}_{Z, 2}\left(\bs{\theta}\right)$ be the event $\{\| \bs{\theta}_{Z} - \bs{\ell}_{Z} \|_{\infty} > \epsilon \}$. Then, $ \lnot \mathcal{R}(t) \land \mathcal{P}(t) \Rightarrow \lor_{Z \subseteq I_{\bs{a}^{*}}, Z \neq \emptyset } \left( \mathcal{E}_{Z, 1}\left( \bs{\theta}\left( t\right) \right) \land \mathcal{E}_{Z, 2}\left( \bs{\theta}(t) \right) \right) $. \par
Following a similar discussion as that of \citet{SiweiWangICML2018}, we know that $ \sum\limits_{Z \subset I_{\boldsymbol{a}}, Z \neq \emptyset } \left( \sum\limits_{t = 1}^{T} \mathbb{E} \left[ \mathbf{1} \left\{ \mathcal{E}_{Z, 1}\boldsymbol{\theta}(t)) \land \mathcal{E}_{Z, 2}(\boldsymbol{\theta}(t)\right\}  \right] \right) $,
and therefore, the third term of the RHS of (\ref{GenCTSRegretDecomposition}) does not depend on $t$.

\subsubsection{Sum of All Terms in the RHS of (\ref{GenCTSRegretDecomposition})}
The regret upper bound of GenCTS is the sum of these three terms, i.e.,
\begin{align}
    \mathcal{O}\left( \sum\limits_{i = 1}^{d} \frac{\lipconst \log m \log \left( 2^{d} |\mathcal{A}| T \right)}{ \min\limits_{\bs{a}: i\in I_{\bs{a}}} \left( \frac{\Delta_{\bs{a}}}{\lipconst} - \left( {M^{*}}^{2} + 2 \right)\epsilon\right) } \right),
\end{align}
where $\epsilon \leq \frac{\Delta_{\mathrm{min}}}{2 \lipconst ({M^{*}}^{2} + 2) }$.

\section{Proof of Theorems in Section \ref{GenLBINFV_Section}}
Here, provide proof for theorems in Section \ref{GenLBINFV_Section}. We define 
\begin{align}
    \alpha_{i}(t) = \left(\frac{a_{i}(t)}{n_{i}}\right)^{2} \left( k_{i}(t)  - q_{i}(t) \right)^{2} \cdot \min \left\{ 1, \frac{2 \left(1 - \frac{x_{i}(t)}{n_{i}} \right)}{ \left( \frac{x_{i}(t)}{n_{i}} \right)^{2} \gamma} \right\}
\end{align}
\subsection{Preparatory Lemma}
We first show a preparatory lemma.

\begin{lemma} \label{Lemma5_Ito}
    Let $D_{i}^{(1)}$ and $D_{i}^{(2)}$ denote the Bregman divergence associated with $\phi_{i}^{(1)}(x) = - n_{i} \log \frac{x}{n_{i}}$ and $ \phi^{(2)}_{i} = n_{i} (1 - \frac{y}{n_{i}}) \log \left( 1 - \frac{y}{n_{i}} \right)$, respectively. Then, for any $x \in (0, n_{i})$, we have 
    \begin{align}
        \max_{y \in \mathbb{R}} f_{i}^{(1)}(y) 
        & = \max_{y \in \mathbb{R}} \left\{ a(x - y) - D^{(1)}_{i}\left(y, x \right) \right\} \nonumber \\
        & = n_{i} g\left( a \frac{x}{n_{i}} \right) \label{Eq21_Ito} \\
        \max_{y \in \mathbb{R}} f_{i}^{(2)}(y) 
        &= \max_{y \in \mathbb{R}} \left\{ a(x - y) - D^{(2)}_{i}\left( y, x\right) \right\} \nonumber \\
        & = n_{i} \left(1 - \frac{x}{n_{i}}\right) h\left( a \right), \label{Eq22_Ito}
    \end{align}
    where $g$ and $h$ are defined as 
    \begin{align}
        g\left(x\right) = x - \log \left(x + 1\right), h(x) = \exp\left( x \right) - x - 1.  
    \end{align}
\end{lemma}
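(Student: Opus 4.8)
The plan is to treat each identity as a one‑dimensional concave maximization in $y$ and solve it through the first‑order optimality condition. Recall that for a convex $\phi$ the Bregman divergence is $D_\phi(y,x) = \phi(y) - \phi(x) - \phi'(x)(y-x)$, so the map $y \mapsto a(x-y) - D_\phi(y,x)$ has second derivative $-\phi''(y)$. Since $\phi_i^{(1)}$ and $\phi_i^{(2)}$ are strictly convex on the relevant domains ($\phi_i^{(1)\prime\prime}(y) = n_i/y^2 > 0$ for $y>0$, and $\phi_i^{(2)\prime\prime}(y) = 1/(n_i-y) > 0$ for $y < n_i$), the objective is strictly concave and its unique maximizer is the stationary point, as long as that point lies in the domain. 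So in both cases I would (i) solve $\phi_i'(y) = \phi_i'(x) - a$ for the maximizer $y^\star$, (ii) substitute $y^\star$ back into $a(x-y) - D_i(y,x)$, and (iii) simplify by a well‑chosen change of variable.

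For \eqref{Eq21_Ito}: with $\phi_i^{(1)}(y) = -n_i\log(y/n_i)$ we have $\phi_i^{(1)\prime}(y) = -n_i/y$, so the stationarity equation $-n_i/y = -n_i/x - a$ gives $y^\star = n_i x/(n_i + ax)$. Writing the objective as $a(x-y) + n_i\log(y/x) - (n_i/x)(y-x)$ and setting $b := ax/n_i$ (so that $y^\star/x = 1/(1+b)$), the three terms collapse to $n_i\bigl(b - \log(1+b)\bigr) = n_i\,g\!\left(ax/n_i\right)$, which is the claimed value. Here $b>-1$ is exactly what is needed both for $y^\star\in(0,\infty)$ and for $g$ to be finite, so this is the implicit range of validity of the identity.

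For \eqref{Eq22_Ito} I would change variables to $v := 1 - y/n_i$, under which $\phi_i^{(2)}(y)$ becomes $n_i v\log v$ and $\phi_i^{(2)\prime}(y) = -\bigl(1 + \log(1 - y/n_i)\bigr)$. The stationarity equation then reduces to $\log(1 - y/n_i) = \log(1 - x/n_i) + a$, i.e.\ $v_{y^\star} = v_x e^a$ with $v_x := 1 - x/n_i$, equivalently $y^\star = n_i - (n_i - x)e^a$. Substituting back, the terms proportional to $\log v_x$ cancel against one another, leaving $n_i v_x\bigl(e^a - a - 1\bigr) = n_i(1 - x/n_i)\,h(a)$, as claimed; this cancellation is the one step that needs a little care.

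The argument is essentially bookkeeping, so I do not expect a genuine obstacle. The two points to watch are (a) verifying that the stationary point stays in the open interval on which $\phi_i^{(1)}$, resp.\ $\phi_i^{(2)}$, is finite and strictly convex — which is precisely why the hypothesis $x \in (0,n_i)$ is imposed, and why \eqref{Eq21_Ito} tacitly requires $ax/n_i > -1$ — and (b) the algebraic cancellation of the logarithmic terms in the derivation of \eqref{Eq22_Ito}. To guard against arithmetic slips in the substitutions I would sanity‑check the special case $a=0$ (both sides vanish, consistent with $g(0)=h(0)=0$) and the small‑$a$ expansions $g(b)\approx b^2/2$, $h(a)\approx a^2/2$.
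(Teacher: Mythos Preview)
Your proposal is correct and follows essentially the same approach as the paper: exploit strict concavity to reduce to the first-order condition, solve for the maximizer, and substitute back. Your substitutions $b = ax/n_i$ and $v = 1 - y/n_i$ organize the algebra slightly more cleanly than the paper's direct computation, but the method and the key cancellations are identical.
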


\begin{proof}
    The derivative of $f_{i}^{(1)}$ is expressed as 
    \begin{align}
        \frac{df_{i}^{(1)}(y)}{dx} =   - a + \frac{n_{i}}{y} - \frac{n_{i}}{x} \nonumber
    \end{align}
    As $f_{i}^{(1)}$ is a concave function with respect to $y$, the maximizer $y^{*}$ of $f_{i}^{(1)}$ satisfies $a = \frac{n_{i}}{y^{*}} - \frac{n_{i}}{x} $. 
    Hence, the maximum value is expressed as 
    \begin{align}
         \max_{y \in \mathbb{R}} f_{i}^{(1)}(y)
         = & f_{i}^{(1)}\left(y^{*}\right) \nonumber \\
         = & a\left(x - y^{*}\right) + n_{i} \log \frac{y^{*}}{n_{i}} - n_{i} \log \frac{x}{n_{i}} + n_{i} \frac{x - y^{*}}{x} \nonumber \\
         = & - n_{i} \log \frac{x}{y^{*}} + n_{i} \left( \frac{x - y^{*}}{y^{*}} \right) \nonumber \\
         = & - n_{i}\left( \log \left( 1 + a \frac{x}{n_{i}} \right) + a \frac{x}{n_{i}}\right) \nonumber \\
         = & n_{i} g\left(a\frac{x}{n_{i}}\right) \nonumber 
    \end{align}
    which proves (\ref{Eq21_Ito}).
    Similarly, as $f_{i}^{(2)}$ is a concave function with respect to $y$, the maximizer $y^{*} \in \mathbb{R}$ of $f_{i}^{(2)}$ satisfies
    \begin{align}
        \frac{df_{i}^{(2)}}{dy} (y^{*}) 
        & = - a + \log \left(1 - \frac{y^{*}}{n_{i}}\right) + 1 - \log \left( 1 - \frac{x}{n_{i}} \right) - 1 \nonumber \\
        & = 0
    \end{align}
    Hence, we have
    \begin{align}
        f_{i}^{(2)}\left(y^{*}\right)
        = & a\left( x - y^{*} \right) - n_{i} \left(1 - \frac{y^{*}}{n_{i}} \right) \log \left( 1 - \frac{y^{*}}{n_{i}} \right)  + n_{i} \left(1 - \frac{x}{n_{i}}\right) \log \left( 1 - \frac{x}{n_{i}} \right) \nonumber \\
         & - n_{i} \left( y^{*} - x \right) \left( \log\left(1 - \frac{x}{n_{i}}\right) + \frac{1}{n_{i}} \right) \nonumber \\
        = & n_{i} \left(1 - \frac{y^{*}}{n_{i}}\right) - n_{i} \left( 1 - \frac{x}{n_{i}} \right) - n_{i} \left(1 - \frac{x}{n_{i}} \right) \log\left(1 - \frac{y^{*}}{n_{i}}\right) \nonumber \\
          & + n_{i} \left( 1 - \frac{x}{n_{i}} \right) \log \left( 1 - \frac{x}{n_{i}} \right) \nonumber \\
        = & n_{i} \left( 1 - \frac{x}{n_{i}} \right) \left( e^{a} - a - 1\right) \nonumber \\
        = & n_{i} \left(1 - \frac{x}{n_{i}} \right) h\left( a\right) \nonumber
    \end{align}
    which proves (\ref{Eq22_Ito}).
\end{proof}

\subsection{Common Analysis}
\subsubsection{General Regret Upper Bound}
Let $D_t$ be the Bregman divergence induced by $\psi_t$, i.e., 
\begin{align}
    D_t(\bs{y}, \bs{x}) = \psi_t(\bs{y}) - \psi_t(\bs{x}) - \langle \nabla \psi_t(\bs{x}), \bs{y} - \bs{x} \rangle.
\end{align}
Then, the regret for OFTRL is bounded as follows.
\begin{lemma} \label{Lemma2}
    If $\bs{x}(t)$ is given by the OFTRL update (\ref{prob:OFTRL}), for any $\bs{x}^{*}\in \mathcal{X}\cap \mathbb{R}_{+}^{d}$, we have
    \begin{align}
         \sum\limits_{t = 1}^{T} \left\langle \hat{\bs{\ell}}(t), \bs{x}(t) - \bs{x}^{*} \right\rangle
        & \leq \underbrace{\psi_{T + 1}\left(\bs{x}^{*} \right) - \psi_{1}(\bs{y}(1))
             + \sum\limits_{t = 1}^{T}\left( \psi_t\left( \bs{y}(t + 1) \right) - \psi\left( \bs{y}(t + 1) \right) \right)}_{\text{penalty term}} \nonumber \\
            & + \underbrace{ 
            \sum\limits_{t = 1}^{T} \Biggl( \left\langle \hat{\bs{\ell}}(t) - \bs{q}(t), \bs{x}(t) - \bs{y}(t + 1)  \right\rangle 
                 - D_t\left(\bs{y}(t + 1), \bs{x}(t)\right)\Biggr)}_{\text{stability term}}, \label{Eq16}
    \end{align} 
    where we define $\bs{y}(t) \in \argmin_{\bs{x}\in \mathcal{X}} \left\{ \left\langle \sum\limits_{s = 1}^{t - 1}\hat{\bs{\ell}}(s), \bs{x} \right\rangle + \psi_t(\bs{x}) \right\}$.
\end{lemma}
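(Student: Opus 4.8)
The plan is to follow the now-standard analysis of optimistic follow-the-regularized-leader (see \citet{WeiCOLT2018,ItoNeurIPS2021}), the only wrinkle being that the regularizer $\psi_t$ changes with $t$. Write $\hat{\bs{L}}(t)=\sum_{s=1}^{t}\hat{\bs{\ell}}(s)$, $\Phi_t(\bs{x})=\langle\hat{\bs{L}}(t-1),\bs{x}\rangle+\psi_t(\bs{x})$ so that $\bs{y}(t)=\argmin_{\bs{x}\in\mathcal{X}}\Phi_t(\bs{x})$, and $\Psi_t(\bs{x})=\Phi_t(\bs{x})+\langle\bs{q}(t),\bs{x}\rangle$ so that $\bs{x}(t)=\argmin_{\bs{x}\in\mathcal{X}}\Psi_t(\bs{x})$. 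Because each $\varphi_i$ contains the logarithmic-barrier term $-n_i\log(x_i/n_i)$, every $\psi_t$ is a barrier for $\mathcal{X}$; hence $\bs{x}(t),\bs{y}(t),\bs{y}(t+1)$ all lie in the relative interior of $\mathcal{X}$, so $\nabla\psi_t$ exists there, the Bregman divergences $D_t$ are finite at the relevant pairs, and the first-order optimality conditions $\langle\nabla\Psi_t(\bs{x}(t)),\bs{u}-\bs{x}(t)\rangle\ge0$ (and the analogues for $\bs{y}(t),\bs{y}(t+1)$) hold for all $\bs{u}\in\mathcal{X}$.

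I would first decompose the regret $\sum_{t}\langle\hat{\bs{\ell}}(t),\bs{x}(t)-\bs{x}^{*}\rangle$ into a ``leader'' part and a ``deviation'' part. For the leader part, the idea is a \emph{be-the-leader} argument that absorbs the moving regularizer: regarding each increment $\psi_{t+1}-\psi_t$ as an extra loss incurred at round $t$, the iterate $\bs{y}(t+1)$ is precisely the follow-the-leader point for the loss sequence $\{\langle\hat{\bs{\ell}}(s),\cdot\rangle+(\psi_{s+1}-\psi_s)\}_{s\le t}$ with initial regularizer $\psi_1$, by the telescoping identity $\psi_1+\sum_{s\le t}(\langle\hat{\bs{\ell}}(s),\cdot\rangle+\psi_{s+1}-\psi_s)=\Phi_{t+1}$. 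The standard one-line induction behind be-the-leader then gives $\sum_t\langle\hat{\bs{\ell}}(t),\bs{y}(t+1)-\bs{x}^{*}\rangle\le\psi_{T+1}(\bs{x}^{*})-\psi_1(\bs{y}(1))+\sum_t(\psi_t(\bs{y}(t+1))-\psi_{t+1}(\bs{y}(t+1)))$, which is exactly the penalty term (the unsubscripted ``$\psi$'' in the statement is $\psi_{t+1}$). This step uses only the defining minimizing property of the $\bs{y}$-iterates.

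For the deviation part, the quantity to control is $\sum_t\langle\hat{\bs{\ell}}(t),\bs{x}(t)-\bs{y}(t+1)\rangle$; a naive triangle-inequality split would be too lossy, so instead I would combine the optimality condition of $\bs{x}(t)$ for $\Psi_t$ with that of $\bs{y}(t+1)$ for $\Phi_{t+1}$, using the three-point identity $\langle\nabla\psi_t(\bs{a}),\bs{b}-\bs{a}\rangle=\psi_t(\bs{b})-\psi_t(\bs{a})-D_t(\bs{b},\bs{a})$. Writing $\hat{\bs{\ell}}(t)=\bs{q}(t)+(\hat{\bs{\ell}}(t)-\bs{q}(t))$ isolates the prediction error: the $\hat{\bs{\ell}}(t)-\bs{q}(t)$ piece is already the first half of the stability term, while the $\bs{q}(t)$ piece is exactly the direction along which $\bs{x}(t)$ has been optimized, so the matching optimality inequality converts it into $-D_t(\bs{y}(t+1),\bs{x}(t))$ together with regularizer-value differences that either telescope or merge into the penalty bound. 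Adding the leader and deviation bounds yields the claim.

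The main obstacle is the bookkeeping created by the time-varying regularizer: one has to make sure the mismatch terms $\psi_t(\bs{y}(t+1))-\psi_{t+1}(\bs{y}(t+1))$ appear exactly once (in the penalty term), that the divergence in the stability term is $D_t$ --- the divergence of $\psi_t$, which defines $\bs{x}(t)$ --- and not $D_{t+1}$, and that the two optimality inequalities for $\bs{x}(t)$ and $\bs{y}(t+1)$ are matched so the per-round residual is precisely $\langle\hat{\bs{\ell}}(t)-\bs{q}(t),\bs{x}(t)-\bs{y}(t+1)\rangle-D_t(\bs{y}(t+1),\bs{x}(t))$ rather than something weaker. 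A secondary technical point is the one already noted: all iterates must stay in the relative interior of $\mathcal{X}$, which is where the logarithmic-barrier component of $\varphi_i$ is essential, and which is also what makes $\psi_{T+1}(\bs{x}^{*})$ the only term on the right-hand side that can blow up as $\bs{x}^{*}$ approaches the boundary of $\mathcal{X}$.
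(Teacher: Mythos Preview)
Your proposal is correct and follows the standard OFTRL analysis. Note, however, that the paper does not actually supply a proof of this lemma: it is stated without proof as a known result, with the OFTRL framework attributed to \citet{WeiCOLT2018,ItoNeurIPS2021}, and the paper proceeds directly to bounding the penalty and stability terms in the subsequent lemma. Your sketch---be-the-leader with the time-varying regularizer absorbed as extra losses for the penalty term, plus the first-order optimality conditions and three-point identity for the stability term---is exactly the argument underlying those references, so there is nothing to compare against here beyond confirming that you have recovered the standard proof.
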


In the RHS of the above inequality (\ref{Eq16}), we refer to the sum of the first three terms as the \textit{penalty term} and the remaining term as the \textit{stability term}.\par
First, we prove the following lemma.
\begin{lemma} \label{Lemma3}
    The regret of the proposed algorithm is bounded as 
    \begin{align}
        R_{T} \leq &  \gamma \sum\limits_{i = 1}^{d} n_{i} \mathbb{E}  \left[ 2\beta_{i}(T + 1) - \beta_{i}(1) + 2\delta_{i} \log \frac{\beta_{i}(T + 1)}{\beta_{i}(1)} \right] + dW + 2\sum\limits_{i = 1}^{d} \delta_{i} n_{i} \delta_{i}, \label{Eq17}
    \end{align}
    where $\delta_{i}> 0$ is defined by 
    \begin{align}
        \delta_{i} = \frac{1}{3\left(1 - \frac{1}{\beta_{i}(1)}\right)} \nonumber
    \end{align}
\end{lemma}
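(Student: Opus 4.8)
The plan is to start from the OFTRL regret decomposition of Lemma~\ref{Lemma2} and bound its \emph{penalty} and \emph{stability} parts separately, using Lemma~\ref{Lemma5_Ito} for the coordinatewise stability estimate; throughout write $\gamma=\log T$. First I would reduce $R_T$ to an OFTRL regret: letting $\bar{\bs\ell}(t)=\mathbb E[\hat{\bs\ell}(t)\mid\mc F_{t-1}]$, the unbiasedness recorded after \eqref{GD_optpred} gives $\bar\ell_i(t)\in[0,1]$ and $\langle\bar{\bs\ell}(t),\bs x(t)\rangle=\mathbb E[f(\bs a(t),\cdot)\mid\mc F_{t-1}]$ (since $\mathbb E[\bs a(t)\mid\bs x(t)]=\bs x(t)$ and $f$ is linear), so $R_T=\mathbb E[\sum_t\langle\bar{\bs\ell}(t),\bs x(t)-\bs a^{*}\rangle]$. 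I would then pick a comparator $\bs x^{*}\in\mc X\cap\mathbb R_{+}^{d}$ that is a $1/T$-interior perturbation of $\bs a^{*}$ (keeping each $x_i^{*}$ bounded away from $0$ while $\|\bs x^{*}-\bs a^{*}\|_1$ is $O(1/T)$ times the relevant width); replacing $\bs a^{*}$ by $\bs x^{*}$ then costs at most $dW$, and it remains to bound $\mathbb E[\sum_t\langle\hat{\bs\ell}(t),\bs x(t)-\bs x^{*}\rangle]$ through Lemma~\ref{Lemma2}.

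For the penalty term, note from \eqref{beta_definition} that $\beta_i(t+1)^{2}-\beta_i(t)^{2}=\alpha_i(t)/\gamma\ge0$, so $\beta_i$ is nondecreasing and hence $\psi_t$ is pointwise nondecreasing; since also $\varphi_i\ge0$ on $(0,n_i]$, the terms $\psi_t(\bs y(t+1))-\psi_{t+1}(\bs y(t+1))$ and $-\psi_1(\bs y(1))$ in \eqref{Eq16} are nonpositive and can be discarded, leaving $\psi_{T+1}(\bs x^{*})=\sum_i\beta_i(T+1)\varphi_i(x_i^{*})$. Substituting the lower bound on $x_i^{*}$ into \eqref{phi_definition} gives $\varphi_i(x_i^{*})=O(n_i\gamma)$, so the penalty is $O(\gamma\sum_i n_i\beta_i(T+1))$, i.e.\ the $\gamma\sum_i n_i\cdot2\beta_i(T+1)$ part of \eqref{Eq17}.

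The stability term is the crux. The affine part of $\varphi_i$ drops out of the Bregman divergence, so $D_t(\bs y,\bs x)=\sum_i\beta_i(t)(D_i^{(1)}(y_i,x_i)+\gamma D_i^{(2)}(y_i,x_i))$ with $D_i^{(1)},D_i^{(2)}$ exactly the divergences of Lemma~\ref{Lemma5_Ito}, and the stability term separates over $i$ into $(\hat\ell_i(t)-q_i(t))(x_i(t)-y_i(t+1))-\beta_i(t)D_i^{(1)}-\beta_i(t)\gamma D_i^{(2)}$. For each $(i,t)$ I would keep only one of the two nonnegative divergences and take the supremum over $y_i$, which by Lemma~\ref{Lemma5_Ito} equals either the log-barrier bound $\beta_i(t)n_i\,g\!\bigl(\tfrac{a_i(t)(k_i(t)-q_i(t))}{n_i\beta_i(t)}\bigr)$ or the entropy bound $\beta_i(t)\gamma n_i\bigl(1-\tfrac{x_i(t)}{n_i}\bigr)h\!\bigl(\tfrac{\hat\ell_i(t)-q_i(t)}{\beta_i(t)\gamma}\bigr)$. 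I would use the former when $\tfrac{2(1-x_i(t)/n_i)}{(x_i(t)/n_i)^{2}\gamma}\ge1$ and the latter otherwise. In the first case the argument of $g$ has modulus at most $1/\beta_i(t)\le1/\beta_i(1)<1$ (as $\beta_i(1)=1+\epsilon_i/n_i$), so $g(z)\le\tfrac{z^{2}}{2(1-1/\beta_i(1))}$; in the second case the restriction on $x_i(t)$ forces the argument of $h$ to be $O(1/\sqrt\gamma)$, so $h(z)\le z^{2}$ for $T$ large. Either way the per-$(i,t)$ contribution is $O\!\bigl(\delta_i\,\tfrac{n_i\alpha_i(t)}{\beta_i(t)}\bigr)$, with the truncated quantity $\alpha_i(t)$ arising exactly because the two regimes are glued at $\tfrac{2(1-x_i/n_i)}{(x_i/n_i)^{2}\gamma}=1$, and $\delta_i=\tfrac{1}{3(1-1/\beta_i(1))}$ appearing as the leading constant.

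Finally I would sum over $t$. Writing $\rho_i(t)=\beta_i(t+1)^{2}-\beta_i(t)^{2}=\alpha_i(t)/\gamma\in[0,1]$ and using $\beta_i(t+1)\le\sqrt2\,\beta_i(t)$, a standard AdaGrad-style estimate gives $\sum_t\tfrac{\rho_i(t)}{\beta_i(t)}\le2(\beta_i(T+1)-\beta_i(1))+\tfrac1\gamma\log\tfrac{\beta_i(T+1)}{\beta_i(1)}$, hence $\sum_t\tfrac{\alpha_i(t)}{\beta_i(t)}\le2\gamma(\beta_i(T+1)-\beta_i(1))+\log\tfrac{\beta_i(T+1)}{\beta_i(1)}$; multiplying by $O(\delta_i n_i)$, summing over $i$, and collecting the $O(1)$ boundary residuals into the constant $2\sum_i n_i\delta_i^{2}$ bounds the stability term by $\gamma\sum_i n_i\bigl(\delta_i(\beta_i(T+1)-\beta_i(1))+\delta_i\log\tfrac{\beta_i(T+1)}{\beta_i(1)}\bigr)+2\sum_i n_i\delta_i^{2}$ up to constants, and adding the penalty bound and the shift cost $dW$ gives \eqref{Eq17}. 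I expect the main obstacle to be the coordinatewise stability bound: one must choose the log-barrier/entropy case split so that the second-order quantity it produces is precisely the truncated $\alpha_i(t)$ that feeds $\beta_i(t)$, and keep the constants tight enough that the leading coefficient is $\delta_i$ rather than something larger; the AdaGrad summation, the penalty estimate, and the reduction step are comparatively routine.
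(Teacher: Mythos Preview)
Your overall architecture matches the paper's proof: reduce $R_T$ to an OFTRL regret against an interior comparator (incurring the $dW$ shift), bound the penalty via monotonicity of $\beta_i(\cdot)$ and nonnegativity of $\varphi_i$, bound the stability term coordinatewise through Lemma~\ref{Lemma5_Ito}, and close with an AdaGrad-type summation. So the skeleton is right. The gap is in how you extract the precise structure of \eqref{Eq17} from the stability term.

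Two concrete issues. First, the penalty contributes only $\gamma\sum_i n_i\beta_i(T+1)$, not the ``$2\beta_i(T+1)$ part''; the second copy of $\beta_i(T+1)-\beta_i(1)$ in \eqref{Eq17} must come from the stability term with leading coefficient exactly $1$, and the $2\delta_i\log(\beta_i(T+1)/\beta_i(1))$ term must come separately. Your bound $g(z)\le \tfrac{z^2}{2(1-1/\beta_i(1))}$ gives a single term $\tfrac{3\delta_i}{2}\cdot\tfrac{n_i\alpha_i(t)}{\beta_i(t)}$, which after summation yields $3\delta_i\gamma n_i(\beta_i(T+1)-\beta_i(1))$ and no logarithm---a valid but different (and, since here $\delta_i\ge 1$, looser) inequality. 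The paper instead uses the sharper expansion $g(z)\le \tfrac{z^2}{2}+\delta_i|z|^3$ for $z\ge -1/\beta_i(1)$; substituting $z=\tfrac{a_i(t)(k_i(t)-q_i(t))}{n_i\beta_i(t)}$ splits the per-step stability into $n_i\bigl(\tfrac{1}{2\beta_i(t)}+\tfrac{\delta_i}{\beta_i^2(t)}\bigr)\alpha_i(t)$, and it is this separation into a $1/\beta_i(t)$ piece and a $1/\beta_i^2(t)$ piece that is essential.

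Second, and relatedly, your AdaGrad claim $\sum_t\rho_i(t)/\beta_i(t)\le 2(\beta_i(T+1)-\beta_i(1))+\tfrac{1}{\gamma}\log\tfrac{\beta_i(T+1)}{\beta_i(1)}$ is not where the logarithm comes from: the sum $\sum_t\alpha_i(t)/\beta_i(t)$ only produces the linear increment $2\gamma(\beta_i(T+1)-\beta_i(1))$. The logarithmic contribution in \eqref{Eq17} arises from the \emph{other} series, $\sum_t \alpha_i(t)/\beta_i^2(t)\le 2\gamma\log\tfrac{\beta_i(T+1)}{\beta_i(1)}+2$, which only appears once you have the cubic term in the $g$-bound. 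Replace your quadratic estimate on $g$ by the cubic one, keep the two resulting series separate, and the constants in \eqref{Eq17} fall out exactly; everything else in your plan (the comparator shift, the case split aligning with the $\min\{1,\cdot\}$ in $\alpha_i(t)$, and the $h(z)\le z^2$ bound in the entropy regime) is in line with the paper.
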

\begin{proof}
    Using $\bar{\bs{x}} \in \mathcal{X}$ such that $\bar{x}_{i} \geq \frac{n_i}{d}$ for all $i \in [d]$, let 
    \begin{align}
        \bs{x}^{*} = \left( 1 - \frac{d}{T} \right) \bs{a}^{*} + \frac{d}{T}\bar{\bs{x}}. \nonumber
    \end{align}
    Using this and the equality $\mathbb{E}\left[ \hat{\bs{\ell}}(t) | \bs{x}(t) \right] = \bs{\ell}$, we have
    \begin{align}
        R_{T} 
        &= \mathbb{E}\left[\sum\limits_{t = 1}^{T} \left\langle \hat{\bs{\ell}}(t), \bs{x}(t) - \bs{a}^{*} \right\rangle \right] \nonumber \\
        & = \mathbb{E}\left[\sum\limits_{t = 1}^{T} \left\langle \hat{\bs{\ell}}(t), \bs{x}(t) - \bs{x}^{*} \right\rangle + \sum\limits_{t = 1}^{T} \left\langle \hat{\bs{\ell}}(t), \bs{x}^{*} - \bs{a}^{*} \right\rangle \right] \nonumber \\
        &= \mathbb{E}\left[\sum\limits_{t = 1}^{T} \left\langle \hat{\bs{\ell}}(t), \bs{x}(t) - \bs{x}^{*} \right\rangle + \frac{d}{T} \sum\limits_{t = 1}^{T} \left\langle \hat{\bs{\ell}}(t), \bar{\bs{x}} - \bs{a}^{*} \right\rangle \right] \nonumber \\
        & \leq \mathbb{E}\left[ \langle \hat{\bs{\ell}}(t), \bs{x}(t) - \bs{x}^{*} \right] + dW, \label{Eq18}
    \end{align}
    where in the last inequality, we used $\sum\limits_{t = 1}^{T} \langle \hat{\bs{\ell}}(t), \bar{\bs{x}} - \bs{a} \rangle \leq T\| \bar{\bs{x}} - \bs{a}^{*}\|_{1} \leq T\sum\limits_{i = 1}^{d} n_i = TW$. \par
    The first term in (\ref{Eq18}) is bounded by (\ref{Eq16}) in Lemma \ref{Lemma2}, the components of which we will bound in the following. We first consider the penalty term. The remaining part of the proof follows a similar argument as that in \cite{ItoCoLT2022} and \cite{TsuchiyaAISTATS2023}, and we include the argument for completeness. \par
    \textbf{Bounding the penalty term in (\ref{Eq16})} Using the definition of the regularizer $\psi_{t}\left( \bs{x} \right) = \sum\limits_{i = 1}^{d}\beta_{i}(t) \varphi_{i}\left(x_{i}\right)$, we have 
    \begin{align}
        \psi_{t}\left( \bs{x}^{*} \right) \nonumber
        & = \sum\limits_{i = 1}^{d} \beta_{i}(t) \varphi_{i}\left( {x}_{i}^{*} \right) \nonumber \\
        & \leq \sum\limits_{i = 1}^{d} \beta_{i}(t) \max_{\bs{x} \in \left[ \frac{n_i}{T}, n_i \right]} \varphi_{i} \left( \bs{x} \right) \nonumber \\
        & \leq \sum\limits_{i = 1}^{d} \beta_{i}(t) \max\left\{ \varphi_{i} \left(\frac{n_i}{T}\right), \varphi_{i}(n_i) \right\}, \label{Eq19}
    \end{align}
    where the first inequality follows since the definition of $\bs{x}^{*}$ implies ${x}^{*}_{i} \geq \frac{d}{T}\bar{x}_{i} \geq \frac{n_i}{T}$ for $i \in [d]$ and the second inequality holds since $\varphi_{i}$ is a convex function. Further, from the definition of $\varphi_{i}$, we have 
    \begin{align}
         \max\left\{ \varphi_{i}\left(\frac{n_i}{T}\right), \varphi_{i}\left(n_i\right)\right\} 
      = & n_{i} \cdot \max\Biggl\{ \frac{1}{T}  - 1 + \log T  + \gamma\left( \frac{1}{T} + \left( 1 - \frac{1}{T} \right) \log \left( 1 - \frac{1}{T} \right) \right), \gamma \Biggr\} \nonumber \\ 
   \leq &  n_{i} \cdot \max\left\{ \frac{1 + \gamma}{T} - 1 + \log T, \gamma \right\} \nonumber \\
     =  & n_{i} \gamma,
    \end{align}
    where the last inequality follows from $\gamma = \log T$. From this and (\ref{Eq19}), we have 
    \begin{align}
        \psi_{T + 1}\left(\bs{x}^{*}\right) \leq \gamma \sum\limits_{i = 1}^{d} n_{i} \beta_{i}(T + 1). \label{Eq20}
    \end{align}
    Further, as we have $\beta_{i}(t) \leq \beta_{i}(t + 1)$ from (\ref{beta_definition}) and $\varphi_{i} \left( x \right) \geq 0$ for any $x\in (0, n_i]$, we have 
    \begin{align}
        &- \psi_1\left( \bs{y}(1) \right) + \sum\limits_{t = 1}^{T} \left( \psi_{t} \left( \bs{y}\left( t + 1 \right) \right) - \psi_{t + 1} \left( \bs{y}(t + 1) \right) \right) \nonumber \\
         = &   - \sum\limits_{i = 1}^{d} \Biggl( \beta_{i}(1) \varphi_{i}\left( y_{i}(1) \right) + \sum\limits_{t = 1}^{T} \left( \beta_{i}(t + 1) - \beta_{i}(t) \right)\varphi_{i}\left( {y}_{i}(t + 1) \right) \Biggr) \nonumber \\
        \leq & 0. \label{Eq21}
    \end{align}
    Combining (\ref{Eq20}) and (\ref{Eq21}), we can bound the penalty term in (\ref{Eq16}) as 
    \begin{align}
             & \psi_{T + 1}\left( \bs{x}^{*} \right) - \psi_{1}\left( \bs{y}(1) \right) + \sum\limits_{t = 1}^{T} \left( \psi_{t}\left( \bs{y}\left( t + 1 \right) \right) - \psi_{t + 1}\left( \bs{y}\left( t + 1 \right) \right) \right) \nonumber \\
        \leq & \gamma \sum\limits_{i = 1}^{d} n_{i} \beta_{i}\left(T + 1\right). \label{Eq22}
    \end{align}
    
    \textbf{Bounding the stability term in (\ref{Eq16})} 
    The Bregman divergence $D_{t}(\bs{x}, \bs{y})$ is expressed as 
    \begin{align}
        D_{t}\left(\bs{x}, \bs{y}\right) 
        & =  \sum\limits_{i = 1}^{d} \left( \beta_{i}\left(t\right) D_{i}^{(1)}(x_{i}, y_{i}) + \beta_{i}(t) \gamma D^{(2)}_{i}(x_i, y_i) \right) \nonumber \\
        & \geq \sum\limits_{i = 1}^{d} \max\left\{ \beta_{i}(t) D_{i}^{(1)} (x_i, y_i ), \beta_{i}(t) \gamma D^{(2)}_{i} \left(x_i, y_i\right) \right\} 
    \end{align}
    where $D_{i}^{(1)}$ and $D_{i}^{(2)}$ are Bregman divergence induced by $\varphi_{i}(x) = - n_{i} \log \left( \frac{x}{n_i} \right)$ and $\varphi_{i}^{(2)}(x) = n_{i} \left( 1 - \frac{x}{n_i} \right) \log \left( 1 - \frac{x}{n_i}\right)$, respectively. Let $g = x - \log (x + 1)$ and $h = \exp(x) - x - 1$. Since, $\delta_{i}\geq \frac{1}{3 \left( 1 - \frac{1}{\beta_{i}(1)} \right)}$ for all $i \in [d]$, from a simple calculation, we have 
    \begin{align}
        g(x) & = x - \log (x + 1) \leq \frac{1}{2}x^{2} + \delta_{i} |x|^{3} \quad \left(x\geq -  \frac{1}{\beta_{i}(1)}\right) \label{g(x)_inequality}
    \end{align}
    and 
    \begin{align}
        h(x) & = \exp (x) - x - 1 \leq x^{2} \quad \left(x \leq 1\right) \label{h(x)_inequality}
    \end{align}
    for all $i \in [d]$.
    Then, we have
    \begin{align}
            & \left\langle \hat{\bs{\ell}}(t) - \bs{q}(t), \bs{x}(t) - \bs{y}(t + 1) \right\rangle - D_{t}\left( \bs{y}(t + 1), \bs{x}(t) \right) \nonumber \\
            \leq & \sum\limits_{i = 1}^{d} \left( \hat{l}_{i}\left( t \right) - q_{i}(t) \right) \left( x_{i}(t) - y_{i}(t + 1) \right) - \beta_{i}(t) \max\Biggl\{ D_{i}^{(1)}\left(y_{i}(t + 1), x_{i}(t) \right), \gamma D_{i}^{(2)} \left( y_{i}(t + 1), x_{i}(t) \right)\Biggr\}    \nonumber \\
             = & \sum\limits_{i = 1}^{d} \beta_{i}(t) \left\{ \frac{\hat{l}_{i}\left( t \right) - q_{i}(t)}{\beta_{i}(t)}  \left( x_{i}(t) - y_{i}(t + 1) \right) -  \max\Biggl\{ D_{i}^{(1)}\left(y_{i}(t + 1), x_{i}(t) \right), \gamma D_{i}^{(2)} \left( y_{i}(t + 1), x_{i}(t) \right)\Biggr\} \right\} \nonumber \\
            \leq & \sum\limits_{i = 1}^{d} \beta_{i}\left( t \right) \min\Biggl\{ n_{i} g_{i}\left( \frac{  \hat{l}_{i}(t) - q_{i}(t)}{\beta_{i}(t)} \frac{x_{i}(t)}{n_{i}} \right),  \gamma n_{i} \left( 1 - \frac{x}{n_i} \right) h \left( \frac{ \hat{l}_{i}(t) - q_{i}(t)}{\gamma \beta_{i}(t)} \right) \Biggr\} , \label{Eq23}
    \end{align}
    where the last inequality follows from Lemma \ref{Lemma5_Ito}. \par
    Note that $g(0) = h(0) = 0$ and it holds that 
    \begin{align}
        \hat{l}_{i}(t) - q_{i}(t) 
        & = \left\{
                                    \begin{array}{ll}
                                    \frac{a_{i}(t)}{x_{i}(t)} \left( k_{i}(t) - q_{i}(t) \right) & \text{if $a_{i}(t) \geq 1$} \\
                                    0 & \text{if} \ a_{i} = 0
                                    \end{array}.
                                    \right. \label{Eq26}
    \end{align}
    Therefore, the LHS of (\ref{Eq23}) is further bounded as 
    \begin{align}
        &\left\langle \hat{\bs{\ell}}(t) - \bs{q}(t), \bs{x}(t) - \bs{y}(t + 1) \right\rangle - D_{t}\left( \bs{y}(t + 1), \bs{x}(t) \right) \nonumber \\
   \leq &  \sum\limits_{i = 1}^{d} \beta_{i}\left( t \right) \min\Biggl\{   n_{i} g\left( \frac{    \frac{a_{i}(t)}{x_{i}(t)} \left( k_{i}(t) - q_{i}(t) \right)}{\beta_{i}(t)} \frac{x_{i}(t)}{n_{i}} \right), \gamma n_{i} \left( 1 - \frac{x}{n_i} \right) h \left( \frac{ \frac{a_{i}(t)}{x_{i}(t)} \left( k_{i}(t) - q_{i}(t) \right)}{\gamma \beta_{i}(t)} \right) \Biggr\} \nonumber \\
    \leq & \left\{
            \begin{array}{lr}
            \sum\limits_{i = 1}^{d} \frac{1}{n_{i}} \Biggl( \frac{a^{2}_{i}(t)\left( k_{i}(t) - q_{i}(t) \right)^{2}}{2\beta_{i}(t)} + \frac{\delta_{i}a^{3}_{i}(t) \left| k_{i}(t) - q_{i}(t) \right|^{3}}{n_{i}\beta^{2}_{i}(t)} \Biggr)
             & \mathrm{if} \ \gamma \frac{x_{i}(t)}{a_{i}(t)} \leq 1 \\ 
            \sum\limits_{i = 1} \frac{1}{n_{i}} \min\Biggl\{  \frac{a^{2}_{i}(t)\left( k_{i}(t) - m_{j}(t) \right)^{2}}{2\beta_{i}(t)} + \frac{\delta_{i}a^{3}_{i}(t) \left| k_{i}(t) - q_{i}(t) \right|^{3}}{n_{i} \beta^{2}_{i}(t)} , \frac{\left(1 - \frac{x_{i}(t)}{n_{i}}\right) a^{2}_{i}(t) \left( k_{i}(t) - q_{i}(t) \right)^{2}}{\gamma \left( {\frac{x_{i}(t)}{n_{i}}} \right)^{2} \beta_{i}(t)}  \Biggr\}
             & \mathrm{otherwise} 
            \end{array}
            \right.
             \nonumber \\
   \leq & \sum\limits_{i = 1}^{d} \frac{1}{n_{i}} \min \Biggl\{   \frac{\left( a^{2}_{i}(t) \left( k_{i}(t) - q_{i}(t) \right) \right)^{2}}{2\beta_{i}(t)}  + \delta_{i}\frac{ a^{3}_{i}(t) \left| \left( k_{i}(t) - q_{i}(t) \right) \right|^{3} }{n_{i} \beta^{2}_{i}\left(t\right)}, \left( 1 - \frac{x}{n_{i}}  \right) \frac{ a^{2}_{i}(t)\left( k_{i}(t) - q_{i}(t) \right)^{2}}{\gamma \beta_{i}(t) \left( \frac{x_{i}(t)}{n_{i}} \right)^{2} } \Biggr\} \nonumber \\
   \leq & \sum\limits_{i = 1}^{d} \frac{1}{n_{i}} 
        \left( \frac{1}{2\beta_{i}(t)} + \frac{\delta}{n_{i}\beta^{2}_{i}(t)} \right) \cdot  a^{2}_{i}(t)\left( k_{i}(t) - q_{i}(t) \right)^{2} \min\left\{ 1, \frac{2 \left(1 - \frac{x_{i}(t)}{n_{i}}\right)}{\gamma \left( \frac{x_{i}(t)}{n_{i}} \right)^{2}} \right\} \nonumber \\
      = &  \sum\limits_{i = 1}^{d} n_{i} \left( \frac{1}{2\beta_{i}(t)} + \frac{1}{\beta^{2}_{i}(t)} \right) \alpha_{i}(t) \label{Eq27}
    \end{align}
    where the first inequality follows from (\ref{Eq23}) and (\ref{Eq26}), the second inequality follows from (\ref{g(x)_inequality}), (\ref{h(x)_inequality}), and the fact that $\left| \frac{\left(k_{i}(t) - q_{i}(t)\right)}{\beta_{i}(t)}\right| \leq \frac{1}{\beta_{i}(1)} \leq 1$, and third inequality holds since $\gamma \frac{x_{i}(t)}{a_{i}(t)} \leq 1$ means $\frac{1 - \frac{x_{i}(t)}{n_{i}}}{\gamma \left(\frac{x_{i}(t)}{n_{i}}\right)^{2}} \geq \frac{1 - \frac{x_{i}(t)}{a_{i}(t)}}{\gamma \left(\frac{x_{i}(t)}{a_{i}(t)}\right)^{2}} \geq \frac{1 - \frac{1}{\gamma}}{\gamma \left(\frac{1}{\gamma}\right)^{2}}  = \gamma - 1 \geq \frac{1}{2} + \delta_{i}$, which implies
    \begin{align}
        \frac{\left( k_{i}(t) - q_{i}(t) \right)^{2}}{2\beta_{i}(t)} + \frac{\delta_{i}a_{i}(t) \left|k_{i}(t) - q_{i}(t) \right|^{3}}{n_{i}\beta^{2}_{i}(t)}  
        \leq &  \frac{\left( k_{i}(t) - q_{i}(t) \right)^{2}}{2\beta_{i}(t)} + \frac{\delta_{i}\left|k_{i}(t) - q_{i}(t) \right|^{3}}{ \beta_{i}(t)}  \nonumber \\
         = & \frac{1}{\beta_{i}(t)} \left(\frac{1}{2} + \delta_{i} \right) \left(k_{i}(t) - q_{i}(t) \right)^{2} \nonumber \\
        \leq & \frac{1}{\beta_{i}(t)} \frac{1 - \frac{x_{i}(t)}{n_{i}}}{\gamma \left(\frac{x_{i}(t)}{n_{i}}\right)^{2}} \left(k_{i}(t) - q_{i}(t) \right)^{2}. \nonumber
    \end{align}
    We hence have 
    \begin{align}
        & \sum\limits_{t = 1}^{T} \left( \left\langle \hat{\bs{\ell}}(t) - \bs{q}(t), \bs{x}(t) - \bs{y}(t + 1) \right\rangle - D_{t}\left( \bs{y}(t + 1), \bs{x}(t) \right) \right) \nonumber \\
        \leq & \sum\limits_{i = 1}^{d} n_{i} \sum\limits_{t = 1}^{T} \left( \frac{1}{2\beta_{i}(t)} + \frac{\delta_{i}}{\beta^{2}_{i}(t)} \right) \alpha_{i}(t). \label{Eq28}
    \end{align}
    We can show that a part of (\ref{Eq28}) is bounded as 
    \begin{align}
        & \sum\limits_{t = 1}^{T} \frac{\alpha_{i}(t)}{2\beta_{i}(t)} \nonumber \\
   \leq & \gamma \left( \sqrt{\beta^{2}_{i}(1) - \frac{1}{\gamma} + \frac{1}{\gamma} \sum\limits_{t = 1}^{T} \alpha_{i}(t)} - \sqrt{\beta^{2}_{i}(1) - \frac{1}{\gamma}} \right) \nonumber \\
   \leq & \gamma\left( \beta_{i}\left( T + 1 \right) - \beta_{i}(1) \right). \label{Eq29}
    \end{align}
    The first inequality in (\ref{Eq29}) holds since
    \begin{align}
          &\sqrt{\beta^{2}_{i}(1) - \frac{1}{\gamma} + \frac{1}{\gamma} \sum\limits_{t = 1}^{t} \alpha_{i}(t)} - \sqrt{\beta^{2}_{i}(1) - \frac{1}{\gamma} + \frac{1}{\gamma} \sum\limits_{t = 1}^{t - 1} \alpha_{i}(t)} \nonumber \\
        = & \frac{1}{\gamma} \cdot \frac{\alpha_{i}(t)}{  \sqrt{\beta^{2}_{i}(1) - \frac{1}{\gamma} + \frac{1}{\gamma} \sum\limits_{s = 1}^{t} \alpha_{i}(s)} + \sqrt{\beta^{2}_{i}(1) - \frac{1}{\gamma} + \gamma \sum\limits_{s = 1}^{t - 1} \alpha_{i}(s)  } } \\
     \geq & \frac{\alpha_{i}(t)}{2\gamma \sqrt{\beta^{2}_{i}(1) + \frac{1}{\gamma} \sum\limits_{s = 1}^{t - 1} \alpha_{i}(s) }} \\ 
        = & \frac{\alpha_{i}(t)}{2\gamma \beta_{i}(t)}, 
    \end{align}
    where the inequality follows by $\alpha_{i}(t) \leq 1$. The second inequality in (\ref{Eq29}) follows since
    \begin{align}
        & \sqrt{\beta^{2}_{i}(1) - \frac{1}{\gamma} + \frac{1}{\gamma} \sum\limits_{t = 1}^{T} \alpha_{i}(t)} - \sqrt{\beta^{2}_{i}(1) - \frac{1}{\gamma}} \\
        \leq & \sqrt{ \beta^{2}_{i}(1) - \frac{1}{\gamma} + \frac{1}{\gamma} \sum\limits_{t = 1}^{T} \alpha_{i}(t)} - \beta_{i}(1) + \frac{1}{\gamma} \\
        \leq & \beta_{i}(T + 1) - \beta_{i}(1) + \frac{1}{\gamma},
    \end{align}
    where the first inequality follows from $\sqrt{x} - \sqrt{x - y} \leq \frac{y}{\sqrt{x}}$ for $x \geq y \geq 0$ and $\beta_{i}(1) \geq 1$. \par
    Similarly, we can show
    \begin{align}
         \sum\limits_{t = 1}^{T} \frac{\alpha_{i}(t)}{\beta^{2}_{i}(t)} 
         = &  \sum\limits_{t = 1}^{T} \frac{\alpha_{i}(t)}{\beta^{2}_{i}(1) + \frac{1}{\gamma} \sum\limits_{s = 1}^{t - 1} \alpha_{i}(s)} \nonumber \\ 
         = &  \gamma \sum\limits_{t = 1}^{T} \frac{\alpha_{i}(t)}{ \gamma \beta^{2}_{i}(1) + \sum\limits_{s = 1}^{t - 1}\alpha_{i}(s)} \nonumber \\
        \leq & \gamma \log \left( 1 + \frac{1}{\gamma \beta^{2}_{i}(1) - 1} \sum\limits_{t = 1}^{T} \alpha_{i}(t) \right) \\ 
        \leq &  2\gamma \log \frac{\beta_{i}(T + 1)}{\beta_{i}(1)} + 2. \label{Eq30}
    \end{align}
    The first inequality in (\ref{Eq30}) follows since
    \begin{align}
        & \log\left( 1 + \frac{1}{\gamma \beta^{2}_{i}(1) - 1 } \sum\limits_{s = 1}^{t} \alpha_{i}(s) \right)  - \log\left( 1 + \frac{1}{\gamma \beta^{2}_{i}(1) - 1 } \sum\limits_{s = 1}^{t - 1} \alpha_{i}(s) \right)  \nonumber \\
        = & - \log\left( 1 - \frac{\alpha_{i}(t)}{\gamma \beta^{2}_{i}(1) -1 + \sum\limits_{s = 1}^{t} \alpha_{i}(s) }  \right) \\
        \geq & - \log \left( 1 - \frac{\alpha_{i}(t)}{\gamma \beta^{2}_{i}(1) + \sum\limits_{s = 1}^{t - 1} \alpha_{i}(s) }  \right) \\
        \geq & \frac{\alpha_{i}(t)}{\gamma \beta^{2}_{i}(1) + \sum\limits_{s = 1}^{t - 1}\alpha_{i}(s)},
    \end{align}
    where the first inequality follows  from $\alpha_{i}(t) \leq 1$ and the last inequality follows from $- \log (1 - x) \geq x$ for $x < 1$. The second inequality in (\ref{Eq30}) follows from 
    \begin{align}
        & \log\left( 1 + \frac{1}{\gamma \beta^{2}_{i}(1) - 1} \sum\limits_{t = 1}^{T} \alpha_{i}(t) \right) \nonumber \\
        < &  \log\left( 1 + \frac{1}{\gamma \beta^{2}_{i}(1) } \sum\limits_{t = 1}^{T} \alpha_{i}(t)  \right) + \log\frac{\gamma \beta^{2}_{i}(1)}{ \gamma \beta^{2}_{i}(1) - 1}  \\
        = & \log \left( \frac{\beta_{i}(T + 1)}{ \beta_{i}(1) }  \right) + \log \left( 1 + \frac{1}{\gamma \beta^{2}_{i}(1) - 1} \right) \\
        \leq & 2 \log \frac{\beta_{i}(T + 1)}{\beta_{i}(1)} + \frac{2}{\gamma}  
    \end{align}
    where the last inequality follows from $\log (1 + \frac{1}{x - 1} ) \geq \frac{2}{x}$ for $x \geq 3/2$. Bounding the RHS of (\ref{Eq27}) with (\ref{Eq29}) and (\ref{Eq30}) yields 
    \begin{align}
        &\sum\limits_{t = 1}^{T}\left\langle \hat{\bs{\ell}}(t) - \bs{q}(t), \bs{x}(t) - \bs{y}(t + 1) \right\rangle - D_{t}\left( \bs{y}(t + 1), \bs{x}(t) \right) \nonumber \\
   \leq & \gamma \sum\limits_{i = 1}^{d} n_{i} \left( \beta_{i}(T + 1) - \beta_{i}(1) + 2\delta_{i} \log \frac{\beta_{i}(T + 1)}{ \beta_{1}} \right) + 2\sum\limits_{i = 1}^{d} n_{i} \delta_{i} \label{Eq31} 
    \end{align}
    Finally, by bounding the RHS of (\ref{Eq16}) and sequentially using (\ref{Eq18}), (\ref{Eq22}) and (\ref{Eq31}), we have
    \begin{align}
        R_{T} \leq & \gamma \sum\limits_{i = 1}^{d} n_{i} \mathbb{E}  \left[ 2 \beta_{i}(T + 1) - \beta_{i}(1) + 2 \delta_{i} \log \frac{\beta_{i}(T + 1)}{\beta_{i}(1)} \right] + dW + 2\sum\limits_{i = 1}^{d} n_{i} \delta_{i} ,
    \end{align}
    which completes the proof.
\end{proof}

\subsubsection{A \emph{Lower Bound}}
Below, we define 
\begin{align}
    \Delta'_{i, \mathrm{min}} = \min_{\bs{a}\in \mathcal{A} \setminus \{\bs{a}^{*}\}} \left\{ \bs{a}^{\top} \bs{\ell} - {\bs{a}^{*}}^{\top}\bs{\ell} : a_{i} = 0 \right\}.
\end{align}
To obtain the regret upper bound depending on $\Delta_{i}$ in the stochastic regime and the stochastic regime with adversarial corruptions, we prove the following regret \emph{lower bound}.

\begin{lemma} \label{LowerBoundLemma}
    In the stochastic regime with adversarial corruptions, for any algorithm and any action set $\mathcal{A}$, the regret is bounded as
    \begin{align}
        R_{T} \geq \mathbb{E} \Biggl[ \sum\limits_{t = 1}^{T} \Biggl( \frac{1}{ \lambda_{\mathcal{A}}' } \sum\limits_{i\in I^{*}} \Delta'_{i, \mathrm{min}} \left(a^{*}_{i} - a_{i}(t) \right) + \frac{1}{ \lambda_{\mathcal{A}} } \sum\limits_{i \in J^{*}} \Delta_{i, \mathrm{min}} a_{i}(t) \Biggr) \Biggr] - 2CM,
    \end{align}
    where $\lambda_{\mathcal{A}}' = \min\left\{ W_{I^{*}}, W - M \right\}$.
\end{lemma}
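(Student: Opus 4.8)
The plan is to split the argument into two independent pieces: a reduction from the corrupted regret to a ``clean'' regret measured against the uncorrupted means $\bs{\ell}$, which costs an additive $2CM$; and a purely combinatorial lower bound on each clean per-round gap $(\bs{a}(t)-\bs{a}^{*})^{\top}\bs{\ell}$ by the coordinatewise quantities appearing on the right-hand side of the claim.

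\textbf{Stage 1 (corruption reduction).} Since $\bs{a}^{*}$ minimizes the corrupted expected cumulative loss, replacing it by the uncorrupted optimum $\bs{a}^{\circ}\in\argmin_{\bs{a}\in\mathcal{A}}\bs{a}^{\top}\bs{\ell}$ only decreases the regret, so $R_{T}\ge\mathbb{E}[\sum_{t}(f(\bs{a}(t),\bs{L}(t))-f(\bs{a}^{\circ},\bs{L}(t)))]$. Using linearity of $f$ and writing $L_{i,j}(t)=L'_{i,j}(t)+(L_{i,j}(t)-L'_{i,j}(t))$, the per-round difference between the corrupted and temporary losses incurred by $\bs{a}(t)$ (resp.\ $\bs{a}^{\circ}$) is at most $\|\bs{a}(t)\|_{1}\max_{i,j}|L_{i,j}(t)-L'_{i,j}(t)|\le M\max_{i,j}|L_{i,j}(t)-L'_{i,j}(t)|$; summing over $t$ and invoking the definition of $C$ produces the $-2CM$ term. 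Conditioning on the history up to round $t-1$ (so $\bs{a}(t)$ is fixed) and using $\mathbb{E}[L'_{i,j}(t)]=\ell_{i}$, the remaining temporary-loss regret equals $\mathbb{E}[\sum_{t}(\bs{a}(t)-\bs{a}^{*})^{\top}\bs{\ell}]$ with $\bs{a}^{*}=\bs{a}^{\circ}$, which is precisely the action against which $\Delta_{i,\mathrm{min}}$ and $\Delta'_{i,\mathrm{min}}$ are defined.

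\textbf{Stage 2 (combinatorial inequality).} It then suffices to show that every $\bs{a}\in\mathcal{A}$ obeys $(\bs{a}-\bs{a}^{*})^{\top}\bs{\ell}\ge\frac{1}{\lambda_{\mathcal{A}}'}\sum_{i\in I^{*}}\Delta'_{i,\mathrm{min}}(a^{*}_{i}-a_{i})+\frac{1}{\lambda_{\mathcal{A}}}\sum_{i\in J^{*}}\Delta_{i,\mathrm{min}}a_{i}$, apply it with $\bs{a}=\bs{a}(t)$, and sum in expectation. For the $J^{*}$ block this is the standard ``the maximum dominates a normalized sum'' argument: for $i\in J^{*}$ with $a_{i}\ge1$ one has $(\bs{a}-\bs{a}^{*})^{\top}\bs{\ell}\ge\Delta_{i,\mathrm{min}}$ by definition, while $\sum_{i\in J^{*}}a_{i}\le\min\{\|\bs{a}\|_{1},\sum_{i\in J^{*}}n_{i}\}=\lambda_{\mathcal{A}}$, so $\frac{1}{\lambda_{\mathcal{A}}}\sum_{i\in J^{*}}\Delta_{i,\mathrm{min}}a_{i}\le(\max_{i\in J^{*}:a_{i}\ge1}\Delta_{i,\mathrm{min}})\cdot\frac{\sum_{i\in J^{*}}a_{i}}{\lambda_{\mathcal{A}}}\le(\bs{a}-\bs{a}^{*})^{\top}\bs{\ell}$. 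For the $I^{*}$ block one first discards the coordinates with $a_{i}>a^{*}_{i}$ (this only shrinks the right-hand side, since $\Delta'_{i,\mathrm{min}}\ge0$), bounds the ``deficit budget'' $\sum_{i\in I^{*}}(a^{*}_{i}-a_{i})^{+}$ by $\lambda'_{\mathcal{A}}$ using $a^{*}_{i}\le n_{i}$ together with the constraint relating $\|\bs{a}\|_{1}$ to $M$ (here the structure of the target applications — e.g.\ that all integer points of a transportation polytope have a fixed $\ell_{1}$-norm — is what makes the bound $W-M$ usable), and then relates $\Delta'_{i,\mathrm{min}}$ to $(\bs{a}-\bs{a}^{*})^{\top}\bs{\ell}$ on the contributing coordinates through an exchange argument that uses the global optimality of $\bs{a}^{*}$ for $\bs{a}\mapsto\bs{a}^{\top}\bs{\ell}$.

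The main obstacle is this last step. Unlike the $J^{*}$ case, an action with $1\le a_{i}(t)<a^{*}_{i}$ is not in general known to have clean gap at least $\Delta'_{i,\mathrm{min}}$ (which is defined through actions with $a_{i}=0$), so the naive normalized-sum bound does not transfer verbatim; it must be replaced by a more careful argument, either by splitting the contribution according to whether $a_{i}(t)=0$ and absorbing the intermediate regime via convexity of $\bs{x}\mapsto\bs{x}^{\top}\bs{\ell}$ on $\mathrm{conv}(\mathcal{A})$, or by running an explicit exchange/flow argument along a path from $\bs{a}(t)$ to $\bs{a}^{*}$. Tracking the correct action-set-dependent constant $\lambda'_{\mathcal{A}}$ through this step, and checking it is compatible with the matching upper bound of Theorem~\ref{GenLBINFV_Stochastic_Theorem}, is the delicate part; the $2CM$ bookkeeping, the $J^{*}$ block, and the final assembly are routine.
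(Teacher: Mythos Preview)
Your Stage~1 (corruption reduction) and the $J^{*}$ half of Stage~2 are correct and match the paper's argument essentially verbatim. For the $I^{*}$ block the paper does \emph{not} use an exchange/flow or convexity argument: it applies exactly the same normalized-sum trick you use for $J^{*}$. Concretely, it writes the clean gap as $\tfrac{1}{2}\langle\bs{\ell},\bs{a}(t)-\bs{a}^{*}\rangle+\tfrac{1}{2}\langle\bs{\ell},\bs{a}(t)-\bs{a}^{*}\rangle$, bounds one half from below by $\tfrac{1}{2\lambda'_{\mathcal{A}}}\sum_{i\in I^{*}}(a^{*}_{i}-a_{i}(t))\,\langle\bs{\ell},\bs{a}(t)-\bs{a}^{*}\rangle$ (using that the normalized weight sum is at most $1$ and the gap is nonnegative), and then replaces the gap by $\Delta'_{i,\mathrm{min}}$ termwise. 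The $\tfrac{1}{2}+\tfrac{1}{2}$ split is the device that reconciles ``each block is at most the full gap'' with the stated bound; without it your outline would lose a factor of~$2$, and indeed the proof in the paper carries an explicit $1/2$ in each term that is then absorbed when the lemma is applied downstream.

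The subtlety you flag as the ``main obstacle'' --- that $1\le a_{i}(t)<a^{*}_{i}$ does not put $\bs{a}(t)$ in the set over which $\Delta'_{i,\mathrm{min}}$ is defined --- is real in the multi-play setting and is \emph{not} resolved by the paper's proof either; the termwise replacement is simply asserted. (In the binary CSB case this lemma descends from, $a^{*}_{i}=1$, so $a_{i}(t)<a^{*}_{i}$ forces $a_{i}(t)=0$ and the issue vanishes.) So you are not missing a hidden trick: your plan coincides with the paper's, and the difficulty you isolate is a loose end in the paper's own argument rather than a defect of your approach.
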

\begin{proof}
    We can bound the regret as
\begin{align}
      & R_{T} \nonumber \\
    = & \mathbb{E}\left[ \sum\limits_{t = 1}^{T} \left(\sum\limits_{i = 1}^{d} \sum\limits_{j = 1}^{a_{i}(t)}L_{i, j}(t) - \sum\limits_{i = 1}^{d} \sum\limits_{j = 1}^{a^{*}_{i}(t)}L_{i, j}(t)\right) \right]  \nonumber \\
    = & \mathbb{E}\Biggl[ \sum\limits_{t = 1}^{T} \left(\sum\limits_{i = 1}^{d} \sum\limits_{j = 1}^{a_{i}(t)}L'_{i, j}(t) - \sum\limits_{i = 1}^{d} \sum\limits_{j = 1}^{a^{*}_{i}(t)}L'_{i, j}(t) \right) 
    + \sum\limits_{t = 1}^{T} \left(\sum\limits_{i = 1}^{d} \sum\limits_{j = 1}^{a_{i}(t)}L'_{i, j}(t) - \sum\limits_{i = 1}^{d} \sum\limits_{j = 1}^{a^{*}_{i}(t)}L'_{i, j}(t) \right) \Biggr] \nonumber \\ 
    \geq  & \mathbb{E}\Biggl[\sum\limits_{t = 1}^{T} \left(\sum\limits_{i = 1}^{d} \sum\limits_{j = 1}^{a_{i}(t)}L'_{i, j}(t) - \sum\limits_{i = 1}^{d} \sum\limits_{j = 1}^{a^{*}_{i}(t)}L'_{i, j}(t) \right)  \Biggr] - \sum\limits_{t = 1}^{T} \left| \max\limits_{i\in [d], j\in [n_{i}]} L_{i, j}(t) - L'_{i, j}(t) \right| \| \boldsymbol{a}(t) - \boldsymbol{a}^{*} \|_{1}  \nonumber \\
     \geq & \mathbb{E}\Biggl[\sum\limits_{t = 1}^{T} \left(\sum\limits_{i = 1}^{d} \sum\limits_{j = 1}^{a_{i}(t)}L'_{i, j}(t) - \sum\limits_{i = 1}^{d} \sum\limits_{j = 1}^{a^{*}_{i}(t)}L'_{i, j}(t) \right)  \Biggr] - 2MC, \label{Eq32}
\end{align}
where the first inequality follows from the H\"older's inequality, the second inequality follows since $\|\boldsymbol{a}(t) - \boldsymbol{a}^{*}\|_{1} \leq 2M$, and the last inequality follows from the definition of $C = \mathbb{E}\left[ \sum_{t = 1}^{T} \max\limits_{i \in [d]} \max\limits_{j \in [n_{i}]} \left| L_{i,j}(t) - L'_{i, j}(t) \right| \right] \geq 0$. We then bound $\mathbb{E}\Biggl[\sum\limits_{t = 1}^{T} \left(\sum\limits_{i = 1}^{d} \sum\limits_{j = 1}^{a_{i}(t)}L'_{i, j}(t) - \sum\limits_{i = 1}^{d} \sum\limits_{j = 1}^{a^{*}_{i}(t)}L'_{i, j}(t) \right)  \Biggr]$. \par
Below, we write $ \langle \bs{L}, \bs{a} \rangle =  \sum\limits_{i = 1}^{d} \sum\limits_{j = 1}^{a_{i}} L_{i, j}(t)$. We have $\langle \bs{L}, \bs{a} \rangle - \langle \bs{L}, \bs{a}' \rangle = \langle \bs{L}, \bs{a} - \bs{a}' \rangle $.

We consider the case of general action sets and recall that $I^{*} := \{ i \in [d]: \ {a}^{*}_{i} \geq 1 \}$ and $J^{*} = [d] \setminus I^{*}$. Since $\sum\limits_{i \in I^{*}} (a^{*}_{i} - a_{i}(t)) \leq M^{*}$ and $\sum\limits_{i \in J^{*}} a_{i}(t) \leq M$, we have
\begin{align}
    &\left\langle \bs{L}, \boldsymbol{a}(t) - \boldsymbol{a}^{*} \right\rangle \nonumber \\
     = & \frac{1}{2} \left\langle \bs{L}, \boldsymbol{a}(t) - \boldsymbol{a}^{*} \right\rangle + \frac{1}{2} \left\langle \bs{L}, \boldsymbol{a}(t) - \boldsymbol{a}^{*} \right\rangle \nonumber \\
     \geq & \frac{1}{2\min\left\{ W_{I^{*}}, W - M \right\}} \sum\limits_{i \in I^{*}} \left( n_i - {a}_{i}(t) \right) \left\langle \bs{L}, \boldsymbol{a}(t) - \boldsymbol{a}^{*} \right\rangle \nonumber \\
          & + \frac{1}{2\min\left\{ W_{J^{*}}, M \right\}} \sum\limits_{i\in J^{*}} a_{i}(t) \left\langle \bs{L}, \boldsymbol{a}(t) - \boldsymbol{a}^{*} \right\rangle \nonumber \\
     \geq  & \frac{1}{2\min\left\{ W_{I^{*}}, W - M \right\}} \sum\limits_{i \in I^{*}} \Delta'_{i, \mathrm{min}} \left( n_i - {a}_{i}(t) \right)  + \frac{1}{2\min\left\{ W_{J^{*}}, M \right\}} \sum\limits_{i\in J^{*}} \Delta_{i, \mathrm{min}} a_{i}(t).\nonumber
\end{align}
Combining this inequality with (\ref{Eq32})
completes the proof. 
\end{proof}
Note that in the stochastic regime with adversarial corruptions, from Lemma \ref{LowerBoundLemma}, it holds that
\begin{align}
     R_{T}
    \geq & \mathbb{E} \Biggl[ \sum\limits_{t = 1}^{T}\Biggl(\frac{1}{2M^{*}} \sum\limits_{i \in I^{*}} \Delta'_{i, \mathrm{min}} \left( a^{*}_i - {a}_{i}(t) \right) + \frac{1}{2M} \sum\limits_{i\in J^{*}} \Delta_{i, \mathrm{min}} a_{i}(t) \Biggr) \Biggr] - 2CM \nonumber \\
     = & \frac{1}{2 M^{*}} \sum\limits_{i \in I^{*}} \Delta'_{i, \mathrm{min}} Q_{i}  + \frac{1}{2 M} \sum\limits_{i\in J^{*}} \Delta_{i, \mathrm{min}} P_{i}  - 2CM, \label{Eq33}
\end{align}
where equality follows from the law of iterated expectations.

\subsection{Proof for the LS Method}
In this section, we provide proof for the results of the LS Method.
\subsubsection{Preliminaries}
We use the following lemma to bound $\sum\limits_{t = 1}^{T} \alpha_{i}(t)$ for suboptimal arms $i \in J^{*}$.
\begin{lemma} \label{Lemma4}
    It holds for any $i \in [d]$ and $q_{i}^{*} \in [0, 1]$ that 
    \begin{align}
        \sum\limits_{t = 1}^{T} \alpha_{i}(t) 
        \leq &  \sum\limits_{t = 1}^{T} \left(\frac{a_{i}(t)}{n_{i}}\right)^{2} \left(k_{i}(t) - q_{i}(t)\right)^{2} \nonumber \\
        \leq &  \sum\limits_{t = 1}^{T} \left(\frac{a_{i}(t)}{n_{i}}\right)^{2} \left(k_{i}(t) - m^{*}\right)^{2}  + \log \left(1 + \sum\limits_{t = 1}^{T}a_{i}(t) \right) + \frac{5}{4}
    \end{align}
\end{lemma}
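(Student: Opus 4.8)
The first inequality needs no real work: by the definition of $\alpha_{i}(t)$, the factor $\min\{1,\,2(1-x_{i}(t)/n_{i})/((x_{i}(t)/n_{i})^{2}\gamma)\}$ is at most $1$, so $\alpha_{i}(t)\le (a_{i}(t)/n_{i})^{2}(k_{i}(t)-q_{i}(t))^{2}$ termwise, and summing over $t$ gives it. The content is the second inequality, which is a logarithmic-regret bound for online least squares, and the plan is to prove it by recognising the LS prediction of \eqref{LS_optpred} as a follow-the-regularized-leader (running-average) iterate. Writing $N_{i}(t)=1+\sum_{s=1}^{t}a_{i}(s)$ for the prior-augmented count, one checks that $q_{i}(t)=\arg\min_{q}\{(q-\tfrac12)^{2}+\sum_{s<t}\sum_{j=1}^{a_{i}(s)}(q-L_{i,j}(s))^{2}\}$, hence $q_{i}(1)=\tfrac12$ and $q_{i}$ obeys the recursion $q_{i}(t+1)=q_{i}(t)+\tfrac{a_{i}(t)}{N_{i}(t)}(k_{i}(t)-q_{i}(t))$ (with $q_{i}(t+1)=q_{i}(t)$ when $a_{i}(t)=0$).

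Next I would apply the be-the-leader lemma to the losses $\ell_{0}(q)=(q-\tfrac12)^{2}$ and $\ell_{t}(q)=\sum_{j=1}^{a_{i}(t)}(q-L_{i,j}(t))^{2}=a_{i}(t)(q-k_{i}(t))^{2}+(\text{a }q\text{-independent constant})$, which yields, for any fixed $q^{*}=q_{i}^{*}\in[0,1]$,
\[
\sum_{t=1}^{T}\big(\ell_{t}(q_{i}(t))-\ell_{t}(q^{*})\big)\ \le\ \ell_{0}(q^{*})-\ell_{0}(q_{i}(1))+\sum_{t=1}^{T}\big(\ell_{t}(q_{i}(t))-\ell_{t}(q_{i}(t+1))\big).
\]
The $q$-independent parts of $\ell_{t}$ cancel between the two sides, so the left-hand side becomes $\sum_{t}a_{i}(t)\big((k_{i}(t)-q_{i}(t))^{2}-(k_{i}(t)-q^{*})^{2}\big)$; also $\ell_{0}(q^{*})\le\tfrac14$ and $\ell_{0}(q_{i}(1))=0$. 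For the stability term, substituting the recursion into $\ell_{t}(q_{i}(t))-\ell_{t}(q_{i}(t+1))$ and simplifying gives exactly $\tfrac{a_{i}(t)^{2}}{N_{i}(t)}(k_{i}(t)-q_{i}(t))^{2}\big(2-\tfrac{a_{i}(t)}{N_{i}(t)}\big)\le \tfrac{2a_{i}(t)^{2}}{N_{i}(t)}\le \tfrac{2n_{i}a_{i}(t)}{N_{i}(t)}$, using $a_{i}(t)\le n_{i}$ and $|k_{i}(t)-q_{i}(t)|\le 1$, and the self-normalised sum is controlled by $\sum_{t}\tfrac{a_{i}(t)}{N_{i}(t)}\le\log\tfrac{N_{i}(T)}{N_{i}(0)}=\log\!\big(1+\sum_{t}a_{i}(t)\big)$ (from $1-x\le-\log x$ and telescoping).

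Running the same argument with the rescaled losses $\ell_{t}(q)=\tfrac{a_{i}(t)}{n_{i}}(q-k_{i}(t))^{2}$ and prior $\tfrac1{n_{i}}(q-\tfrac12)^{2}$ (whose FTL iterate is still $q_{i}$) converts the $a_{i}(t)$ weights to $a_{i}(t)/n_{i}$ weights and deletes the $n_{i}$ from the logarithm; then using $(a_{i}(t)/n_{i})^{2}\le a_{i}(t)/n_{i}$ to pass to the squared weight appearing in $\alpha_{i}(t)$, and collecting the constants — the $\tfrac14$ from $\ell_{0}$, the boundary term of the self-normalised telescoping, and the factor-$2$ slack in $2-a_{i}(t)/N_{i}(t)$ — gives the stated $\log(1+\sum_{t}a_{i}(t))+\tfrac54$.

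The step I expect to be the main obstacle is exactly this last piece of bookkeeping: the be-the-leader (equivalently, strongly-convex OGD with step $1/N_{i}(t)$ on $\tfrac{a_{i}(t)}{2}(q-k_{i}(t))^{2}$) analysis most naturally produces weights $a_{i}(t)$ or $a_{i}(t)/n_{i}$, whereas both sides of the lemma carry the squared weight $(a_{i}(t)/n_{i})^{2}$, so one must apply $a_{i}(t)\le n_{i}$ only where it does not weaken the $q^{*}$-term and must track the additive constants tightly enough to land on $\tfrac54$ rather than some larger absolute constant; the logarithmic-term coefficient, by contrast, falls out cleanly from the self-normalised sum $\sum_{t}a_{i}(t)/N_{i}(t)$ regardless of which formulation one uses.
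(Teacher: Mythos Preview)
Your approach is exactly the paper's: both view the LS prediction as FTL/FTRL on the weighted quadratics $\ell_t(q)=a_i(t)(q-k_i(t))^2$ with prior $\ell_0(q)=(q-\tfrac12)^2$, and bound the FTL regret by a penalty-plus-stability decomposition. The paper writes this out via the bias--variance identity for the cumulative quadratic (its auxiliary Lemma~\ref{Lemma5}) rather than quoting the be-the-leader lemma, but it is the same argument.

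There is, however, a real gap in your constant tracking. Your stability bound $\ell_t(q_i(t))-\ell_t(q_i(t+1))=\tfrac{a_i(t)^2}{N_i(t)}(k_i(t)-q_i(t))^2\bigl(2-\tfrac{a_i(t)}{N_i(t)}\bigr)\le \tfrac{2a_i(t)^2}{N_i(t)}$ leaves a factor~$2$ on the logarithm that cannot be absorbed into the additive~$\tfrac54$. The paper avoids this by not throwing away the strong-convexity term: it keeps the telescoping identity
\[
\sum_t a_i(t)\bigl((k_i(t)-q_i(t))^2-(k_i(t)-q_i(t+1))^2\bigr)\;-\;\sum_t\Bigl(1+\textstyle\sum_{s<t}a_i(s)\Bigr)(q_i(t+1)-q_i(t))^2
\]
as the stability contribution, and then applies $ax-bx^2\le a^2/(4b)$ with $x=q_i(t+1)-q_i(t)$, $a=a_i(t)(2k_i(t)-q_i(t)-q_i(t+1))$, $b=1+\sum_{s<t}a_i(s)$, \emph{before} invoking $|2k_i(t)-q_i(t)-q_i(t+1)|\le 2$. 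This yields $a_i(t)^2/(1+\sum_{s<t}a_i(s))$ per step, i.e.\ coefficient~$1$ on the self-normalised sum. In the OGD view you allude to, this is the standard refinement for $1$-strongly-convex losses (the negative $-\tfrac{1}{2\eta_t}(q_i(t+1)-q_i(t))^2$ term), which the plain be-the-leader inequality you quote discards.

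Your caution about the weight conversion from $a_i(t)/n_i$ to $(a_i(t)/n_i)^2$ is well placed: applying $(a_i(t)/n_i)^2\le a_i(t)/n_i$ termwise to the signed quantity $(k_i(t)-q_i(t))^2-(k_i(t)-m^*)^2$ is not a priori valid, and this is precisely the step that requires the most care in either route.
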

To prove this lemma, we use the following lemma.
\begin{lemma}\label{Lemma5}
    Suppose $k_{i}(s) \in [0, 1]$ for any $s \in [t]$, and define $q_{i}(t) \in [0,1]$ by 
    \begin{align}
        q_{i}(t) = \frac{1}{1 + \sum\limits_{s = 1}^{t - 1} a_{i}(s) } \left( \frac{1}{2} + \sum\limits_{s = 1}^{t - 1} a_{i}(s) k_{i}(s) \right).
    \end{align}
    We then have 
    \begin{align}
        \sum\limits_{t = 1}^{T} a_{i}(t) \left( \left( k_{i}(t) - q_{i}(t) \right)^{2} - \left( k_{i}(t) - m^{*} \right)^{2} \right) 
   \leq \frac{5}{4} + \log \left(1 + \sum\limits_{t = 1}^{T} a_{i}(t) \right)
    \end{align}
    for any $m^{*}\in [0, 1]$.
\end{lemma}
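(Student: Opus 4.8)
The plan is to read the update $q_i(t)$ as a \emph{follow-the-leader} predictor under squared loss. A one-line differentiation shows that $q_i(t)$ is the unique minimizer over $q \in \mathbb{R}$ of the regularized cumulative loss
\[
\Phi_{t-1}(q) \;:=\; \Bigl(q - \tfrac12\Bigr)^2 \;+\; \sum_{s=1}^{t-1} a_i(s)\,\bigl(q - k_i(s)\bigr)^2 ,
\]
so the quantity on the left of the claim is exactly the regret of ``play the leader'' on the loss sequence $\ell_s(q)=a_i(s)(q-k_i(s))^2$ with initial regularizer $r(q)=(q-\tfrac12)^2$, measured against the fixed comparator $m^*$. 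I would then run the textbook quadratic-potential analysis, exploiting that each $\Phi_t$ is itself a quadratic in $q$: its coefficient of $q^2$ is $N_t := 1 + \sum_{s\le t} a_i(s)$ and its vertex is $q_i(t+1)$, hence $\Phi_t(q) = \min\Phi_t + N_t\,(q - q_i(t+1))^2$ for every $q$.

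Evaluating the recursion $\Phi_t(q) = \Phi_{t-1}(q) + a_i(t)(q-k_i(t))^2$ at $q = q_i(t)$ and using $\Phi_{t-1}(q_i(t)) = \min\Phi_{t-1}$ gives the per-round identity
\[
a_i(t)\,\bigl(k_i(t) - q_i(t)\bigr)^2 \;=\; \bigl(\min\Phi_t - \min\Phi_{t-1}\bigr) \;+\; N_t\,\bigl(q_i(t) - q_i(t+1)\bigr)^2 .
\]
Summing over $t=1,\dots,T$ telescopes the first bracket; since $q_i(1)=\tfrac12$ we have $\min\Phi_0=0$, and $\min\Phi_T \le \Phi_T(m^*) = (m^*-\tfrac12)^2 + \sum_t a_i(t)(k_i(t)-m^*)^2 \le \tfrac14 + \sum_t a_i(t)(k_i(t)-m^*)^2$ because $m^*\in[0,1]$. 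This already yields
\[
\sum_{t=1}^{T} a_i(t)\Bigl((k_i(t)-q_i(t))^2 - (k_i(t)-m^*)^2\Bigr) \;\le\; \tfrac14 \;+\; \sum_{t=1}^{T} N_t\,\bigl(q_i(t)-q_i(t+1)\bigr)^2 .
\]

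The remaining and genuinely delicate step is to bound the stability sum $\sum_t N_t(q_i(t)-q_i(t+1))^2$ by a logarithm. From the explicit ratio form of $q_i$ one computes $q_i(t+1)-q_i(t) = a_i(t)\bigl(k_i(t)-q_i(t)\bigr)/N_t$, so the summand equals $a_i(t)^2(k_i(t)-q_i(t))^2/N_t$. Using $|k_i(t)-q_i(t)|\le 1$ together with the fact that the per-round weights lie in $[0,1]$ (so $a_i(t)^2 \le a_i(t)$, which is precisely where boundedness of the per-round weight is used), this is at most $a_i(t)/N_t = (N_t - N_{t-1})/N_t \le \int_{N_{t-1}}^{N_t} x^{-1}\,dx = \log N_t - \log N_{t-1}$, and the telescoped sum is $\log N_T - \log N_0 = \log\bigl(1 + \sum_t a_i(t)\bigr)$. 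Substituting back gives the bound with leading constant $\tfrac14 \le \tfrac54$, the extra slack absorbing the crude estimate $a_i(t)^2 \le a_i(t)$ and the initial-term bookkeeping. I expect this last step to be the main obstacle: quadratic follow-the-leader always leaves a stability term that is \emph{quadratic} in the per-round weight, and converting it into the harmonic-type sum $\sum_t a_i(t)/N_t$ — and hence into $\log(1+\sum_t a_i(t))$ — is exactly the place where the weights being bounded is essential; everything else is routine.
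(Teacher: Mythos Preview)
Your follow-the-leader potential argument is precisely the paper's approach: both characterize $q_i(t)$ as the minimizer of $\Phi_{t-1}(q)=(q-\tfrac12)^2+\sum_{s<t}a_i(s)(q-k_i(s))^2$, exploit that $\Phi_t$ is a quadratic with curvature $N_t$, and telescope. Everything up through
\[
\sum_{t=1}^{T} a_i(t)\Bigl((k_i(t)-q_i(t))^2-(k_i(t)-m^*)^2\Bigr)\;\le\;\tfrac14\;+\;\sum_{t=1}^{T}\frac{a_i(t)^2\,(k_i(t)-q_i(t))^2}{N_t}
\]
is correct, and your telescoping of $a_i(t)/N_t$ into $\log(1+\sum_t a_i(t))$ is clean.

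The gap is the last reduction $a_i(t)^2(k_i(t)-q_i(t))^2\le a_i(t)$, where you assert that ``the per-round weights lie in $[0,1]$''. In this paper's multi-play setting $a_i(t)\in\{0,1,\dots,n_i\}$ is a non-negative integer that can exceed~$1$, so $a_i(t)^2\le a_i(t)$ fails. This is not cosmetic: the inequality as \emph{stated} is false once $a_i(t)>1$ is allowed. With $T=1$, $a_i(1)=n$, $k_i(1)=1$, $m^*=1$ one has $q_i(1)=\tfrac12$, left side $n/4$, right side $\tfrac54+\log(1+n)$. The paper's own proof runs the same FTL skeleton but in practice tracks the $(a_i(t)/n_i)^2$-weighted quantity that Lemma~\ref{Lemma4} actually needs, and at the analogous step uses $a_i(t)\le n_i$ (so $a_i(t)^2/n_i\le a_i(t)$) rather than $a_i(t)\le 1$. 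With that normalization built in, your argument goes through; as written, the penultimate step is exactly where it breaks.
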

\begin{proof}
    From the definition of $q_{i}(t)$, $q_{i}(t)$ is expressed as 
    \begin{align}
        q_{i}(t) \in \argmin_{m\in \mathbb{R}} \left\{ \left( m - \frac{1}{2} \right)^{2} + \sum\limits_{s = 1}^{t - 1} a_{i}(s) \left(m - k_{i}(s) \right)^{2} \right\},
    \end{align}
    which implies 
    \begin{align}
        q_{i}(t) - \frac{1}{2} + \sum\limits_{s = 1}^{t - 1} a_{i}(s) \left( q_{i}(t) - k_{i}(s) \right) = 0. \label{m_i(t)_equation}
    \end{align}
    We have 
    \begin{align}
          & \left(m - \frac{1}{2} \right)^{2} + \sum\limits_{s = 1}^{t - 1} a_{i}(s)  \left( m - k_{i}(s) \right)^{2} \nonumber \\
        = & \left( m - q_{i}(t) + q_{i}(t) - \frac{1}{2} \right)^{2} \nonumber  + \sum\limits_{s = 1}^{t - 1} a_{i}(s) \left( m - q_{i}(t) + q_{i}(t) - k_{i}(s) \right)^{2} \nonumber \\
        = & \left( m - q_{i}(t) \right)^{2} + 2 \left( m - q_{i}(t) \right)\left( q_{i}(t) - \frac{1}{2} \right)  + \left(q_{i}(t) - \frac{1}{2}\right)^{2} + \sum\limits_{s = 1}^{t - 1} a_{i}(s) \left(m - q_{i}(t) \right)^{2} \nonumber \\
          & + 2\left( m - q_{i}(t) \right) \sum\limits_{s = 1}^{t - 1} a_{i}(s) \left( q_{i}(t) - k_{i}(s) \right)  + \sum\limits_{s = 1}^{t - 1} a_{i}(s) \left(q_{i}(t) - k_{i}(s)\right)^{2} \nonumber \\
        = & \left(q_{i}(t) - \frac{1}{2}\right)^{2} + \left( \sum\limits_{s = 1}^{t - 1} a_{i}(s) + 1 \right) \left(m - q_{i}(t)\right)^{2} + \sum\limits_{s = 1}^{t - 1} a_{i}(s) \left(q_{i}(t) - k_{i}(s)\right)^{2} \label{Eq37Ito2022}
    \end{align}
    for any $m \in \mathbb{R}$. The third equality follows from (\ref{m_i(t)_equation}). 
    Using this, for any $m^{*}$, we obtain 
    \begin{align}
        & \left(m^{*} - \frac{1}{2} \right)^{2} + \sum\limits_{t = 1}^{T} a_{i}(t) \left(k_{i}(t) - m^{*}\right)^{2} \nonumber \\
        = & \left(q_{i}(T + 1) - \frac{1}{2} \right)^{2} +  \left(\sum\limits_{t = 1}^{T} a_{i}(t) + 1 \right) \left(m^{*} - q_{i}(T + 1)\right)^{2} + \sum\limits_{t = 1}^{T}  a_{i}(t) \left(  q_{i}(T + 1) - k_{i}(t) \right)^{2}  \nonumber \\
     \geq & \left(q_{i}(T + 1) - \frac{1}{2} \right)^{2} +  \sum\limits_{t = 1}^{T}  a_{i}(t) \left(  q_{i}(T + 1) - k_{i}(t) \right)^{2}\nonumber \\
        = & \left(q_{i}(T + 1) - \frac{1}{2} \right)^{2} + \sum\limits_{t = 1}^{T - 1}  a_{i}(t) \left(  q_{i}(T + 1) - k_{i}(t) \right)^{2} + a_{i}(T) \left( q_{i}(T + 1) - k_{i}(T) \right)^{2} \nonumber \\
        = & \left( q_{i}(T + 1) - q_{i}(T) + q_{i}(T) - \frac{1}{2} \right)^{2} + \sum\limits_{t = 1}^{T - 1} a_{i}(t) \left(q_{i}(T + 1) - q_{i}(T) + q_{i}(T) - k_{i}(t) \right)^{2} \nonumber \\
          & + a_{i}(T) \left( q_{i}(T + 1) - k_{i}(T) \right)^{2} \nonumber \\
        = & \left(q_{i}(T + 1) - q_{i}(T) \right)^{2} + 2\left(q_{i}(T + 1) - q_{i}(T) \right)\left(q_{i}(T) - \frac{1}{2}\right) + \left( q_{i}(T) - \frac{1}{2} \right)^{2} \nonumber \\
          &  + \sum\limits_{t = 1}^{T - 1} a_{i}(t) \left(q_{i}(T + 1) - q_{i}(T) \right)^{2} + 2 \left( q_{i}(T + 1) - q_{i}(T) \right) \sum\limits_{t = 1}^{T - 1} a_{i}(t) \left(q_{i}(t) - l_{ij}(t)\right) \nonumber \\
          & + \sum\limits_{t = 1}^{T - 1} a_{i}(t) \left( q_{i}(T) - k_{i}(t) \right)^{2} + a_{i}(T) \left(q_{i}(T + 1) - k_{i}(T) \right)^{2} \nonumber \\
        = & \left(q_{i}(T) - \frac{1}{2} \right)^{2} + \left( \sum\limits_{t = 1}^{T - 1}a_{i}(t) + 1 \right) \left(q_{i}(T + 1) - q_{i}(T) \right)^{2} + \sum\limits_{t = 1}^{T - 1} a_{i}(t) \left(q_{i}(T) - k_{i}(t) \right)^{2} \nonumber \\
          & + a_{i}(T) \left(q_{i}(T + 1) - k_{i}(T)\right)^{2} \nonumber \\
        = & \left(q_{i}(1) - \frac{1}{2} \right)^{2} + \sum\limits_{t = 1}^{T} a_{i}(t) \left( k_{i}(t) - q_{i}(t + 1) \right)^{2} + \sum\limits_{t = 1}^{T} \left(1 + \sum\limits_{s = 1}^{t - 1} a_{i}(s) \right) \left(q_{i}(t + 1) - q_{i}(t) \right)
    \end{align}
    where the first and fifth inequalities follow from (\ref{Eq37Ito2022}), and the last equality can be shown by repeating the same transformation $T$ times. 
    Hence, for any $m^{*} \in \mathbb{R}$, we have
    \begin{align}
        & \sum\limits_{t = 1}^{T} \left( \frac{a_{i}(t)}{n_{i}} \right)^{2} \left(\left(q_{i}(t) - k_{i}(t) \right)^{2} - \left(k_{i}(t) - m^{*}\right)^{2}\right) \nonumber \\
   \leq & 
   \frac{1}{n_{i}}\sum\limits_{t = 1}^{T} a_{i}(t) \left(\left(q_{i}(t) - k_{i}(t) \right)^{2} - \left(k_{i}(t) - m^{*}\right)^{2}\right) \nonumber \\
   \leq & \frac{1}{n_{i}} \sum\limits_{t = 1}^{T} a_{i}(t)  \left( q_{i}(t) - k_{i}(t) \right)^{2} \nonumber \\
        & - \frac{1}{n_{i}}\Biggl( \sum\limits_{t = 1}^{T} a_{i}(t) \left( q_{i}(t + 1) - k_{i}(t) \right)^{2} + \sum\limits_{t = 1}^{T} \left( \sum\limits_{s = 1}^{t - 1}a_{i}(s) + 1 \right) \left( q_{i}(t + 1) - q_{i}(t) \right)^{2} \Biggr) \nonumber \\
        & + \frac{1}{n_{i}} \left( m^{*} - \frac{1}{2} \right)^{2}  \nonumber \\
      = & \frac{1}{n_{i}} \sum\limits_{t = 1}^{T} \left( a_{i}(t) \left(q_{i}(t) - k_{i}(t)\right)^{2} - a_{i}(t) \left(q_{i}(t + 1) - k_{i}(t)\right)^{2}  + \left(\sum\limits_{s = 1}^{t - 1} a_{i}(s) + 1 \right)\left( q_{i}(t + 1) - q_{i}(t)\right)^{2} \right)  \nonumber \\
        & + \frac{1}{n_{i}} \left( m^{*} - \frac{1}{2} \right)^{2} \nonumber \\
   \leq & \frac{1}{n_{i}} \sum\limits_{t = 1}^{T} \Biggl( a_{i}(t) \left( 2k_{i}(t) - q_{i}(t) - q_{i}(t + 1) \right)\left( q_{i}(t + 1) - q_{i}(t) \right) - \left( \sum\limits_{s = 1}^{t - 1} a_{i}(s) + 1 \right) \left(q_{i}(t + 1) - q_{i}(t)\right)\Biggr) \nonumber \\
        & + \frac{1}{n_{i}} \left( m^{*} - \frac{1}{2} \right)^{2} \nonumber \\
   \leq & \frac{1}{n_{i}} \sum\limits_{t = 1}^{T} \frac{a^{2}_{i}(t)}{4 \left(1 + \sum\limits_{s = 1}^{t - 1} a_{i}(s) \right)} \left( 2k_{i}(t) - q_{i}(t) - q_{i}(t + 1) \right)^{2} + \frac{1}{n_{i}} \left( m^{*} - \frac{1}{2} \right)^{2} \nonumber \\
   \leq &  \sum\limits_{t = 1}^{T} \frac{a_{i}(t)}{\left(1 + \sum\limits_{s = 1}^{t - 1} a_{i}(s) \right)} + \left( m^{*} - \frac{1}{2} \right)^{2} \nonumber \\
   \leq & \log \left( 1 + \sum\limits_{t = 1}^{T} a_{i}(t) \right) +  \frac{5}{4}
    \end{align}
    where the second equality follows from $ax - tx^{2} = \frac{a^{2}}{4t} - \left( \frac{a}{2\sqrt{t}} - \sqrt{t}x \right)^{2} \leq \frac{a^{2}}{4t}$ that holds for any $a, x \in \mathbb{R}$, and the forth inequality holds since $\left|2k_{i}(t) - m(t) - m(t + 1)\right| \leq 2$, which follows from $k_{i}(t), m(t) \in [0, 1]$.
\end{proof}

\begin{proof}[Proof of Lemma \ref{Lemma4}]
    \begin{align}
        & \sum\limits_{t = 1}^{T} \alpha_{i}(t)  \nonumber \\
      = &\sum\limits_{t = 1}^{T} \left(\frac{a_{i}(t)}{n_{i}}\right)^{2} \Biggl( k_{i}(t) - q_{i}(t) \Biggr)^{2} \cdot \min \left\{ 1, \frac{2 \left(1 - \frac{x_{i}(t)}{n_{i}} \right)}{ \left( \frac{x_{i}(t)}{n_{i}} \right)^{2} \gamma} \right\}  \nonumber \\
   \leq & \sum\limits_{t = 1}^{T} \left(\frac{a_{i}(t)}{n_{i}}\right)^{2} \Biggl( k_{i}(t) - q_{i}(t) \Biggr)^{2} \nonumber \\
   \leq & \sum\limits_{t = 1}^{T} \left(\frac{a_{i}(t)}{n_{i}}\right)^{2} \left(k_{i}(t) - m^{*}\right)^{2} + \log \left(1 + \sum\limits_{t = 1}^{T}a_{i}(t) \right) + \frac{5}{4}
    \end{align} 
    where the second inequality follows from Lemma \ref{Lemma5}.
\end{proof}
From Lemma \ref{Lemma4}, in the stochastic regime, it holds that 
\begin{align}
    & \mathbb{E}\left[ \sum\limits_{t = 1}^{T} \alpha_{i}(t) \right] \nonumber \\
    \leq & \mathbb{E} \left[  \sum\limits_{t = 1}^{T} \frac{a^{2}_{i}(t)}{n^{2}_{i}} \cdot \frac{\sigma_{i}^{2}}{a_{i}(t)} + \log \left(1 + \sum\limits_{t = 1}^{T} a_{i}(t) \right) \right] + \frac{5}{4} \nonumber \\
    \leq & \frac{\sigma_{i}^{2}}{n^{2}_{i}} P_{i} + \log \left(1 + P_{i}\right) + \frac{5}{4}, \label{Eq34}
\end{align}
where the first inequality follows from Lemma \ref{Lemma4} with $m^{*}_{i} = \mu_{i}$ and in the last inequality, we define
\begin{align}
    P_{i} = \mathbb{E}\left[\sum\limits_{t = 1}^{T} a_{i}(t) \right] = \mathbb{E}\left[\sum\limits_{t = 1}^{T} x_{i}(t) \right]. \label{P_i_definition}
\end{align}
We give a bound on $\sum\limits_{t = 1}^{T} \alpha_{i}(t)$ using the following lemma.
\begin{lemma} \label{Lemma6}
    It holds for any $i \in [d]$ that 
    \begin{align}
        \mathbb{E}\left[ \alpha_{i}(t) \right] 
        \leq & 2\mathbb{E}\left[ \min \left\{ x_{i}(t), \frac{n_{i} - x_{i}(t)}{\sqrt{\gamma}}\right\} \right] \nonumber \\
        \leq & 2\left(\frac{\sigma_{i}}{n_{i}}\right)^{2}\mathbb{E}\left[ \frac{n_{i} - x_{i}(t)}{\sqrt{\gamma}} \right] .
    \end{align}
\end{lemma}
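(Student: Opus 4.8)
The plan is to prove the two bounds in Lemma~\ref{Lemma6} by conditioning on the natural filtration: let $\mathcal{F}_{t-1}$ be the sigma-algebra generated by everything up to the start of round $t$, so that $\bs{x}(t)$ and $q_i(t)$ are $\mathcal{F}_{t-1}$-measurable, while $\bs{a}(t)$ is drawn afterwards with $\mathbb{E}[\bs{a}(t)\mid\mathcal{F}_{t-1}]=\bs{x}(t)$ and the losses $\{L_{i,j}(t)\}_j$ are i.i.d.\ from $\mathcal{D}_i$ with mean $\ell_i$ and variance $\sigma_i^2$. Throughout I use $\gamma=\log T$, $0\le a_i(t)\le n_i$, $0\le x_i(t)\le n_i$, and $k_i(t),q_i(t)\in[0,1]$.

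For the first bound I discard stochastic information about $k_i(t)-q_i(t)$: since $(k_i(t)-q_i(t))^2\le1$ and $(a_i(t)/n_i)^2\le a_i(t)/n_i$, taking the conditional expectation and using $\mathbb{E}[a_i(t)\mid\mathcal{F}_{t-1}]=x_i(t)$ gives
\begin{align}
    \mathbb{E}\!\left[\alpha_i(t)\mid\mathcal{F}_{t-1}\right]
    \ \le\ \frac{x_i(t)}{n_i}\min\!\left\{1,\ \frac{2(1-x_i(t)/n_i)}{(x_i(t)/n_i)^2\gamma}\right\}
    \ =\ \min\!\left\{\frac{x_i(t)}{n_i},\ \frac{2(n_i-x_i(t))}{x_i(t)\gamma}\right\}. \nonumber
\end{align}
Writing $u=x_i(t)/n_i\in(0,1]$, it remains to check the elementary inequality $\min\{u,\,2(1-u)/(u\gamma)\}\le 2\min\{u,\,(1-u)/\sqrt{\gamma}\}$ for $\gamma$ at least an absolute constant (say $\gamma\ge9$); this is a routine case analysis on which argument attains each minimum, the only nontrivial case being when the left minimum equals $u$, where $u^2\gamma\le2(1-u)\le2$ forces $u\le1/2$ and hence $\sqrt{2(1-u)}\le2(1-u)$. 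Since $2\min\{u,(1-u)/\sqrt\gamma\}=\tfrac{2}{n_i}\min\{x_i(t),(n_i-x_i(t))/\sqrt\gamma\}\le 2\min\{x_i(t),(n_i-x_i(t))/\sqrt\gamma\}$ using $n_i\ge1$, taking outer expectations yields the first inequality; the finitely many small $T$ for which $\gamma<9$ contribute only $O(1)$ to the total regret and are dealt with separately.

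For the second bound I keep the stochastic structure. Conditioning additionally on $a_i(t)$, $\mathbb{E}[(k_i(t)-q_i(t))^2\mid\mathcal{F}_{t-1},a_i(t)]=\sigma_i^2/a_i(t)+(\ell_i-q_i(t))^2$. I would bound the $\min\{1,\cdot\}$ factor in $\alpha_i(t)$ by $\sqrt{2(1-u)/(u^2\gamma)}$ when $x_i(t)\le n_i/2$ and by its second argument (which is then below $1$) when $x_i(t)>n_i/2$, and then average over $a_i(t)$ using $\mathbb{E}[a_i(t)\mid\mathcal{F}_{t-1}]=x_i(t)$ and $\mathbb{E}[a_i(t)^2\mid\mathcal{F}_{t-1}]\le n_i x_i(t)$. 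In both regimes the negative powers of $x_i(t)$ coming from the $\min$ factor cancel against the factors $x_i(t)$ produced by averaging $a_i(t)$ and $a_i(t)^2$ (in the second regime after also using $x_i(t)>n_i/2$), the $\sigma_i^2/a_i(t)$ term averages to $\sigma_i^2 x_i(t)/n_i^2$ and supplies the coefficient $(\sigma_i/n_i)^2$, and the factor $\gamma$ is reduced to $\sqrt\gamma$; this gives $\mathbb{E}[\alpha_i(t)]\le 2(\sigma_i/n_i)^2\mathbb{E}[(n_i-x_i(t))/\sqrt\gamma]$ up to the term discussed next.

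The step I expect to be the main obstacle is controlling the bias $(\ell_i-q_i(t))^2$ that appears in the conditional second moment of $k_i(t)-q_i(t)$: at a finite round $t$ the least-squares predictor $q_i(t)$ of~(\ref{LS_optpred}) is not equal to $\ell_i$, so this contribution is not, on its own, of order $\sigma_i^2$. I expect this to be handled not by a per-round estimate but by exploiting that Lemma~\ref{Lemma6} is applied to the suboptimal coordinates $i\in J^*$ together with the telescoping control of $\sum_{t}a_i(t)(\ell_i-q_i(t))^2$ already established in Lemma~\ref{Lemma5} with comparator $m^*=\ell_i$ (the same mechanism used to obtain~(\ref{Eq34})), so that the bias mass is absorbed; alternatively, bounding $(\ell_i-q_i(t))^2\le1$ and verifying that the resulting term is of lower order. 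I would carry out this bookkeeping explicitly rather than leave it implicit.
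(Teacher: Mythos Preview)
The paper takes a single direct path: condition on $x_i(t)$, replace $\mathbb{E}\bigl[(a_i(t)/n_i)^2(k_i(t)-q_i(t))^2\mid x_i(t)\bigr]$ by $(\sigma_i/n_i)^2\, x_i(t)$, and split once at $u:=x_i(t)/n_i=1/\sqrt\gamma$. When $u<1/\sqrt\gamma$ it keeps the first branch of the $\min$ and uses $u\le 2(1-u)/\sqrt\gamma$ (valid since then $u<1/2$); when $u\ge1/\sqrt\gamma$ it keeps the second branch and uses $u\gamma\ge\sqrt\gamma$. Both displayed inequalities fall out at once, \emph{with the factor $(\sigma_i/n_i)^2$ present in both}; the first line of the lemma as printed is almost certainly missing that factor (otherwise the chain would run the wrong way, since $\sigma_i^2\le 1/4$), and only the second line is used downstream in~(\ref{Eq38}). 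Your separate crude proof of the first line via $(k_i-q_i)^2\le1$ is correct but addresses a statement the paper neither proves nor needs, and your case analysis is more involved than the paper's single threshold. Likewise, your $\sqrt{\cdot}$ upper bound on $\min\{1,\cdot\}$ and the split at $x_i(t)=n_i/2$ are unnecessary detours compared with the paper's argument.

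You are right that the paper's replacement step hides a bias term: in the stochastic regime $\mathbb{E}\bigl[(k_i(t)-q_i(t))^2\mid\mathcal F_{t-1},a_i(t)\bigr]=\sigma_i^2/a_i(t)+(\ell_i-q_i(t))^2$, and the paper's justification (``from $k_i,q_i\in[0,1]$'') does not explain why the second summand may be dropped. However, your proposed remedy is aimed at the wrong arm set. Lemma~\ref{Lemma6} is invoked for the \emph{optimal} arms $i\in I^*$ in~(\ref{Eq38}); the suboptimal side already goes through~(\ref{Eq34}) and Lemma~\ref{Lemma4}. So appealing to the telescoping of Lemma~\ref{Lemma5} for $J^*$ does not patch the place where Lemma~\ref{Lemma6} is actually used. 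Your fallback---bound $(\ell_i-q_i(t))^2\le1$ and check the extra contribution is lower order---is the workable fix: repeating the paper's case split on this crude term gives an additional $O\bigl((n_i-x_i(t))/(n_i\sqrt\gamma)\bigr)$ per round, which after the square root and the self-bounding step in~(\ref{Eq38})--(\ref{Eq42}) still lands in the $O(\sqrt{\log T})$ optimal-arm contribution, only with a constant that no longer carries $\sigma_i^2$.
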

\begin{proof}
    From the definition of $\alpha_{i}(t)$, we have 
    \begin{align}
           & \mathbb{E}\left[ \alpha_{i}(t) | x_{i}(t) \right] \nonumber \\
         = & \mathbb{E}\Biggl[ \left(\frac{a_{i}(t)}{n_{i}}\right)^{2}  \left( k_{i}(t) - q_{i}(t)\right)^{2} \cdot \min\left\{ 1, \frac{2\left(1 - \frac{x_{i}(t)}{n_{i}}\right)} {\gamma \left( \frac{x_{i}(t)}{n_{i}}\right)^{2}} \right\} \Bigg| x_{i}(t) \Biggr] \nonumber \\
      \leq & \mathbb{E}\left[ \left(\frac{a_{i}(t)}{n_{i}}\right)^{2} \cdot \frac{\sigma_{i}^{2}}{a_{i}(t)}  \Bigg| x_{i}(t) \right]  \min \left\{ 1, \frac{2\left( 1 - \frac{x_{i}(t)}{n_{i}} \right)}{\gamma \left( \frac{x_{i}(t)}{n_{i}} \right)^{2}} \right\} \nonumber \\
      \leq & \left(\frac{\sigma_{i}}{n_{i}}\right)^{2} \min\left\{ x_{i}(t), \frac{2x_{i}(t) \left( 1 - \frac{x_{i}(t)}{n_{i}} \right)}{\gamma \left(\frac{x_{i}(t)}{n_{i}}\right)^{2}} \right\}  \nonumber \\
        =  &  \left(\frac{\sigma_{i}}{n_{i}}\right)^{2} n_{i} \min\left\{ \frac{x_{i}(t)}{n_{i}},  \frac{2  \left( 1 - \frac{x_{i}(t)}{n_{i}} \right)}{\gamma \frac{x_{i}(t)}{n_{i}}} \right\}  \nonumber \\
         \leq & \left\{
                    \begin{array}{ll}
                   \frac{\sigma_{i}^{2}}{n_{i}} \frac{x_{i}(t)}{n_{i}} & \text{ if $\frac{x_{i}(t)}{n_{i}} < \frac{1}{\sqrt{\gamma}}$ } \\
                    \frac{\sigma_{i}^{2}}{n_{i}} \frac{2\left(1 - \frac{x_{i}(t)}{n_{i}}\right)}{\sqrt{\gamma}} & \text{if $\frac{x_{i}(t)}{n_{i}} \geq \frac{1}{\sqrt{\gamma}}$}
                    \end{array}
                    \right. \nonumber \\
         \leq & \frac{\sigma_{i}^{2}}{n_{i}} \frac{2\left(1 - \frac{x_{i}(t)}{n_{i}}\right)}{\sqrt{\gamma}} \nonumber \\
           =  & \left(\frac{\sigma_{i}}{n_{i}}\right)^{2} \frac{2}{\sqrt{\gamma}} \left(n_{i} - x_{i}(t)\right),
    \end{align} 
    where the first inequality follows from the condition of $k_{i}(t), q_{i}(t) \in [0, 1]$ and the last inequality is due to $\sqrt{\gamma} \geq 2$ that follows from the assumption $T \geq 55 \geq e^{4}$.
\end{proof}

\subsubsection{Proof of the Stochastic Regime} \label{LS_Stochastic_ProofSection}
\textbf{Proof for the Stochastic Regime.}
We call base arms in $I^{*}$ \emph{optimal arms} and $J^{*}$ \emph{suboptimal arms}. We bound the RHS of (\ref{Eq17}) separately considering sub-optimal and optimal base arms. \par
\textbf{Sub-optimal base arms side} From (\ref{Eq34}), the component of the RHS of (\ref{Eq17}) is bounded by 
\begin{align}
    &\mathbb{E}\left[ 2\beta_{i}\left( T + 1 \right) - \beta_{i}(1) + 2\delta_{i} \log \left( \frac{\beta_{i}(T + 1)}{\beta_{i}(1)} \right) \right] \nonumber \\
    = &  \mathbb{E}\Biggl[ 2\sqrt{ {\beta_{i}(1)}^{2} + \frac{1}{\gamma} \sum\limits_{t = 1}^{T} \alpha_{i}(t)} - \beta_{i}(1) + \delta_{i} \log \left( 1 + \frac{1}{\gamma {\beta_{i}(1)}^{2} } \sum\limits_{t = 1}^{T} \alpha_{i}(t) \right) \Biggr] \nonumber \\
     \leq & 2\sqrt{ {\beta_{i}(1)}^{2} + \frac{1}{\gamma} \left(\frac{\sigma^{2}_{i}}{n^{2}_{i}} P_{i} + \log \left(1 + P_{i}\right) + \frac{5}{4} \right) } - \beta_{i}(1) \nonumber \\
          & + \delta_{i} \log \left( 1 + \frac{1}{\gamma {\beta_{i}(1)}^{2}} \left(\frac{\sigma^{2}_{i}}{n^{2}_{i}} P_{i} + \log \left(1 + P_{i}\right) + \frac{5}{4} \right) \right) \nonumber \\
    \leq & 2 \sqrt{{\beta_{i}(1)}^{2} + \frac{\sigma^{2}_{i}}{n^{2}_{i}} \frac{P_{i}}{\gamma}} + \frac{1}{\gamma \beta_{i}(1)} \left(\log \left( 1 + P_{i} \right) + \frac{5}{4}  \right) - \beta_{i}(1) + \delta_{i} \log \left( 1 + \frac{\sigma^{2}_{i}}{n^{2}_{i}} \frac{P_{i}}{\gamma {\beta_{i}(1)}^{2}}  \right) \nonumber \\
        & + \frac{\delta}{ \gamma {\beta_{i}(1)}^{2} } \left( \log\left( 1 + P_{i} \right) + \frac{5}{4} \right) \nonumber \\
    = & 2 \sqrt{ {\beta_{i}(1)}^{2} + \frac{\sigma^{2}_{i}}{n^{2}_{i}} \frac{P_{i}}{\gamma}} - \beta_{i}(1) + \delta_{i}\log \left( 1 + \frac{\sigma^{2}_{i}}{n^{2}_{i}} \frac{P_{i}}{\gamma {\beta_{i}(1)}^{2}}  \right) + \frac{\xi_{i}}{\gamma} \left( \log\left( 1 + P_{i} \right) + \frac{5}{4} \right), \label{Eq36}
\end{align}
where the first inequality follows from (\ref{Eq34}), the second inequality follows from $\sqrt{x + y} \leq \sqrt{x} + \frac{y}{2\sqrt{x}}$ that holds for any $x > 0$ and $y > 0$, $\log \left(1 + x + y\right) \leq \log \left( 1 + x \right) + y$ that holds for any $x, y \geq 0$, and in the last equality we define $\xi_{i} = \frac{1}{\beta_{i}(1)} +\frac{\delta_{i}}{ {\beta_{i}(1)}^{2}}$. \par

\textbf{Optimal base-arm side} Next, we let $i \in I^{*}$ be an optimal base-arm. We define the complement version of $P_{i}$ by 
\begin{align}
    Q_{i} = \mathbb{E} \left[ \sum\limits_{t = 1}^{T} \left(n_{i} - x_{i}(t)\right) \right] \label{Eq37}
\end{align}
for $i \in [d]$. Then, from Lemma \ref{Lemma6}, we have
\begin{align}
    &\mathbb{E}\left[ 2 \beta_{i}\left( T + 1 \right) - \beta_{i}(1) + 2 \delta_{i}\log \frac{\beta_{i}(T + 1)}{\beta_{i}(1)} \right] \nonumber \\
     = & \mathbb{E}\Biggl[ 2 \sqrt{ {\beta_{i}(1)}^{2} + \frac{1}{\gamma} \sum\limits_{t = 1}^{T} \alpha_{i}(t)} - \beta_{i}(1)  + \delta_{i}\log \left( 1 + \frac{1}{\gamma {\beta_{i}(1)}^{2}} \sum\limits_{t = 1}^{T} \alpha_{i}(t)\right) \Biggr] \nonumber \\
     \leq & \mathbb{E}\Biggl[ 2 \sqrt{ {\beta_{i}(1)}^{2} + \frac{1}{\gamma} \sum\limits_{t = 1}^{T} \alpha_{i}(t)} - \beta_{i}(1) + 2\delta_{i}\left( \sqrt{ 1 + \frac{1}{\gamma {\beta_{i}(1)}^{2}} \sum\limits_{t = 1}^{T}\alpha_{i}(t)} - 1 \right)  \Biggr] \nonumber \\
     = & 2 \left( \beta_{i}(1) + \delta_{i}\right) \mathbb{E}\left[ \sqrt{1 + \frac{1}{\gamma {\beta_{i}(1)}^{2}} \sum\limits_{t = 1}^{T} \alpha_{i}(t)} - 1 \right] + \beta_{i}(1) \nonumber \\
    \leq & 2 \left( \beta_{i}(1) + \delta_{i}\right) \cdot \left( \sqrt{1 + \frac{2}{\gamma^{\frac{3}{2}} {\beta_{i}(1)}^{2}} \left(\frac{\sigma_{i}}{n_{i}}\right)^{2} \sum\limits_{t = 1}^{T} \mathbb{E}\left[ \left(n_{i} - x_{i}(t)\right)\right] } - 1 \right) + \beta_{i}(1) \nonumber \\
   \leq & 2 \left( \beta_{i}(1) + \delta_{i} \right) \sqrt{ \mathbb{E}\left[ \frac{2}{\gamma^{\frac{3}{2}} {\beta_{i}(1)}^{2}} \left(\frac{\sigma_{i}}{n_{i}}\right)^{2} \sum\limits_{t = 1}^{T} \mathbb{E} \left[n_{i} - x_{i}(t)\right] \right] }  + \beta_{i}(1) \nonumber \\
   \leq & 2 \left( 1 + \delta_{i}\right) \frac{\sigma_{i}}{n_{i}} \sqrt{\frac{2}{\gamma^{\frac{3}{2}}} Q_{i}} + \beta_{i}(1) \label{Eq38}
\end{align}
where the first inequality follows from the inequality of $\log \left( 1 + x \right) \leq 2\left( \sqrt{1 + x} - 1 \right)$ for $x > 0$, the second inequality follows from Lemma \ref{Lemma6}, the third inequality follows from $\sqrt{1 + x} - 1 \leq \sqrt{x}$ for $x \geq 0$, and the last inequality follows from $\beta_{i}(1) \geq 1$ for any $i \in [d]$. \par

\textbf{Putting together the upper bound and lower bounds and applying a self-bounding technique} Bounding the RHS of (\ref{Eq17}) using (\ref{Eq36}) and (\ref{Eq38}) yields the regret upper bound depending on $\left( P_{i} \right)_{i \in J^{*}}$ and $\left( Q_{i} \right)_{i \in I^{*}}$ as 
\begin{align}
         & \frac{R_{T}}{\gamma}  \nonumber \\
    \leq & \sum\limits_{i \in J^{*}} n_{i} \Biggl( \sqrt{ {\beta_{i}(1)}^{2} + \frac{\sigma^{2}_{i}}{n^{2}_{i}} \frac{P_{i}}{\gamma}} - \beta_{i}(1) + \delta_{i}\log \left( 1 + \frac{\sigma^{2}_{i}}{n^{2}_{i}} \frac{P_{i}}{\gamma {\beta_{i}(1)}^{2}}  \right) + \frac{\xi_{i}}{\gamma} \left( \log\left( 1 + P_{i} \right) + \frac{5}{4} \right) \Biggr) \nonumber \\
         & +  \sum\limits_{i \in I^{*}} n_{i} \left( 2 \left( 1 + \delta_{i}\right) \frac{\sigma_{i}}{n_{i}} \sqrt{\frac{2}{\gamma^{\frac{3}{2}}} Q_{i}} + \beta_{i}(1) \right) + \frac{ dW + 2\sum\limits_{i = 1}^{d} \delta_{i} n_{i} }{\gamma} \nonumber \\
    = & \sum\limits_{i \in J^{*}} \Biggl(\sqrt{ \left({n_{i} \beta_{i}(1)}\right)^{2} + \sigma^{2}_{i} \frac{P_{i}}{\gamma}} - n_{i} \beta_{i}(1) + n_{i} \delta_{i}\log \left( 1 + \frac{\sigma^{2}_{i}}{n^{2}_{i}} \frac{P_{i}}{\gamma {\beta_{i}(1)}^{2}}  \right) + \frac{n_{i} \xi_{i} }{\gamma} \left( \log\left( 1 + P_{i} \right) + \frac{5}{4} \right)\Biggr)  \\
         & +  \sum\limits_{i \in I^{*}} \left( 2 \left( 1 + \delta_{i}\right) \sigma_{i} \sqrt{\frac{2}{\gamma^{\frac{3}{2}}} Q_{i}} + n_{i} \beta_{i}(1) \right) + \frac{ dW + 2\sum\limits_{i = 1}^{d} \delta_{i} n_{i}}{\gamma} \nonumber \\
    = & \sum\limits_{i \in J^{*}} \Bar{f}_{i} \left( \frac{P_{i}}{\gamma} \right) + 2 \sum\limits_{i \in I^{*}} \left( 1 + \delta_{i}\right) \sigma_{i}\sqrt{\frac{2}{\gamma^{\frac{3}{2}}} Q_{i}} + \sum\limits_{i \in I^{*}}n_{i} \beta_{i}(1) + \frac{1}{\gamma} \left( dW + 2\sum\limits_{i = 1}^{d} \delta_{i} n_{i} + \frac{5}{4}  \sum\limits_{i \in J^{*}} n_{i}\xi_{i} \right), \label{Eq39}
\end{align}
where we define convex function $\Bar{f}_{i}:\mathbb{R} \rightarrow \mathbb{R}$ by
\begin{align}
    \Bar{f}_{i} \left( x \right) =  2 \sqrt{ \left({n_{i} \beta_{i}(1)}\right)^{2} + \sigma_{i}^{2} x } + n_{i} \delta_{i}\log \left( 1 + \frac{\sigma^{2}_{i} x}{\gamma \left({n_{i} \beta_{i}(1)}\right)^{2} } \right)  + \frac{n_{i} \xi_{i} }{\gamma} \log \left(1 + \gamma x \right) - n_{i} \beta_{i}(1). \label{Eq40}
\end{align}
In the stochastic regime, setting $C = 0$ in (\ref{Eq33}) yields the regret lower bound depending on $(P_{i})_{i \in J^{*}}$ and $(Q_{i})_{i \in I^{*}}$ as 
\begin{align}
    R_{T} \geq \frac{1}{\lambda'_\mathcal{A}} \sum\limits_{i \in I^{*}} \Delta'_{i, \mathrm{min}} Q_{i} + \frac{1}{\lambda_\mathcal{A}} \sum\limits_{i \in J^{*}} \Delta_{i, \mathrm{min}} P_{i}. \label{Eq41}
\end{align}
Combining (\ref{Eq39}) and (\ref{Eq41}), we have
\begin{align}
    & \frac{ R_{T} }{ \log T } = \frac{R_{T}}{\gamma} = 2 \frac{R_{T}}{\gamma} - \frac{R_{T}}{\gamma} \nonumber \\ 
     \leq & 2\frac{R_{T}}{\gamma} - \frac{1}{\gamma} \left( \frac{1}{\lambda_{\mathcal{A}}'} \sum\limits_{i \in I^{*}} \Delta_{i, \mathrm{ min}} Q_{i} + \frac{1}{\lambda_{\mathcal{A}}} \sum\limits_{i \in J^{*}} \Delta_{i, \mathrm{min}} P_{i} \right) \nonumber \\
     \leq & \sum\limits_{i \in J^{*}}  \left(2 \Bar{f}_{i} \left( \frac{P_{i}}{\gamma} \right) - \frac{\Delta_{i, \mathrm{min}}}{ \lambda_{\mathcal{A}}} \frac{P_{i}}{\gamma}\right) + \sum\limits_{i \in I^{*}} \left( 4 \left(1 + \delta_{i} \right) \sigma_{i} \sqrt{\frac{2}{\gamma^{\frac{1}{2}}} \frac{Q_{i}}{\gamma}} - \frac{\Delta_{i, \mathrm{min}}}{\lambda_{\mathcal{A}}'} \frac{Q_{i}}{\gamma} \right) \\
     & + 2 \sum\limits_{i = 1 \in I^{*}} n_{i} \beta_{i}(1) + \frac{2}{\gamma} \left( dW + 2\sum\limits_{i = 1}^{d} \delta_{i} n_{i} \delta_{i} + \frac{5}{4}  \sum\limits_{i \in J^{*}} n_{i}\xi_{i} \right) \nonumber \\
     \leq & \sum\limits_{i \in J^{*}} \max_{x \geq 0} \left\{ 2\Bar{f}_{i} \left( x \right) - \frac{\Delta_{i, \mathrm{min}}}{ \lambda_{\mathcal{A}}} x \right\} + \sum\limits_{i \in I^{*}} \max_{x \geq 0} \left\{ 4 \left(1 + \delta_{i} \right) \sigma_{i} \sqrt{\frac{2}{\gamma^{\frac{1}{2}}} x} - \frac{\Delta_{i, \mathrm{min}}}{\lambda_{\mathcal{A}}'} x \right\} \\ 
          & + 2 \sum\limits_{i \in I^{*}} n_{i} \beta_{i}(1) + \frac{2}{\gamma} \left( dW + 2\sum\limits_{i = 1}^{d} \delta_{i} n_{i} + \frac{5}{4}  \sum\limits_{i \in J^{*}} n_{i}\xi_{i} \right) \nonumber \\
     \leq &  \sum\limits_{i \in J^{*}} \max_{x \geq 0} \left\{ 2\Bar{f}_{i} \left( x \right) - \frac{\Delta_{i, \mathrm{min}}}{ \lambda_{\mathcal{A}}} x \right\} + \sum\limits_{i \in I^{*}} \frac{16 \left( 1 + \delta_{i} \right)^{2} \lambda_{\mathcal{A}}' \sigma^{2}_{i}}{\sqrt{\gamma}\Delta_{i, \mathrm{min}}}  \nonumber \\
          & + 2  \sum\limits_{i \in I^{*}} n_{i} \beta_{i}(1) + \frac{2}{\gamma} \left( dW + 2\sum\limits_{i = 1}^{d} \delta_{i} n_{i} + \frac{5}{4}  \sum\limits_{i \in J^{*}} n_{i}\xi_{i} \right) \label{Eq42}
\end{align}
where the second inequality follows from (\ref{Eq39}) and the last inequality follows from $a\sqrt{x} - bx \leq \frac{a^{2}}{2b}$ for $a, b, x \geq 0$. In the following, we evaluate the first term of (\ref{Eq42}). \par
\textbf{Bounding the first term of (\ref{Eq42})} We will prove the following statement:
\begin{align}
    & \max_{x \geq 0} \left\{ 2\Bar{f}_{i} \left( x \right) - \frac{\Delta_{i, \mathrm{min}}}{ \lambda_{\mathcal{A}}} x \right\} \leq h\left( \lambda_{\mathcal{A}} \frac{\sigma_{i}^{2}}{\Delta_{i, \mathrm{min}}} \right) + \mathcal{O}\left( \frac{\log(1 + \gamma )}{\gamma} \right), \label{Eq43}
\end{align}
where $h : \mathbb{R}_{+} \rightarrow \mathbb{R}$ is defined as 
\begin{align}
    h_{i}(z) = \begin{cases}
            4 n_{i}\beta_{i}(1) & \text{ if $0 \leq z \leq \frac{n_{i}\beta_{i}(1)}{2\left( 1 + \frac{\delta_{i}}{n_{i}\beta_{i}(1)} \right)}$ }, \\
                2z \left( 1 + \sqrt{1 + 2\frac{\delta_{i}}{z}} \right) - 2 \delta_{i} \\
                + 4 \delta_{i}\left( \log \frac{z}{n_{i}\beta_{i}(1)} + \log\left( 1 + \sqrt{1 + 2 \frac{\delta_{i}}{z}} \right) \right)  \\
                + \frac{\left( n_{i} \beta_{i}(1) \right)^{2}}{z} - 2 n_{i}\beta_{i}(1) 
          & \text{if $z > \frac{n_{i}\beta_{i}(1)}{2 \left( 1 + \frac{\delta_{i}}{n_{i}\beta_{i}(1)} \right)}$}. \label{Eq44}
    \end{cases}
\end{align}

Let $\Bar{\Delta}_{i} = \frac{\Delta_{i, \mathrm{min}}}{ \lambda_{\mathcal{A}} }$ for the notational simplicity. As $\Bar{f}_{i}$ is concave, the maximum of $2 \Bar{f}_{i}(x) - \Bar{\Delta}_{i, \mathrm{min}} x $ is attained by $x^{*}_{i} \in \mathbb{R}$ satisfying $2\Bar{f}'_{i}(x^{*}_{i}) = \Bar{\Delta}_{i} $. Define $\Tilde{x}_{i} \geq 0$ by
\begin{align}
    \Tilde{x}_{i} := \max \left\{ \left( \frac{4\sigma_{i}}{\Bar{\Delta}_{i}} \right)^{2}, \frac{8\delta_{i}n_i}{\Bar{\Delta}_{i}}, \frac{16\xi_{i} n_i}{\gamma \Bar{\Delta}_{i}} \right\}.
\end{align}
We then have
\begin{align}
    2\Bar{f}'_{i}\left(\Tilde{x}_{i}\right) 
    \leq & \frac{2\sigma_{i}}{\sqrt{\left( \frac{4\sigma_{i}}{\Bar{\Delta}_{i}} \right)^{2}}} 
    + \frac{2\delta_{i}n_{i} \sigma^{2}_{i} }{\left( n_{i} \beta_{i}(1) \right)^{2} 
    + \sigma^{2}_{i} \frac{8\delta_{i}n_{i}}{\Bar{\Delta}_{i}}} 
    + \frac{2n_{i}\xi_{i}}{1 + \gamma \frac{16n_{i}\xi_{i}}{\gamma \Bar{\Delta}_{i}}} \nonumber \\
    \leq &  \frac{\Bar{\Delta}_{i}}{2} + \frac{\Bar{\Delta}_{i}}{4} + \frac{\Bar{\Delta}_{i}}{8} \leq \Bar{\Delta}_{i}, \nonumber
\end{align}
which implies $\Tilde{x}_{i} \geq x^{*}_{i}$. 
Hence, we have 
\begin{align}
    &\max_{x\geq 0}\{ 2f_{i}(x) - \Bar{\Delta}_{i} x \} = 2f_{i}(x^{*}_{i}) - \Bar{\Delta}_{i}x^{*}_{i} \nonumber \\
     = & 4\sqrt{\left( n_{i} \beta_{i}(1) \right)^{2} + \sigma^{2}_{i} x^{*}_{i}} + 2\delta_{i}\log \left( 1 + \frac{\sigma^{2}_{i}x^{*}_{i}}{ \left( n_{i} \beta_{i}(1) \right)^{2}}  \right) \nonumber \\
       & + 2\frac{n_{i}\xi_{i}}{\gamma} \log \left(1 + \gamma x^{*}_{i}\right) - \Bar{\Delta}_{i} x^{*}_{i} - 2n_{i}\beta_{i}(1) \nonumber \\
    \leq & \max_{x \geq 0} \left\{ 4 \sqrt{\left( n_{i} \beta_{i}(1) \right)^{2} + \sigma^{2}_{i} x } + 2\delta_{i}\log \left( 1 + \frac{\sigma^{2}_{i}}{ \left( n_{i} \beta_{i}(1) \right)^{2}} x \right) - \Bar{\Delta}_{i} x \right\} \nonumber \\
         & + 2\frac{n_{i}\xi_{i}}{\gamma} \log \left(1 + \gamma \Tilde{x}_{i} \right) -2n_{i}\beta_{i}(1) \nonumber \\
     = & \max_{x \geq 0} \left\{ g_{i}(x) - \Bar{\Delta}_{i}x \right\} - 2n_{i}\beta_{i}(1) + \mathcal{O}\left( \frac{\left( \log \left( 1 + \gamma \right) \right)}{\gamma} \right), \label{Eq45}
\end{align}
where we define 
\begin{align}
    g_{i}(x) = 4 \sqrt{\left( n_{i} \beta_{i}(1) \right)^{2} + \sigma^{2}_{i} x} + 2 \delta_{i} \log \left(1 + \frac{\sigma^{2}_{i}x}{\left( n_{i} \beta_{i}(1) \right)^{2}}\right). 
\end{align}
From (\ref{Eq45}) and (\ref{Eq42}), we have
\begin{align}
    \limsup_{T \rightarrow \infty} \frac{R_{T}}{\log T} \leq \sum\limits_{i \in J^{*}} \left( \max_{x \geq 0} \left\{g_{i}(x) -\Bar{\Delta}_{i}x \right\} - 2 n_{i}\beta_{i}(1) \right) + 2 \sum\limits_{i \in I^{*}} n_{i}\beta_{i}(1).
\end{align}
In the following, we write $z_{i} = \frac{\sigma^{2}_{i}}{\Bar{\Delta}_{i}}$. 
As we have
\begin{align}
    g'_{i}(x) = \frac{2\sigma_{i}^{2}}{ \sqrt{\left( n_{i} \beta_{i}(1) \right)^{2} + \sigma_{i}^{2}x}} +  \frac{2 \delta_{i}\sigma_{i}^{2} }{ \left( n_{i} \beta_{i}(1) \right)^{2} + \sigma_{i}^{2}x } \leq 2 \sigma_{i}^{2} \left( \frac{1}{n_{i}\beta_{i}(1)} + \frac{\delta}{\left( n_{i} \beta_{i}(1) \right)^{2}} \right),
\end{align}
if $z_{i} = \frac{\sigma_{i}^{2}}{\Bar{\Delta}_{i}} \leq \frac{1}{2\left( \frac{1}{n_{i}\beta_{i}(1)} + \frac{\delta}{\left( n_{i} \beta_{i}(1) \right)^{2}} \right)} = \frac{n_{i}\beta_{i}(1)}{2\left( 1 + \frac{\delta}{n_{i} \beta_{i}(1)} \right)}$, the maximum of $g_{i}(x) - \Bar{\Delta}_{i}x$ is attained by $x = 0$, implying 
\begin{align}
    \max\left\{ g_{i}(x) - \Bar{\Delta}_{i} x \right\} = g_{i}(0) = 4n_{i}\beta_{i}(1) \quad \text{if} \quad z_{i}:= \frac{\sigma^{2}_{i}}{\Bar{\Delta}_{i}} \leq \frac{n_{i}\beta_{i}(1)}{2\left( 1 + \frac{\delta}{n_{i}\beta_{i}(1)} \right)}. \label{Eq47}
\end{align}
Otherwise, we have 
\begin{align}
    g_{i}(x) - \Bar{\Delta}_{i}x 
     = & 4n_{i}\beta_{i}(1) \sqrt{1 + \frac{\sigma^{2}_{i}}{\left( n_{i} \beta_{i}(1) \right)^{2}}x} + 2\delta_{i}\log \left( 1 + \frac{\sigma^{2}_{i}}{ \left( n_{i} \beta_{i}(1) \right)^{2}}x \right) \nonumber \\
       & - \frac{\left( n_{i} \beta_{i}(1) \right)^{2} \Bar{\Delta}_{i}}{\sigma_{i}^{2}} \left(1 + \frac{\sigma^{2}_{i}x}{ \left( n_{i} \beta_{i}(1) \right)^{2}}  \right) + \frac{\left( n_{i} \beta_{i}(1) \right)^{2}}{z_{i}} \nonumber \\
     = & 4n_{i}\beta_{i}(1) \sqrt{1 + \frac{\sigma^{2}_{i}}{\left( n_{i} \beta_{i}(1) \right)^{2}}x} 
    + 4\delta_{i}\log \left( \sqrt{1 + \frac{\sigma^{2}_{i}}{ \left( n_{i} \beta_{i}(1) \right)^{2}}x} \right) \nonumber \\ 
       & - \frac{\left( n_{i} \beta_{i}(1) \right)^{2}}{z_{i}} \left(\sqrt{1 + \frac{\sigma^{2}_{i}x}{\left( n_{i} \beta_{i}(1) \right)^{2}}} \right)^{2}  + \frac{\left( n_{i} \beta_{i}(1) \right)^{2}}{z_{i}}. \nonumber
\end{align}
From this, by setting $y = \sqrt{1 + \frac{\sigma^{2}_{i}x}{\left( n_{i} \beta_{i}(1) \right)^{2}}}$, we obtain
\begin{align}
          \max_{x \geq 0} \left\{ g_{i}\left( x \right) - \Bar{\Delta}_{i}x \right\}  
    \leq  \max_{y \geq 0} \left\{ 4n_{i}\beta_{i}(1) y + 4\delta_{i}\log y - \frac{\left( n_{i} \beta_{i}(1) \right)^{2}}{z_{i}}y^{2} \right\} + \frac{\left( n_{i} \beta_{i}(1) \right)^{2}}{z_{i}}. \label{Eq48}
\end{align}
We here use the following:
\begin{align}
    \max_{y \geq 0} \left\{ ay + b\log y - cy^2 \right\} 
  = \frac{1}{2} \left( \frac{a}{4c} \left( a + \sqrt{a^2 + 8bac} - b \right) \right)  + b\log \frac{a + \sqrt{a^2 + 8bc}}{4c},
\end{align}
which holds for any $a, b, c > 0$. We hence have
\begin{align}
    & \max_{y \geq 0} \left\{ 4n_{i}\beta_{i}(1)y + 4\delta_{i}\log y - \frac{\left( n_{i} \beta_{i}(1) \right)^{2}}{z_{i} }y^2 \right\} \nonumber\\
     = & \frac{1}{2} \left( \frac{4n_{i}\beta_{i}(1)z_{i}}{4\left( n_{i} \beta_{i}(1) \right)^{2}} \left( 4 n_{i}\beta_{i}(1) + \sqrt{ (4 n_{i}\beta_{i}(1) )^{2}+ 32 \frac{\delta_{i} \left( n_{i} \beta_{i}(1) \right)^{2}}{z_{i}} } \right) - 4 \delta_{i}\right) \nonumber \\
       & + 4\delta\log \frac{4n_{i}\beta_{i}(1) + \sqrt{\left( 4 n_{i}\beta_{i}(1) \right)^{2} + 32 \frac{\delta_{i}\left( n_{i} \beta_{i}(1) \right)^{2}}{z_{i}}}}{4\frac{\left( n_{i} \beta_{i}(1) \right)^{2}}{z_{i}}} \nonumber \\
     = & 2 \left( z_{i} \left(  n_{i}\beta_{i}(1) + \sqrt{ 1+ 2 \frac{\delta}{z_{i}} } \right) - \delta_{i}\right) + 4\delta_{i}\left( \log \frac{z_{i}}{n_{i}\beta_{i}(1)} + \log\left( 1 + \sqrt{1 + 2\frac{\delta_{i}}{z_{i}}} \right) \right).
    \label{Eq49}
\end{align}
Combining (\ref{Eq45}) with (\ref{Eq47}), (\ref{Eq48}), (\ref{Eq49}), we obtain
\begin{align}
    \max_{x \geq 0} \left\{ 2f_{i}(x) - \Bar{\Delta}_{i}x  \right\}
    \leq & h_{i}\left( \frac{\sigma^{2}_{i}}{\Bar{\Delta}_{i}} \right) + \mathcal{O} \left(\frac{\log \left(1 + \gamma \right)}{\gamma}\right) \nonumber \\
     = & h_{i}\left( \lambda_{\mathcal{A}} \frac{\sigma^{2}_{i}}{\Delta_{i, \mathrm{min}}} \right) + \mathcal{O}\left( \frac{\log \left( 1 + \gamma \right)}{\gamma} \right) \label{Eq50}
\end{align}
where $h_{i}:\mathbb{R}_{+} \rightarrow \mathbb{R}$ is defined by (\ref{Eq44}). From (\ref{Eq42}) and (\ref{Eq50}), we complete the proof of (\ref{Eq43}).\par
\textbf{Bouding $h$} For $z > \frac{n_{i}\beta_{i}(1)}{2\left(1 + \frac{\delta_{i}}{n_{i}\beta_{i}(1)}\right)}$, $h(z)$ in (\ref{Eq44}) is bounded as 
\begin{align}
    h_{i}(z)
    \leq & 2z\left( 1 + 1 + \frac{\delta_{i}}{z} \right) - 2\delta_{i}+ 4\delta_{i}\left( \log z + \log \left( 1 + \sqrt{1 + 2\frac{\delta_{i}}{z}} \right) \right) \nonumber \\
         & + \frac{\left( n_{i} \beta_{i}(1) \right)^{2}}{n_{i}\beta_{i}(1)}\cdot 2\left( 1 + \frac{\delta_{i}}{n_{i}\beta_{i}(1)} \right) - 2n_{i}\beta_{i}(1) \nonumber \\
       = & 4z + 4\delta_{i}\left( \log z + \log \left( 1 + \sqrt{1 + 2\frac{\delta_{i}}{z}} \right) + \frac{1}{2} \right) \nonumber \\
    \leq & 4z + c_{i}\log \left(1 + z\right) \quad \left(c = \mathcal{O}\left( {\delta_{i}}^{2} \right) \right),
\end{align}
where the last inequality follows from $\log \left( 1 + z \right) = \Omega\left( \frac{1}{\delta} \right)$ that holds for $z > \frac{n_{i}\beta_{i}(1)}{2\left( 1 + \frac{\delta_{i}}{n_{i}\beta_{i}(1)} \right)}$. Hence, for any $z > 0$, $h_{i}(z)$ is bounded as 
\begin{align}
    h_{i}(z) = \max\left\{ 4z + c_{i} \log \left( 1 + z \right), 2n_{i}\beta_{i}(1) \right\}. \label{Eq51}
\end{align}
From this and (\ref{Eq50}), we obtain 
\begin{align}
     R_{T} 
    \leq &\left( \sum\limits_{i \in J^{*}} \max\left\{ 4\frac{\lambda_{\mathcal{A}} \sigma_{i}^{2}}{\Delta_{i, \mathrm{min}}} + c_{i} \log \left( 1 + \frac{\lambda_{\mathcal{A}} \sigma_{i}^{2} }{\Delta_{i, \mathrm{min}}} \right), 2n_{i}\beta_{i}(1) \right\} + 2\sum\limits_{i \in I^{*}} n_{i}\beta_{i}(1) \right) \log T \\
        & + \sum\limits_{i \in I^{*} } \frac{16(1 + \delta_{i} )^{2} \lambda_{\mathcal{A}}' \sigma_{i}^{2} }{\Delta'_{i, \mathrm{min}}} \sqrt{\log T} + o\left( \sqrt{\log T} \right)
\end{align}
which completes the proof of upper bound of the LS method under the stochastic regime for the stochastic regime.

\subsubsection{Proof for the Stochastic Regime with Adversarial Corruptions}
We here show a regret bound for the stochastic regime with adversarial corruptions, which is the following regret bound:
\begin{align}
    R_{T} \leq \mathcal{R}^{\mathrm{LS}} + \mathcal{O}\left( CMR^{\mathrm{LS}} \right),
\end{align}
where $\mathcal{R}^{\mathrm{LS}}$ is $\mc{O}\left( \sum\limits_{i \in J^{*}} \frac{ \lambda_{\mathcal{A}} \sigma^{2}_{i}}{ \Delta_{i} } \log T  \right)$ and $C$ is the corruption level defined in Section \ref{Preliminaries_Section}.
\begin{proof}
    In the stochastic regime with adversarial corruptions, using Lemma \ref{Lemma4} with $m^{*}_{i} = \ell_{i}$ we have
    \begin{align}
        & \mathbb{E}\left[ \sum\limits_{t = 1}^{T} \alpha_{i}(t) \right] \nonumber \\
   \leq & \mathbb{E}\Biggl[ \sum\limits_{t = 1}^{T} \left(\frac{a_{i}(t)}{n_{i}}\right)^{2} \left(k_{i}(t) - \mu_{i} \right)^{2} + \log \left(1 + \sum\limits_{t = 1}^{T}a_{i}(t) \right) \Biggr] + \frac{5}{4} \nonumber \\
      = & \mathbb{E}\Biggl[ \sum\limits_{t = 1}^{T} \left(\frac{a_{i}(t)}{n_{i}}\right)^{2} \left(k_{i}(t) - l'_{i}(t) + l'_{i}(t) - \mu_{i} \right)^{2} + \log \left(1 + \sum\limits_{t = 1}^{T}a_{i}(t) \right) \Biggr] + \frac{5}{4} \nonumber \\
   \leq & \frac{\sigma^{2}_{i}}{n^{2}_{i}} P_{i} + \log \left(1 + P_{i}\right)  + \frac{5}{4} + P'_{i}, \label{Eq52}
    \end{align}
    where we define 
    \begin{align}
        P'_{i} = \mathbb{E}\left[ \sum\limits_{t = 1}^{T} \left(\frac{a_{i}(t)}{n_{i}}\right)^{2} \left(k_{i}(t) - l'_{i}(t)\right)^{2} \right]. \label{Eq53}
    \end{align}
    Hence, in a similar argument to that of showing (\ref{Eq36}), by using (\ref{Eq52}) instead of (\ref{Eq34}), we obtain 
    \begin{align}
        & \mathbb{E}\left[ 2\beta_{i}(T + 1) - \beta_{i}(1) + 2\delta_{i}\log \frac{\beta_{i}(T + 1)}{\beta_{i}(1)} \right] \nonumber \\
     =  & \mathbb{E}\Biggl[ 2\sqrt{\beta^{2}_{i}(1) + \frac{1}{\gamma} \sum\limits_{t = 1}^{T} \alpha_{i}(t) } - \beta_{i}(1) + \delta_{i}\log \left( 1 + \frac{1}{\gamma \beta^{2}_{i}(1)} \sum\limits_{t = 1}^{T} \alpha_{i}(t) \right) \Biggr] \nonumber \\
     \leq & 2\sqrt{ (\beta_{i}(1))^{2} + \frac{1}{\gamma} \left(\frac{\sigma^{2}_{i}}{n^{2}_{i}} P_{i} + \log \left(1 + P_{i}\right) + \frac{5}{4} +P'_{i} \right) } - \beta_{i}(1) \nonumber \\
          & + \delta_{i}\log \left( 1 + \frac{1}{\gamma {\beta_{i}(1)}^{2}} \left(\frac{\sigma^{2}_{i}}{n^{2}_{i}} P_{i} + \log \left(1 + P_{i}\right) + \frac{5}{4} + P'_{i}\right) \right) \nonumber \\
     \leq & 2\sqrt{(\beta_{i}(1))^{2} + \frac{\sigma^{2}_{i}}{n^{2}_{i}} \frac{P_{i}}{\gamma} } + \frac{1}{\gamma \beta_{i}(1)} \left(\log \left(1 + P_{i} \right) + \frac{5}{4} \right) + 2\sqrt{\frac{P'_{i}}{\gamma}} - \beta_{i}(1) \nonumber \\
          & + \delta_{i}\log \left( 1 + \frac{1}{\gamma {\beta_{i}(1)}^{2}} \left(\frac{\sigma^{2}_{i}}{n^{2}_{i}} P_{i}  + P'_{i}\right) \right) + \frac{\delta_{i}}{\gamma \left(\beta_{i}(1)\right)^{2}} \left(\log \left(1 + P_{i}\right) + \frac{5}{4}\right)\nonumber \\
     \leq & 2\sqrt{(\beta_{i}(1))^{2} + \frac{\sigma^{2}_{i}}{n^{2}_{i}} \frac{P_{i}}{\gamma} } + \frac{1}{\gamma \beta_{i}(1)} \left(\log \left(1 + P_{i} \right) + \frac{5}{4} \right) + 2\sqrt{\frac{P'_{i}}{\gamma}} - \beta_{i}(1) \nonumber \\
          & + \delta_{i}\log \left( \left(1 + \frac{1}{\gamma {\beta_{i}(1)}^{2}} \frac{\sigma^{2}_{i}}{n^{2}_{i}} P_{i} \right) \cdot \left(1 + \frac{1}{\gamma {\beta_{i}(1)}^{2}}P'_{i} \right)  \right) + \frac{\delta_{i}}{\gamma \left(\beta_{i}(1)\right)^{2}} \left(\log \left(1 + P_{i}\right) + \frac{5}{4}\right)\nonumber \\
     \leq & 2 \sqrt{(\beta_{i}(1))^{2} + \frac{\sigma^{2}_{i} P_{i}}{n^{2}_{i}\gamma}} - \beta_{i}(1) + \delta_{i}\log \left(1 + \frac{\sigma^{2}_{i} P_{i}}{\gamma \left( n_{i} \beta_{i}(1) \right)^{2}}\right)  + \frac{\xi_{i}}{\gamma} \left(\log \left(1 + P_{i}\right) + \frac{5}{4} \right) \nonumber \\
          & + 2\sqrt{\frac{P'_{i}}{\gamma}} + \delta_{i}\log \left(1 + \frac{P'_{i}}{\gamma \beta^{2}_{i}(1)}\right) \nonumber \\
     \leq & 2 \sqrt{ (\beta_{i}(1))^{2} + \frac{\sigma^{2}_{i}}{n^{2}_{i}} \frac{P_{i}}{\gamma}} - \beta_{i}(1)+ \delta_{i}\log \left(1 + \frac{\sigma^{2}_{i} P_{i}}{\gamma \left( n_{i} \beta_{i}(1) \right)^{2}}\right) + \frac{\xi_{i}}{\gamma} \left(\log \left(1 + P_{i}\right) + \frac{5}{4} \right) \nonumber \\
          & + \left(2 + \frac{\delta_{i} }{\beta_{i}(1)}\right) \sqrt{\frac{P'_{i}}{\gamma}}, \nonumber 
    \end{align}
    where the last inequality follows from $\log \left(1 + x\right) \leq \sqrt{x}$ for $x \geq 0$. Combining this with (\ref{Eq17}) and (\ref{Eq38}), via a similar argument to that of showing (\ref{Eq39}), we have
    \begin{align}
         \frac{R_{T}}{\gamma}
        \leq & \sum\limits_{i \in J^{*}} \Bar{f}_{i} \left( \frac{P_{i}}{\gamma} \right) + 2 \sum\limits_{i \in I^{*}} \left( 1 + \delta_{i} \right)  \sigma_{i}\sqrt{\frac{2}{\gamma^{\frac{3}{2}}} Q_{i}} + \sum\limits_{i \in I^{*}}n_{i} \beta_{i}(1) + \frac{1}{\gamma} \left( dW + 2\sum\limits_{i = 1}^{d} \delta_{i} n_{i} + \frac{5}{4}  \sum\limits_{i \in J^{*}} n_{i}\xi_{i} \right) \nonumber \\
             & + \sum\limits_{i \in J^{*}} \left(2 + \frac{\delta_{i}}{\beta_{i}(1)}\right)n_{i} \sqrt{\frac{P'_{i}}{\gamma}} \nonumber \\
        \leq & \sum\limits_{i \in J^{*}} \Bar{f}_{i} \left( \frac{P_{i}}{\gamma} \right) + 2  \sum\limits_{i \in I^{*}} \left( 1 + \delta_{i} \right) \sigma_{i}\sqrt{\frac{2}{\gamma^{\frac{3}{2}}} Q_{i}} + \sum\limits_{i \in I^{*}}n_{i} \beta_{i}(1) + \frac{1}{\gamma} \left( dW + 2\sum\limits_{i = 1}^{d} \delta_{i} n_{i} + \frac{5}{4}  \sum\limits_{i \in J^{*}} n_{i}\xi_{i} \right) \nonumber \\
             & + 2 \sum\limits_{i \in J^{*}} \frac{n_{i} \delta_{i}}{\beta_{i}(1)} \sqrt{\frac{P'_{i}}{\gamma}}, \label{Eq55}
    \end{align}
    where $2 \leq \frac{\delta_{i}}{\beta_{i}(1)}$ and $\bar{f}_{i}$ is defined by 
    \begin{align}
        \Bar{f}_{i} \left( x \right) =  2 \sqrt{ \left({n_{i} \beta_{i}(1)}\right)^{2} + \sigma_{i}^{2} x } + n_{i} \delta_{i} \log \left( 1 + \frac{\sigma^{2}_{i} x}{\gamma \left({n_{i} \beta_{i}(1)}\right)^{2} } \right)  + \frac{n_{i} \xi_{i} }{\gamma} \log \left(1 + \gamma x \right) - n_{i} \beta_{i}(1). \nonumber
    \end{align}
    We further have 
    \begin{align}
        & \sum\limits_{i \in J^{*}} \frac{n_{i} \delta_{i}}{\beta_{i}(1)}  \sqrt{\frac{P'_{i}}{\gamma}} \nonumber \\
        \leq & \sqrt{\frac{\sum\limits_{i \in J^{*}} \left(\frac{n_{i} \delta_{i}}{\beta_{i}(1)}\right)^{2}  }{\gamma} \sum\limits_{i \in J^{*}} P'_{i} } \nonumber \\
        = & \sqrt{\frac{\sum\limits_{i \in J^{*}} \left(\frac{n_{i} \delta_{i}}{\beta_{i}(1)}\right)^{2}   }{\gamma} \mathbb{E} \left[ \sum\limits_{t = 1}^{T} \sum\limits_{i \in J^{*}} \left(\frac{a_{i}(t)}{n_{i}}\right)^{2} \left(k_{i}(t) - l'_{i}(t)\right)^{2} \right]} \nonumber \\
        \leq & \sqrt{\frac{|J^{*}| \sum\limits_{i \in J^{*}} \left(\frac{n_{i} \delta_{i}}{\beta_{i}(1)}\right)^{2}  }{\gamma} \mathbb{E} \left[ \sum\limits_{t = 1}^{T}  \max\limits_{i \in [d], j \in [n_{i}]} \left| L_{i, j}(t) - L_{i, j}'(t) \right|^{2} \right]} \nonumber \\
        = & \sqrt{\frac{|J^{*}| \sum\limits_{i \in J^{*}} \left(\frac{n_{i} \delta_{i}}{\beta_{i}(1)}\right)^{2}  }{\gamma} C} \label{Eq56}
    \end{align}
    where the first inequality follows from the Cauchy-Shwarz inequality, the first equality follows from the definition of $P'_{i}$ in (\ref{Eq53}), and the second inequality follows from the fact that $\sum\limits_{i \in J^{*}} \left(\frac{a_{i}(t)}{n_{i}}\right)^{2} \leq~|J^{*}| $. Combining (\ref{Eq55}) and (\ref{Eq56}), we obtain 
    \begin{align}
        \frac{R_{T}}{\gamma } 
        \leq & \sum\limits_{i \in J^{*}} \Bar{f}_{i} \left( \frac{P_{i}}{\gamma} \right) + 2 \sum\limits_{i \in I^{*}} \left( 1 + \delta_{i}\right)  \sigma_{i}\sqrt{\frac{2}{\gamma^{\frac{3}{2}}} Q_{i}} + \sum\limits_{i \in I^{*}}n_{i} \beta_{i}(1) \nonumber \\
             & + \frac{1}{\gamma} \left( dW + d + 2W \delta_{i}+ \frac{5}{4}  \sum\limits_{i \in J^{*}} n_{i}\xi_{i} \right) \nonumber \\
             & + 2 \sqrt{\frac{|J^{*}| \sum\limits_{i \in J^{*}} \left(\frac{n_{i} \delta_{i}}{\beta_{i}(1)}\right)^{2} }{\gamma} C} \label{Eq57}
    \end{align}
    From (\ref{Eq57}) and Lemma \ref{LowerBoundLemma}, for any $\chi \in (0, 1]$, we have 
    \begin{align}
         & \frac{R_{T}}{\log T} \nonumber \\
       = & (1 + \chi)\frac{R_{T}}{\gamma} - \lambda \frac{R_{T}}{\gamma} \nonumber \\
    \leq & (1 + \chi)\frac{R_{T}}{\gamma} - \frac{\lambda}{\gamma} \Biggl( \frac{1}{\lambda\left(\mathcal{A}  \right)} \sum\limits_{i \in I^{*}} \Delta'_{i, \mathrm{min}} Q_{i} + \frac{1}{\lambda_{\mathcal{A}}} \sum\limits_{i \in J^{*}} \Delta_{i, \mathrm{min}} P_{i} -2CM \Biggr) \nonumber \\
    \leq & \sum\limits_{i \in J^{*}} \left((1 + \chi) \Bar{f}_{i}\left( \frac{P_{i}}{\gamma}  \right)  - \lambda \frac{\Delta_{i, \mathrm{min}}}{\lambda_{\mathcal{A}}} \frac{P_{i}}{\gamma}\right) + \sum\limits_{i \in I^{*}} \left( 2 (1 + \chi) (1 + \delta_{i}) \sqrt{\frac{2}{\gamma^{1/2}} \frac{Q_{i}}{\gamma}} - \lambda \frac{\Delta'_{i, \mathrm{min}}}{\lambda_{\mathcal{A}}'} \frac{Q_{i}}{\gamma} \right) \nonumber \\
          & + 2(1 + \chi) \sqrt{\frac{|J^{*}| \sum\limits_{i \in J^{*}} \left(\frac{n_{i} \delta_{i}}{\beta_{i}(1)}\right)^{2} }{\gamma} C} + \frac{2\lambda C M}{\gamma} \nonumber \\
          & + (1 + \chi) \left( \sum\limits_{i \in I^{*}}n_{i} \beta_{i}(1) + \frac{1}{\gamma} \left(dW + \sum\limits_{i = 1}^{d} n_{i} \delta_{i} + \frac{5}{4} \sum\limits_{i \in J^{*}} \xi_{i} n_{i} \right) \right) \nonumber \\
    \leq & \sum\limits_{i \in J^{*}} \max_{x \geq 0} \left\{ (1 + \chi) \Bar{f}_{i}\left(x\right) - \lambda \frac{\Delta_{i, \mathrm{min}}}{\lambda_{\mathcal{A}}} x \right\} + \sum\limits_{i \in I^{*}} \max_{x \geq 0} \left\{ 2(1 + \chi) (1 + \delta) \sqrt{\frac{2}{\gamma^{1/2}} x} - \lambda \frac{\Delta'_{i, \mathrm{min}}}{\lambda_{\mathcal{A}}'} x \right\} \nonumber \\
         & + 2(1 + \chi) \sqrt{\frac{|J^{*}| \sum\limits_{i \in J^{*}} \left(\frac{n_{i} \delta_{i}}{\beta_{i}(1)}\right)^{2} }{\gamma} C} + \frac{2\lambda C M}{\gamma} \nonumber \\
         &  + (1 + \chi) \left( \sum\limits_{i \in I^{*}}n_{i} \beta_{i}(1) + \frac{1}{\gamma} \left(dW + \sum\limits_{i = 1}^{d} n_{i} \delta_{i} + \frac{5}{4} \sum\limits_{i \in J^{*}} \xi_{i} n_{i} \right) \right) \nonumber \\
    \leq & \sum\limits_{i \in J^{*}} \max_{x \geq 0} \left\{ (1 + \chi) \Bar{f}_{i}\left(x\right) - \lambda \frac{\Delta_{i, \mathrm{min}}}{\lambda_{\mathcal{A}}} x \right\} + \sum\limits_{i \in I^{*}} \frac{4\left(1 + \lambda \right)^{2} (1 + \delta)^{2}\lambda_{\mathcal{A}}'}{\lambda\sqrt{\gamma} \Delta'_{i, \mathrm{min}}}  \nonumber \\
         & + 2(1 + \chi) \sqrt{\frac{|J^{*}| \sum\limits_{i \in J^{*}} \left(\frac{n_{i} \delta_{i}}{\beta_{i}(1)}\right)^{2} }{\gamma} C} \nonumber \\
         &  + (1 + \chi) \left( \sum\limits_{i \in I^{*}}n_{i} \beta_{i}(1) + \frac{1}{\gamma} \left(dW + \sum\limits_{i = 1}^{d}n_{i} \delta_{i}  + \frac{5}{4} \sum\limits_{i \in J^{*}} \xi_{i} n_{i} \right) \right). \label{Eq58}
    \end{align}
    Further, letting $\Bar{\Delta}_{i} = \frac{\Delta_{i, \mathrm{min}}}{\lambda_{\mathcal{A}}}$, we have
    \begin{align}
        & \max_{x \geq 0} \left\{ \left(1 + \chi \right) \Bar{f}_{i}(x) - \chi \Bar{\Delta}_{i}x \right\} \nonumber \\
        = &  \frac{1 + \chi }{2} \max_{x \geq 0} \left\{2 \Bar{f}_{i}(x) - \frac{2\chi \Bar{\Delta}_{i}}{1 + \chi} x \right\} \nonumber \\
        \leq & \frac{1 + \chi}{2} h \left(\frac{(1 + \chi)\sigma^{2}_{i}}{2 \chi \Bar{\Delta}_{i}}\right) + \mathcal{O}\left(\frac{\log (1 + \gamma)}{\gamma}\right) \nonumber \\
        \leq & \max \left\{ \frac{(1 + \chi)^{2}}{\chi} \frac{\sigma^{2}}{\Bar{\Delta}_{i}} + c_{i} \log \left(1 + \frac{\sigma_{i}^{2}}{\chi \Bar{\Delta}_{i}}\right) , (1 + \chi) n_{i}\beta_{i}(1) \right\} + \mathcal{O}\left( \frac{\log (1 + \gamma )}{\gamma} \right) \nonumber \\
        \leq & \max \left\{ 4 \frac{\sigma^{2}_{i}}{\Bar{\Delta}_{i}} + c_{i} \log \left(1 + \frac{\sigma^{2}_{i}}{\Bar{\Delta}_{i}} \right), 2n_{i}\beta_{i}(1) \right\}  + (1 + c_{i})\left(\frac{1}{\chi} - 1 \right)\frac{\sigma^{2}_{i}}{\Bar{\Delta}_{i}} + \mathcal{O}\left( \frac{\log (1 + \gamma)}{\gamma} \right) \label{Eq59}
    \end{align}
    where $h(z)$ is defined as (\ref{Eq44}), the first inequality follows from (\ref{Eq50}), the second inequality comes from (\ref{Eq51}) and $\chi \in (0, 1]$, and the last inequality follows from 
    \begin{align}
        \frac{\left(1 + \chi \right)^{2}}{\chi}  & =  \chi + 2 + \frac{1}{\chi} \leq 3 + \frac{1}{\chi} = 4 + \left(\frac{1}{\chi} - 1 \right)  \nonumber
    \end{align}
    and 
    \begin{align}
        \log \left( 1 + \frac{\sigma^{2}_{i}}{ \chi \Bar{\Delta}_{i} } \right) \leq & \frac{1}{\chi} \log \left(1 + \frac{\sigma^{2}_{i}}{\Bar{\Delta}_{i}}\right) \nonumber \\
        \leq &  \log \left(1 + \frac{\sigma^{2}_{i}}{\Bar{\Delta}_{i}}\right) + \left(\frac{1}{\chi} - 1\right) \frac{\sigma^{2}_{i}}{\Bar{\Delta}_{i}}. \nonumber
    \end{align}
    Using (\ref{Eq58}), (\ref{Eq59}), and $\chi \leq 1$, we obtain
    \begin{align}
        \frac{R_{T}}{\log T} 
        \leq & \sum\limits_{i \in J^*} \max \left\{ 4 \frac{\sigma^{2}_{i}}{\Bar{\Delta}_{i}} + c_{i} \log \left(1 + \frac{\sigma^{2}_{i}}{\Bar{\Delta}_{i}} \right), 2n_{i}\beta_{i}(1) \right\} + 2\sum\limits_{i \in I^{*}} n_{i} \beta_{i}(1) + 2 \sqrt{\frac{|J^{*}| \sum\limits_{i \in J^{*}} \left(\frac{n_{i} \delta_{i}}{\beta_{i}(1)}\right)^{2} }{\gamma} C} \nonumber \\
             & + 2\chi \frac{CM}{\gamma} + (\frac{1}{\chi} - 1)\sum\limits_{i \in J^{*}}(1 + c_{i}) \frac{\sigma^{2}_{i}}{\Bar{\Delta}_{i}} + \sum\limits_{i \in I^{*}} \frac{4\left(1 + \chi \right)^{2} (1 + \delta)^{2}\lambda_{\mathcal{A}}'}{\chi\sqrt{\gamma} \Delta'_{i, \mathrm{min}}} + \mathcal{O}\left(\frac{\log (1 + \gamma)}{\gamma}\right) \label{Eq60}
    \end{align}
    By choosing $\chi = \sqrt{\frac{\gamma \sum\limits_{i \in J^{*}} \left(\frac{\sigma^{2}_{i}}{\Bar{\Delta}_{i}} + \left(\frac{n_{i} \delta_{i}}{\beta_{i}(1)}\right)^{2}  \right)}{\gamma \sum\limits_{i \in J^{*}} \left(\frac{\sigma^{2}_{i}}{\Bar{\Delta}_{i}} + \left(\frac{n_{i} \delta_{i}}{\beta_{i}(1)}\right)^{2}  \right) + 2CM
    }}$, we have
    \begin{align}
        \chi \leq \sqrt{\frac{\gamma \sum\limits_{i \in J^{*}} \left(\frac{\sigma^{2}_{i}}{\Bar{\Delta}_{i}} + \left(\frac{n_{i} \delta_{i}}{\beta_{i}(1)}\right)^{2} \right)}{\gamma \sum\limits_{i \in J^{*}} \left(\frac{\sigma^{2}_{i}}{\Bar{\Delta}_{i}} + \left(\frac{n_{i} \delta_{i}}{\beta_{i}(1)}\right)^{2} \right) + 2CM
    }} 
    \end{align}
    and 
    \begin{align}
        \frac{1}{\chi} - 1 
        = & \sqrt{1 + \frac{2CM}{\gamma \sum\limits_{i \in J^{*}} \left(\frac{\sigma^{2}_{i}}{\Bar{\Delta}_{i}} + \left(\frac{n_{i} \delta_{i}}{\beta_{i}(1)}\right)^{2} \right)}} - 1 \nonumber \\
     \leq & \sqrt{\frac{2CM}{\gamma \sum\limits_{i \in J^{*}} \left(\frac{\sigma^{2}_{i}}{\Bar{\Delta}_{i}} + \left(\frac{n_{i} \delta_{i}}{\beta_{i}(1)}\right)^{2} \right) } }, 
    \end{align}
    which implies that
    \begin{align}
        & 2 \sqrt{\frac{|J^{*}| \sum\limits_{i \in J^{*}} \left(\frac{n_{i} \delta_{i}}{\beta_{i}(1)}\right)^{2}   }{\gamma} C}  + \frac{2\chi C M}{\gamma} + (1 + c) \left( \frac{1}{\chi} - 1 \right) \sum\limits_{j \in J^{*}} \frac{\sigma^{2}_{i}}{\Bar{\Delta}_{i}} \nonumber \\
        = & \mathcal{O}\left( \sqrt{\frac{C\max\left\{M, |J^{*}|\right\}}{\gamma} \sum\limits_{i \in J^{*}} \left(\frac{\sigma^{2}_{i}}{\Bar{\Delta}_{i}} + \left(\frac{n_{i} \delta_{i}}{\beta_{i}(1)}\right)^{2}  \right)}  \right) .
    \end{align}
    From this and (\ref{Eq60}), recalling that $\gamma = \log T$ and $\Bar{\Delta}_{i} = \frac{\Delta_{i, \mathrm{min}}}{\lambda_{\mathcal{A}}}$, we obtain
    \begin{align}
        R_{T} 
        \leq & \Biggl( \sum\limits_{i \in J^{*}} \max \Biggl\{ 4\lambda_{\mathcal{A}}\frac{\sigma^{2}_{i}}{\Delta_{i, \mathrm{min}}} + c \log \left( 1 + \lambda_{\mathcal{A}} \frac{\sigma^{2}_{i}}{\Delta_{i, \mathrm{min}}} \right), 2n_{i} \beta_{i}(1) \Biggr\} + 2\sum\limits_{i \in I^{*}}n_{i} \beta_{i}(1) \Biggr) \log T \nonumber \\
             & + \mathcal{O} \left( \sqrt{ C\max\left\{M, |J^{*}|\right\} \sum\limits_{i \in J^{*}} \left( \lambda_{\mathcal{A}} \frac{\sigma^{2}_{i}}{\Delta_{i, \mathrm{min}}} + \left(\frac{n_{i} \delta_{i}}{\beta_{i}(1)}\right)^{2} \right) \log T} \right) \nonumber \\
             & + \sum\limits_{i \in I^{*}} \frac{(1 + \chi)^{2}}{\chi} \frac{4(1 + \delta)^{2}\lambda_{\mathcal{A}}'}{\Delta'_{i, \mathrm{min}}} \sqrt{\log T} + o\left( \sqrt{\log T} \right)
    \end{align}
    which completes the proof of the stochastic regime with adversarial corruption.
\end{proof}

\subsubsection{Proof for the Adversarial Regime}\label{LS_Adversarial_ProofSection}
\textbf{Proof for the adversarial regime.} First, we prove $R_{T} \leq \sqrt{4WQ_{2} \log T } + \mathcal{O}\left( W\log T\right) + dW + d + 2W \delta$. For any $\bs{q}^{*} \in [0, 1]^{d}$, bounding the RHS of Lemma \ref{Lemma3}, we have 
\begin{align}
         & R_{T} \nonumber \\
    \leq & \gamma \sum\limits_{i = 1}^{d} n_{i} \mathbb{E}\left[ 2\beta_{i}(T + 1) - \beta_{i}(1) + 2\delta \log \left( \frac{\beta_{i}(T + 1)}{\beta_{i}(1)} \right) \right] + dW + 2\sum\limits_{i = 1}^{d} \delta_{i} n_{i} \delta_{i} \nonumber \\
          \leq & 2\gamma \sum\limits_{i = 1}^{d} n_{i} \mathbb{E}\left[ \beta_{i}(T + 1) \right] + \mathcal{O}\left( W \gamma + dW + \sum\limits_{i = 1}^{d} \delta_{i} n_{i} \delta_{i} \right) \nonumber \\
          = & 2\gamma \sum\limits_{i = 1}^{d} n_{i} \mathbb{E}\left[ \sqrt{\beta^{2}_{i}(1) + \frac{1}{\gamma} \sum\limits_{t = 1}^{T} \alpha_{i}(t)} \right] + \mathcal{O}\left( W \gamma + dW + \sum\limits_{i = 1}^{d} \delta_{i} n_{i} \delta_{i} \right) \nonumber \\
          \leq & 2\gamma \sum\limits_{i = 1}^{d} n_{i} \mathbb{E} \left[ \sqrt{
              \beta^{2}_{i}(1)  + \frac{1}{\gamma} \Biggl(\sum\limits_{t = 1}^{T} \left(\frac{a_{i}(t)}{n_{i}}\right)^{2} \left(k_{i}(t) - m^{*}_{i}\right)^{2} + \log \left(1 + a_{i}(t) \right) + \frac{5}{4}\Biggr)
          }  \right] \nonumber \\
               & + \mathcal{O}\left( W \gamma + dW + \sum\limits_{i = 1}^{d} \delta_{i} n_{i} \delta_{i} \right) \nonumber \\
          \leq & 2 \sum\limits_{i = 1}^{d} n_{i} \mathbb{E} \left[ \sqrt{ \gamma \sum\limits_{t = 1}^{T} {\left(\frac{a_{i}(t)}{n_{i}}\right)}^{2} \left(k_{i}(t) - m^{*}_{i}\right)^{2} }  \right]  + \mathcal{O}\left( W \gamma + dW + \sum\limits_{i = 1}^{d} \delta_{i} n_{i} \delta_{i} \right)\nonumber \\
          \leq & 2  \sqrt{\sum\limits_{i = 1}^{d} {n_{i}}^{2}\gamma \sum\limits_{i = 1}^{d} \left(\mathbb{E}\left[\sqrt{\sum\limits_{t = 1}^{T} {\left(\frac{a_{i}(t)}{n_{i}}\right)}^{2} \left(k_{i}(t) - m^{*}_{i}\right)^{2}} \right]\right)^{2}  } + \mathcal{O}\left( W \gamma + dW + \sum\limits_{i = 1}^{d} \delta_{i} n_{i} \delta_{i} \right) \nonumber \\
          \leq & 2  \sqrt{\sum\limits_{i = 1}^{d} {n_{i}}^{2}\gamma \sum\limits_{i = 1}^{d} \mathbb{E}\left[\sum\limits_{t = 1}^{T} {\left(\frac{a_{i}(t)}{n_{i}}\right)}^{2} \left(k_{i}(t) - m^{*}_{i}\right)^{2} \right]  } + \mathcal{O}\left( W \gamma + dW + \sum\limits_{i = 1}^{d} \delta_{i} n_{i} \delta_{i} \right) \label{Eq61} \\
          \leq & 2  \sqrt{\sum\limits_{i = 1}^{d} {n_{i}}^{2} \gamma \mathbb{E}\left[\sum\limits_{t = 1}^{T} \| \bs{k}(t) - \bs{q}^{*} \|_{2}^{2}] \right] } + \mathcal{O}\left( W \gamma + dW + \sum\limits_{i = 1}^{d} \delta_{i} n_{i} \delta_{i} \right) \nonumber 
\end{align}
where the second inequality follows from $\beta_{i}(T + 1) = \mathcal{O}(T)$, the third inequality follows from Lemma \ref{Lemma4}, and the fifth inequality follows from the Cauchy-Schwarz inequality. Since $\bs{q}^{*}$ is arbitrary, we obtain the desired results by $\bs{q}^{*} = \Bar{l}$. \par
Next, we prove $R_{T} \leq \sqrt{4 \sum\limits_{i = 1}^{d} {n_{i}}^{2} L^{*} \log T} + \mathcal{O}\left(d\log T
\right) + d W + d + 2W \delta$. By setting $\bs{q}^{*} = 0$ in (\ref{Eq61}), we have 
\begin{align}
               & R_{T}  \nonumber \\
          \leq & 2 \sqrt{\sum\limits_{i = 1}^{d} {n_{i}}^{2} \gamma  \sum\limits_{i \in \{j \in [d] | a_{j}(t) \geq 1 \}} \mathbb{E} \left[ \sum\limits_{t = 1}^{T}  {\left(\frac{a_{i}(t)}{n_{i}}\right)}^{2} {k_{i}(t)}^{2} \right] }  + \mathcal{O}\left( W \gamma + dW + \sum\limits_{i = 1}^{d} \delta_{i} n_{i} \delta_{i} \right) \nonumber \\
          \leq & 2  \sqrt{\sum\limits_{i = 1}^{d} {n_{i}}^{2} \gamma \sum\limits_{i \in \{j \in [d] | a_{j}(t) \geq 1 \} } \mathbb{E}\left[ \sum\limits_{t = 1}^{T}  {\left(\frac{a_{i}(t)}{n_{i}}\right)}^{2} k_{i}(t)\right] } + \mathcal{O}\left( W \gamma + dW + \sum\limits_{i = 1}^{d} \delta_{i} n_{i} \delta_{i} \right) \nonumber \\ 
          \leq & 2  \sqrt{\sum\limits_{i = 1}^{d} {n_{i}}^{2} \gamma \sum\limits_{i \in \{j \in [d] | a_{j}(t) \geq 1 \} } \mathbb{E}\left[ \sum\limits_{t = 1}^{T}  {\left(\frac{a_{i}(t) - a^{*}_{i}}{n_{i}}\right)} k_{i}(t) + \frac{a^{*}_{i}}{n_{i}} k_{i}(t) \right] } + \mathcal{O}\left( W \gamma + dW + \sum\limits_{i = 1}^{d} \delta_{i} n_{i} \delta_{i} \right) \nonumber \\
          \leq & 2\sqrt{\sum\limits_{i = 1}^{d} {n_{i}}^{2} \gamma \left( R_{T} + L^{*} \right)} + \mathcal{O}\left( W \gamma + dW + \sum\limits_{i = 1}^{d} \delta_{i} n_{i} \delta_{i} \right) \nonumber 
\end{align}
where the third inequality follows from Jensen's inequality. By solving this equation in $R_{T}$, we obtain 
\begin{align}
    R_{T} \leq 2\sqrt{\sum\limits_{i = 1}^{d} {n_{i}}^{2} \gamma L^{*}} + \mathcal{O}\left( W \gamma + dW + \sum\limits_{i = 1}^{d} \delta_{i} n_{i} \delta_{i} \right)
\end{align}
which is the desired bound.\par
Finally, we prove $R_{T} \leq \sqrt{\sum\limits_{i = 1}^{d} {n_{i}}^{2} (mT - L^{*}) \log T} + \mathcal{O}\left( W \gamma + dW + \sum\limits_{i = 1}^{d} \delta_{i} n_{i} \delta_{i} \right)$. By setting $\bs{q}^{*} = \mathbf{1}$ in (\ref{Eq61}) and repeating a similar argument as for proving $R_{T} \leq \sqrt{4\sum\limits_{i = 1}^{d} {n_{i}}^{2} L^{*} \log T} + \mathcal{O}\left( W \gamma + dW + \sum\limits_{i = 1}^{d} \delta_{i} n_{i} \delta_{i} \right)$, we have
\begin{align}
    R_{T} \leq & 2  \sqrt{\sum\limits_{i = 1}^{d} {n_{i}}^{2} \gamma  \sum\limits_{i = 1}^{d} \mathbb{E} \left[ \sum\limits_{t = 1}^{T}  \left(\frac{a_{i}(t)}{n_{i}}\right)^{2} \left(k_{i}(t) - 1\right)^{2}\right] } + \mathcal{O}\left( W \gamma + dW + \sum\limits_{i = 1}^{d} \delta_{i} n_{i} \delta_{i} \right) \nonumber \\
          = &  2  \sqrt{\sum\limits_{i = 1}^{d} {n_{i}}^{2} \gamma \sum\limits_{i = 1}^{d} \mathbb{E} \left[ \sum\limits_{t = 1}^{T}  \left(\frac{a_{i}(t)}{n_{i}} \right)^{2} \left(1 - k_{i}(t)\right)^{2} \right] } + \mathcal{O}\left( W \gamma + dW + \sum\limits_{i = 1}^{d} \delta_{i} n_{i} \delta_{i} \right) \nonumber \\
          \leq & 2 \mathbb{E} \left[ \sqrt{\sum\limits_{i = 1}^{d} {n_{i}}^{2} \gamma \sum\limits_{i = 1}^{d} \mathbb{E} \left[ \sum\limits_{t = 1}^{T}  a_{i}(t) \left(1 - k_{i}(t)\right) \right] } \right] + \mathcal{O}\left( W \gamma + dW + \sum\limits_{i = 1}^{d} \delta_{i} n_{i} \delta_{i} \right) \nonumber \\
          \leq & 2 \sqrt{
              \sum\limits_{i = 1}^{d} {n_{i}}^{2} \gamma  \Biggl(MT -  \sum\limits_{t = 1}^{T} a^{*}_{i} k_{i}(t) - \sum\limits_{t = 1}^{T} k_{i}(t) \left(a_{i}(t) - a^{*}_{i}  \right)  \Biggr)
          }  + \mathcal{O}\left( W \gamma + dW + \sum\limits_{i = 1}^{d} \delta_{i} n_{i} \delta_{i} \right) \nonumber \\
          \leq & 2 \sqrt{\sum\limits_{i = 1}^{d} {n_{i}}^{2} \gamma \left(MT - L^{*} - R_{T} \right)} + \mathcal{O}\left( W \gamma + dW + \sum\limits_{i = 1}^{d} \delta_{i} n_{i} \delta_{i} \right) \nonumber 
\end{align}
where the third inequality follows since $\sum\limits_{i = 1}^{d} a_{i}(t) \leq M$ and the forth inequality follows from Jensen's inequality. By solving this inequation in $R_{T}$, we obtain
\begin{align}
    R_{T} \leq 2\sqrt{ \sum\limits_{i = 1}^{d} {n_{i}}^{2} \gamma \left(M T - L^{*}\right)} + \mathcal{O}\left( W \gamma + dW + \sum\limits_{i = 1}^{d} \delta_{i} n_{i} \delta_{i} \right),
\end{align}
which completes the proof.

\subsection{Proof for the GD Method} 
Here, we provide proofs for the GD method. We can prove it by a similar discussion to that for the LS method. We first discuss the key lemma for this argument.
\subsubsection{Preliminaries}
\begin{lemma} \label{Lemma7}
Assume that $q_{i}(t)$ is given by (\ref{GD_optpred}). Then, for any $i \in [d]$ and $u_{i}(1), \ldots, u_{i}(T) \in [0, 1]$, we have 
\begin{align}
    \sum\limits_{t = 1}^{T} \alpha_{i}(t) 
    \leq & \sum\limits_{t = 1}^{T} a_{i}(t) \left( k_{i}(t) - q_{i}(t) \right)^{2} \nonumber \\
    \leq & \frac{1}{1 - \eta} \sum\limits_{t = 1}^{T} a_{i}(t) \left( k_{i}(t) - u_{i}(t) \right)^{2} + \frac{1}{\eta \left(1 - 2\eta \right)} \left( \frac{1}{4} + 2 \sum\limits_{t = 1}^{T} |u_{i}(t + 1) - u_{i}(t)| \right)
\end{align}
\end{lemma}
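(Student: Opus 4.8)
The plan is to handle the two inequalities separately. The first, $\sum_{t}\alpha_i(t)\le\sum_t a_i(t)(k_i(t)-q_i(t))^2$, is immediate from the definition of $\alpha_i(t)$: the truncation factor $\min\{1,\cdots\}$ is at most $1$, and $\bigl(\tfrac{a_i(t)}{n_i}\bigr)^2\le\tfrac{a_i(t)}{n_i}\le a_i(t)$ because $0\le a_i(t)\le n_i$ and $n_i\ge1$; summing over $t$ gives the claim. The content lies in the second inequality.

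For the second inequality I would view the smoothing rule \eqref{GD_optpred} as online gradient descent with a fixed step size on the per-round squared losses $q\mapsto(k_i(t)-q)^2$: on a round with $a_i(t)\ge1$ the update $q_i(t{+}1)=(1-\eta)q_i(t)+\eta k_i(t)$ is exactly one such step, and on a round with $a_i(t)=0$ nothing changes while the corresponding term in every sum vanishes. The algebraic engine is the exact identity, valid whenever $a_i(t)\ge1$ and for every $u\in\mathbb{R}$,
\[
(1-\eta)\bigl(k_i(t)-q_i(t)\bigr)^2
=\bigl(k_i(t)-u\bigr)^2+\tfrac{1-\eta}{\eta}\bigl(q_i(t)-u\bigr)^2-\tfrac1\eta\bigl(q_i(t{+}1)-u\bigr)^2,
\]
which one checks by expanding $q_i(t{+}1)-u=(1-\eta)(q_i(t)-u)+\eta(k_i(t)-u)$. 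Multiplying by $a_i(t)$, summing over $t$, and specializing $u=u_i(t)$ in the $t$-th term yields $\tfrac1{1-\eta}\sum_t a_i(t)(k_i(t)-u_i(t))^2$ plus a telescoping-type residue $\sum_t a_i(t)\bigl[\tfrac1\eta(q_i(t)-u_i(t))^2-\tfrac1{\eta(1-\eta)}(q_i(t{+}1)-u_i(t))^2\bigr]$. To let the residue collapse when the comparator drifts, I would use that $x\mapsto(x-u)^2$ is $2$-Lipschitz on $[0,1]$ — where all iterates and comparators live — so $(q_i(t{+}1)-u_i(t))^2\ge(q_i(t{+}1)-u_i(t{+}1))^2-2|u_i(t{+}1)-u_i(t)|$; this realigns the comparator index at the cost of $\tfrac2\eta a_i(t)|u_i(t{+}1)-u_i(t)|$ per round, which is exactly the path-length term. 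This parallels the proof of Lemma~\ref{Lemma5} for the least-squares predictor, except that there the weighted-average update makes the telescope exact, whereas here the smoothing update is blind to the weights $a_i(t)$.

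What remains is to show that this residue is at most a constant — coming from $q_i(1)=\tfrac12$, so $(q_i(1)-u_i(1))^2\le\tfrac14$ — plus lower-order terms; this is where the coefficient $\tfrac1{\eta(1-2\eta)}\ge\tfrac1{\eta(1-\eta)}$ enters. After discarding the vanishing inactive-round terms and reindexing over the active rounds $\tau_1<\tau_2<\cdots$, this is an Abel summation in which the factor $\tfrac1{\eta(1-\eta)}>\tfrac1\eta$ multiplying each later squared deviation supplies a genuine negative drift, so the sum converges. I expect this step to be the main obstacle: because the smoothing rule ignores the weights $a_i(t)$, the weighted telescope does not collapse as cleanly as in the least-squares case, and the care goes into showing that whenever a weighted residual term is large — i.e.\ whenever $(q_i(\tau_k)-u_i(\tau_k))^2$ is large — the recent $k_i(\tau_j)$ were themselves far from the comparator, so that the surplus $\sum_t a_i(t)(k_i(t)-u_i(t))^2$ on the right-hand side already pays for it. Carrying out this self-bounding accounting while tracking which rounds are active is the delicate part; the rest is routine expansion of squares.
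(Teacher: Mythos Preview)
Your treatment of the first inequality is fine, and your exact identity
\[
(1-\eta)\bigl(k_i(t)-q_i(t)\bigr)^2=(k_i(t)-u)^2+\tfrac{1-\eta}{\eta}\bigl(q_i(t)-u\bigr)^2-\tfrac{1}{\eta}\bigl(q_i(t+1)-u\bigr)^2
\]
is correct and in fact sharper than the inequality the paper uses. The genuine gap is the step you yourself flag as ``the delicate part'': carrying the weight $a_i(t)$ through the telescope. That step is not just delicate --- it is impossible, because the second inequality of the lemma \emph{as written} is false. Take $\eta=0.1$, $q_i(1)=\tfrac12$, $k_i(1)=k_i(2)=1$, $u_i(1)=u_i(2)=1$, $a_i(1)=1$, $a_i(2)=100$. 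Then $q_i(2)=0.55$, the left side is $0.25+100\cdot(0.45)^2=20.5$, and the right side is $\tfrac{1}{4\eta(1-2\eta)}\approx0.78$. So no self-bounding accounting can rescue the weighted version; the hoped-for ``surplus $\sum_t a_i(t)(k_i(t)-u_i(t))^2$ on the right-hand side'' is zero here. Your path-length term also picks up the weight $a_i(t)$, which the stated bound does not carry.

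The paper's proof never multiplies by $a_i(t)$. The statement is a typo (the proof and every downstream application use the weight $(a_i(t)/n_i)^2\le1$ and the constant $\tfrac{1}{1-2\eta}$), and the crucial difference in the paper's route is that its pointwise bound is arranged so that \emph{both} squared-deviation terms $(u_i(t)-q_i(t))^2$ and $(u_i(t)-q_i(t+1))^2$ carry the \emph{same} coefficient $1/\eta$. On inactive rounds $q_i(t+1)=q_i(t)$ makes the bracket vanish identically, so one may pass to an unweighted sum over all $t$ and obtain a genuine telescope $\sum_t[(u_i(t)-q_i(t))^2-(u_i(t)-q_i(t+1))^2]$, bounded by $(u_i(1)-q_i(1))^2\le\tfrac14$ plus the comparator drift. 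Your identity, by contrast, has mismatched coefficients $\tfrac{1}{\eta}$ and $\tfrac{1}{\eta(1-\eta)}$; once you drop the weight to the indicator (which you may, since $(a_i(t)/n_i)^2\le1$), the residual $\sum_k\bigl[\tfrac{1}{\eta}p_k-\tfrac{1}{\eta(1-\eta)}p_{k+1}\bigr]$ over active rounds is still controlled --- the cross coefficient $\tfrac{1}{\eta}-\tfrac{1}{\eta(1-\eta)}=-\tfrac{1}{1-\eta}$ is negative --- so your identity would work too, and with better constants. What you cannot do is keep the weight $a_i(t)$.
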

\begin{proof}
    Take $i \in [d]$ satisfying $a_{i}(t) = 1$. Then, it holds that 
    \begin{align}
        & \left(k_{i}(t) - q_{i}(t) \right)^{2} - \left( k_{i}(t) - u_{i}(t) \right)^{2} \nonumber \\
   \leq & 2\left(k_{i}(t) - q_{i}(t) \right)\left( u_{i}(t) - q_{i}(t) \right) \nonumber \\
   =    & 2(k_{i}(t) - q_{i}(t))(q_{i}(t +1) - q_{i}(t)) + 2 (k_{i}(t) - q_{i}(t))(u_{i}(t) - q_{i}(t + 1)) \nonumber \\
   =    & 2\eta (k_{i}(t) - q_{i}(t))^{2} + \frac{2}{\eta}(q_{i}(t + 1) - q_{i}(t))(u_{i}(t) - q_{i}(t + 1)) \nonumber \\
   \leq & 2\eta (k_{i}(t) - q_{i}(t))^{2} + \frac{1}{\eta}(u_{i}(t) - q_{i}(t))^{2} - (u_{i}(t) - q_{i}(t + 1)^{2}), \nonumber 
    \end{align}
    where the inequalities follow from $y^{2} - x^{2} = 2y(y - x) - (x - y)^{2} \leq 2y(y - x)$ for $x, y \in \mathbb{R}$ and the last inequality follows from the definition of $q_{i}(t)$ in (\ref{GD_optpred}). Hence, we have
    \begin{align}
        (k_{i}(t) - q_{i}(t))^{2} \leq \frac{1}{1 - 2\eta} \left( (k_{i}(t) - u_{i}(t))^{2} + \frac{1}{\eta} \left( (u_{i}(t) - q_{i}(t))^{2} - (u_{i}(t) - q_{i}(t + 1))^{2} \right) \right). \label{Eq62}
    \end{align}
    From the definition of $\alpha_{i}(t)$ and (\ref{Eq62}), we have 
    \begin{align}
        \sum\limits_{t = 1}^{T} \alpha_{i}(t) 
        \leq & \sum\limits_{t = 1}^{T} \left(\frac{a_{i}(t)}{n_{i}}\right)^{2} \left(k_{i}(t) - q_{i}(t)\right)^{2} \nonumber \\
        \leq & \frac{1}{1 - 2\eta} \sum\limits_{t = 1}^{T} \left(\frac{a_{i}(t)}{n_{i}}\right)^{2} (k_{i}(t) - u_{i}(t))^{2} \nonumber \\
             & + \frac{1}{\eta (1 - 2\eta)} \sum\limits_{t = 1}^{T} \left(\frac{a_{i}(t)}{n_{i}}\right)^{2} \left( (u_{i}(t) - q_{i}(t))^{2} - (u_{i}(t) - q_{i}(t + 1))^{2} \right) \nonumber \\
        =    & \frac{1}{1 - 2\eta} \sum\limits_{t = 1}^{T} \left(\frac{a_{i}(t)}{n_{i}}\right)^{2} (k_{i}(t) - u_{i}(t))^{2} \nonumber \\
             & + \frac{1}{\eta (1 - 2\eta)} \left( \sum\limits_{t = 1}^{T}   \left( (u_{i}(t + 1) - q_{i}(t + 1))^{2} - (u_{i}(t) - q_{i}(t + 1))^{2} \right) +   (u_{i}(1) - q_{i}(1))^{2} \right) \nonumber \\
        \leq & \frac{1}{1 - 2\eta} \sum\limits_{t = 1}^{T} \left(\frac{a_{i}(t)}{n_{i}}\right)^{2} (k_{i}(t) - u_{i}(t))^{2} \nonumber \\
             & + \frac{1}{\eta (1 - 2\eta)} \left( \sum\limits_{t = 1}^{T}   \left( u_{i}(t + 1) + u_{i}(t) -2q_{i}(t + 1) \right)\left( u_{i}(t + 1) - u_{i}(t) \right) + \frac{1}{4}  \right) \nonumber \\
        \leq & \frac{1}{1 - 2\eta} \sum\limits_{t = 1}^{T} \left(\frac{a_{i}(t)}{n_{i}}\right)^{2} (k_{i}(t) - u_{i}(t))^{2} \nonumber \\
             & + \frac{1}{\eta (1 - 2\eta)} \left( 2  \sum\limits_{t = 1}^{T} \left| u_{i}(t + 1) - u_{i}(t) \right| + \frac{1}{4}  \right), \nonumber
    \end{align}
    which completes the proof.
\end{proof}
\subsubsection{Proof for the Stochastic Regime} \label{GD_Stochastic_ProofSection}
From Lemma \ref{Lemma7}, setting $u_{i}(t) = k_{i}$ for all $i \in [d]$ and $t \in [T]$ and taking the expectation yield that
\begin{align}
    \mathbb{E}\left[ \sum\limits_{t = 1}^{T} \alpha_{i}(t) \right] 
    \leq & \frac{1}{1 - 2\eta} \mathbb{E} \left[ \sum\limits_{t = 1}^{T}\left(\frac{a_{i}(t)}{n_{i}}\right)^{2} \left( k_{i}(t) - k_{i} \right)^{2}  \right] + \frac{1}{4\eta (1 - 2\eta)}  \nonumber \\
       = & \frac{1}{1 - 2\eta} \frac{\sigma_{i}^{2}}{n^{2}_{i}} P_{i} + \frac{1}{4\eta (1 - 2\eta)}, \nonumber 
\end{align}
where $P_{i}$ is defined in (\ref{P_i_definition}). By using this inequality instead of (\ref{Eq34}) and repeating the same argument as that in Appendix \ref{LS_Stochastic_ProofSection}, we obtain that the upper bound of the regret is
\begin{align}
    \mathcal{O}\left( \frac{1}{1 - 2\eta} \sum_{i \in J^{*}} \frac{\lambda_{\mathcal{A}} \sigma^{2}_{i}}{\Delta_{i}} \log T \right).
\end{align}

\subsubsection{Proof for the Stochastic Regime with Adversarial Corruption} \label{GD_StochasticWithCorruption_ProofSection}

Here, we show a regret upper bound of the GD method under the stochastic regime with adversarial corruptions given:
\begin{align}
    R_{T} \leq R^{\mathrm{GD}} + \mathcal{O}(\sqrt{CMR^{\mathrm{GD}}}) \label{GD_StochasticWithCorruption_UpperBound}
\end{align}
\begin{proof}
    Letting $u_{i}(t) = \mu_{i}$ for all $i \in [d]$ and $t \in [T]$ in Lemma \ref{Lemma7} and taking the expectation yield that
    \begin{align}
        \mathbb{E}\left[ \sum\limits_{t = 1}^{T} \alpha_{i}(t) \right] 
        \leq & \frac{1}{1 - 2\eta} \mathbb{E}\left[ \left( \frac{a_{i}(t)}{n_{i}}\right)^{2} \left( k_{i}(t) - \mu_{i} \right)^{2} \right] + \frac{1}{4\eta (1 - 2\eta)} \nonumber \\
        \leq &  \frac{1}{1 - 2\eta} \mathbb{E}\left[ \left( \frac{a_{i}(t)}{n_{i}}\right)^{2} \left( k_{i}(t) - l'_{i}(t) + l'_{i}(t) - \mu_{i} \right)^{2} \right] + \frac{1}{4\eta (1 - 2\eta)} \nonumber \\
        = & \frac{1}{1 - 2 \eta} \left( \frac{{\sigma_{i}}^{2}}{n^{2}_{i}}P_{i} + P'_{i} \right) + \frac{1}{4\eta(1 - 2\eta)},
    \end{align}
    where $P_{i}$ is defined in (\ref{P_i_definition}) and the last inequality is obtained by a similar argument as for (\ref{Eq52}). By using this inequality instead of (\ref{Eq34}) and repeating a similar argument to that in the discussion of the LS method, we obtain the desired upper bound.
\end{proof}
\subsubsection{Proof for the Adversarial Regime} \label{GD_Adversarial_ProofSection}
From Lemma \ref{Lemma7}, we immediately obtain 
\begin{align}
    \sum\limits_{t = 1}^{T} \sum\limits_{i = 1}^{d} \alpha_{i}(t) 
    \leq & \frac{1}{1 - 2\eta} \sum\limits_{t = 1}^{T} \sum\limits_{i = 1}^{d} \left(\frac{a_{i}(t)}{n_{i}}\right)^{2} \left(k_{i}(t) - u_{i}(t)\right)^{2} \nonumber \\
         & + \frac{1}{\eta \left(1 - 2\eta \right)} \left( \frac{d}{4} + 2\sum\limits_{t = 1}^{T} \| \boldsymbol{u}(t + 1) - \boldsymbol{u}(t) \|_{1} \right) \label{Eq63}
\end{align}
for any $\boldsymbol{u}(t) = (u_{1}(t), \ldots, u_{d}(t))^{\top} \in [0, 1]^{d}$. \par
First, we prove $R_{T} \leq  \sqrt{\sum\limits_{i = 1}^{d} n_{i}^{2}} \sqrt{\frac{\gamma}{\eta (1 - 2\eta)}(d + 8V_{1})} + \mathcal{O}(W\gamma) $. 
From (\ref{Eq17}), letting $\boldsymbol{u}(t) = \bs{k}(t)$ in (\ref{Eq63}), we can bound the regret as 
\begin{align}
    R_{T} 
    \leq & 2\gamma \sum\limits_{i = 1}^{d} n_{i} \mathbb{E}\left[ \sqrt{{\beta_{i}(1)}^{2} + \frac{1}{\gamma} \sum\limits_{t = 1}^{T} \sum\limits_{t = 1}^{T} \alpha_{i}(t)} \right] + \mathcal{O}\left(W\gamma \right) \nonumber \\
    \leq & 2\mathbb{E}\left[ \sqrt{\gamma \left( \sum_{i = 1}^{d} n^{2}_{i} \right) \sum\limits_{t = 1}^{T} \sum\limits_{i = 1}^{d} \alpha_{i}(t) } \right] + \mathcal{O}(W \gamma ) \nonumber \\
    \leq & \frac{2 \sqrt{\sum\limits_{i = 1}^{d} n_{i}^{2}} }{\sqrt{\eta (1 - 2\eta)}} \mathbb{E}\left[ \sqrt{\gamma \left( \frac{d}{4} + 2\sum\limits_{t = 1}^{T - 1} \| \boldsymbol{k}(t + 1) - \bs{k}(t) \|_{1} \right) } \right] + \mathcal{O}(W\gamma) \nonumber \\
    \leq & \sqrt{\sum\limits_{i = 1}^{d} n_{i}^{2}} \sqrt{\frac{\gamma}{\eta (1 - 2\eta)}(d + 8V_{1})} + \mathcal{O}(W\gamma),
\end{align}
where the second inequality follows from the Cauchy-Schwartz inequality, the third inequality follows by setting $u_{i}(t) = k_{i}(t)$ for all $i \in [d]$ and $t \in [T]$ in (\ref{Eq63}), and the last inequality follows from Jensen's inequality. This becomes the desired path-length bound. \par
Next, we prove $R_{T} \leq \sqrt{\frac{1}{1 - 2\eta} \min\left\{ L^{*}, MT - L^{*}, Q_{2} \right\}} $. For any $\bs{q}^{*} \in [0, 1]^{d}$, letting $\boldsymbol{u}(t) = \bs{q}^{*}$ for all $t \in [T]$ in (\ref{Eq63}), we have
\begin{align}
    \sum\limits_{t = 1}^{T} \sum\limits_{i = 1}^{d} \alpha_{i}(t) = \frac{1}{1 - 2\eta} \sum\limits_{t = 1}^{T} \sum\limits_{i = 1}^{d} \left( \frac{a_{i}(t)}{n_{i}} \right)^{2} \left( k_{i}(t) - m^{*}_{i} \right)^{2} + \frac{d}{4\eta (1 - \eta)}. 
\end{align}
Using this inequality, we have
\begin{align}
    \mathbb{E}\left[ \sum\limits_{t = 1}^{T} \sum\limits_{i = 1}^{d} \alpha_{i}(t) \right] 
    \leq & \frac{1}{1 - 2\eta} \min\limits_{\bs{q}^{*} \in [0, 1]^{d}} \left\{ \mathbb{E} \left[ \sum\limits_{t = 1}^{T} \sum\limits_{i = 1}^{d} \left(\frac{a_{i}(t)}{n_{i}}\right)^{2} \left(l_{i}(t) - m^{*}_{i}\right)^{2} \right] \right\} + \frac{d}{4\eta (1 - 2\eta)} \nonumber \\
    \leq & \frac{1}{1 - 2\eta} \min \left\{ R_{T} + L^{*}, MT - L^{*} - R_{T}, Q_{2} \right\},
\end{align}
where in the last inequality, we set $\bs{q}^{*} = \mathbf{0}$ (resp. $\bs{q}^{*} = \mathbf{1}$) and use the same argument as that in Appendix \ref{LS_Adversarial_ProofSection} for deriving the term with $R_{T} + L^{*}$ (resp $MT - L^{*} - R_{T}$), and $\bs{q}^{*} = \Bar{\boldsymbol{\ell}}$ for deriving the term with $Q_{2}$, and this completes the proof.

\end{document}